\documentclass{article}


\usepackage{graphicx} 

\usepackage{amsfonts}       
\usepackage{nicefrac}       
\usepackage{microtype}      
\usepackage{caption}
\usepackage{booktabs}

\usepackage{algorithm}
\usepackage{algorithmic}

\usepackage{enumitem}
\usepackage{hyperref}
\usepackage{tcolorbox}


\usepackage{fullpage}
\usepackage{authblk}
\usepackage[round]{natbib} 

\usepackage{Definitions}
\usepackage{color}

\renewcommand{\leq}{\leqslant}
\renewcommand{\le}{\leqslant}

\renewcommand{\ge}{\geqslant}

\title{Kernel Exponential Family Estimation via Doubly Dual Embedding}

%

\author{
  $^*$Bo Dai$^1$, \thanks{indicates equal contribution. Email: \texttt{bodai@google.com, hanjundai@gatech.edu}}$^*$Hanjun Dai$^2$, Arthur Gretton$^3$, Le Song$^2$, Dale Schuurmans$^1$, Niao He$^4$\\
  $^1$ Google Brain, $^2$Georgia Institute of Technology\\
  $^3$University College London, $^4$ University of Illinois at Urbana Champaign\\
}

\begin{document}

\maketitle

\begin{abstract}
  We investigate penalized maximum log-likelihood estimation for exponential family distributions whose natural parameter resides in a reproducing kernel Hilbert space. Key to our approach is a novel technique, \emph{doubly dual embedding}, that avoids computation of the partition function. This technique also allows the development of a flexible sampling strategy that amortizes the cost of Monte-Carlo sampling in the inference stage. The resulting estimator can be easily generalized to kernel conditional exponential families. We establish a connection between kernel exponential family estimation and MMD-GANs, revealing a new perspective for understanding GANs. Compared to the score matching based estimators, the proposed method improves both memory and time efficiency while enjoying stronger statistical properties, such as fully capturing smoothness in its statistical convergence rate while the score matching estimator appears to saturate. Finally, we show that the proposed estimator  empirically outperforms state-of-the-art methods in both kernel exponential family estimation and its conditional extension. 
\end{abstract}

\section{Introduction}\label{sec:intro}



The exponential family is one of the most important classes of distributions in statistics and machine learning. The exponential family possesses a number of useful properties~\citep{Brown86}and includes many commonly used distributions with finite-dimensional natural parameters, such as the Gaussian, Poisson and multinomial distributions, to name a few. It is natural to consider generalizing the richness and flexibility of the exponential family to an \emph{infinite}-dimensional parameterization via reproducing kernel Hilbert spaces~(RKHS)~\citep{CanSmo06}.

Maximum log-likelihood estimation~(MLE) has already been well-studied in the case of finite-dimensional exponential families, where desirable statistical properties such as asymptotic unbiasedness, consistency and asymptotic normality have been established. However, it is difficult to extend MLE to the infinite-dimensional case. Beyond the intractability of evaluating the partition function for a general exponential family, the necessary conditions for maximizing log-likelihood might not be feasible; that is, there may exist no solution to the KKT conditions in the infinite-dimensional case \citep{PisRog99,Fukumizu09}. To address this issue, \citet{BarShe91,GuQiu93,Fukumizu09} considered several ways to regularize the function space by constructing (a series of) finite-dimensional spaces that approximate the original RKHS, yielding a tractable estimator in the restricted finite-dimensional space. However, as~\citet{SriFukGreHyvetal17} note, even with the finite-dimension approximation, these algorithms are still expensive as every update requires Monte-Carlo sampling from a current model to compute the partition function.

An alternative score matching based estimator has recently been introduced by~\citet{SriFukGreHyvetal17}. This approach replaces the Kullback–Leibler~($KL$) with the Fisher divergence, defined by the expected squared distance between the score of the model~(\ie, the derivative of the $\log$-density) and the target distribution score~\citep{Hyvarinen05}. By minimizing the Tikhonov-regularized Fisher divergence, \citet{SriFukGreHyvetal17} develop a computable estimator for the infinite-dimensional exponential family that also obtains a consistency guarantee. Recently, this method has been generalized to \emph{conditional} infinite-dimensional exponential family estimation \citep{ArbGre17}. Although score matching avoids computing the generally intractable integral, it requires computing and saving the first- and second-order derivatives of the reproducing kernel for \emph{each} dimension on \emph{each} sample. For $n$ samples with $d$ features, this results in $\Ocal\rbr{n^2d^2}$ memory and $\Ocal\rbr{n^3d^3}$ time cost respectively, which becomes prohibitive for datasets with moderate sizes of $n$ and $d$. To alleviate this cost, \citet{SutStrArbArt17} utilize the $m$-rank Nyst{\"o}m approximation to the kernel matrix, reducing memory and time complexity to $\Ocal\rbr{nmd^2}$ and $\Ocal\rbr{m^3d^3}$ respectively. Although this reduces the cost dependence on sample size, the dependence on $d$ is unaffected, hence the estimator remains unsuitable for high-dimensional data. Estimating a general exponential family model by either MLE or score matching also generally requires some form of Monte-Carlo sampling, such as MCMC or HMC, to perform inference, which significantly adds to the computation cost.

In summary, both the MLE and score matching based estimators for the infinite-dimensional exponential family incur significant computational overhead, particularly in high-dimensional applications.

In this paper, we revisit penalized MLE for the kernel exponential family and propose a new estimation strategy. Instead of solving the log-likelihood equation directly, as in existing MLE methods, we exploit a \emph{doubly dual embedding} technique that leads to a novel saddle-point reformulation for the MLE (along with its conditional distribution generalization) in~\secref{sec:dual_mle}. We then propose a stochastic algorithm for the new view of penalized MLE in~\secref{sec:alg}. Since the proposed estimator is based on penalized MLE, it does not require the first- and second-order derivatives of the kernel, as in score matching, greatly reducing the memory and time cost. Moreover, since the saddle point reformulation of penalized MLE avoids the intractable integral, the need for a Monte-Carlo step is also bypassed, thus accelerating the learning procedure. This approach also learns a flexibly parameterized sampler simultaneously, therefore, further reducing inference cost. We present the consistency and rate of the new estimation strategy in the well-specified case, \ie, the true density belongs to the kernel exponential family, and an algorithm convergence guarantee in~\secref{sec:theory}. We demonstrate the empirical advantages of the proposed algorithm in~\secref{sec:experiments},  comparing to state-of-the-art estimators for both the kernel exponential family and its conditional extension~\citep{SutStrArbArt17,ArbGre17}.

\section{Preliminaries}\label{sec:preliminary}

We first provide a preliminary introduction to the exponential family and Fenchel duality, which will play vital roles in the derivation of the new estimator.

\subsection{Exponential family}
The natural form of the exponential family over $\Omega$ with the sufficient statistics $f\in \Fcal$ is defined as
\begin{equation}\label{eq:exp_family}
p_f\rbr{x} = p_0(x)\exp\rbr{\lambda f(x) - A\rbr{\lambda f}}, 
\end{equation}
where $A\rbr{\lambda f} \defeq \log\int_\Omega \exp\rbr{\lambda f\rbr{x}}p_0\rbr{x}dx$, $x\in \Omega\subset\RR^d,\, \lambda\in \RR$, and $\Fcal \defeq \cbr{f\in \Hcal: \exp\rbr{A\rbr{\lambda f}}< \infty}$. In this paper, we mainly focus on the case where $\Hcal$ is a reproducing Hilbert kernel space~(RKHS) with kernel $k\rbr{x, x'}$, such that $f\rbr{x} = \langle f, k\rbr{x, \cdot} \rangle$.  However, we emphasize that the proposed algorithm in~\secref{sec:alg} can be easily applied to arbitrary differentiable function parametrizations, such as deep neural networks.

Given samples $\Dcal = \sbr{x_i}_{i=1}^N$, a model with a finite-dimensional parameterization can be learned via maximum log-likelihood estimation~(MLE), 
\begin{eqnarray}\label{eq:mle}
\max_{f\in\Fcal}\frac{1}{N}\sum_{i=1}^N\log p_f(x_i) = \widehat\EE_{\Dcal}\sbr{\lambda f(x) + \log p_0\rbr{x}} - A(\lambda f), \nonumber
\end{eqnarray}
where $\widehat\EE_{\Dcal}\sbr{\cdot}$ denotes the empirical expectation over $\Dcal$. The MLE~\eq{eq:mle} is well-studied and has nice properties. However, the MLE is known to be ``ill-posed'' in the infinite-dimensional case, since the optimal solution might not be exactly achievable in the representable space. Therefore, penalized MLE has been introduced for the finite \citep{DudPhiSch07} and infinite dimensional \citep{GuQiu93,AltSmo06} exponential families respectively. Such regularization essentially relaxes the moment matching constraints  in terms of some norm, as shown later, guaranteeing the existence of a solution. In this paper, we will also focus on MLE with RKHS norm regularization.

One useful theoretical property of MLE for the exponential family is convexity w.r.t.\ $f$. With convex regularization, \eg, $\nbr{f}_{\Hcal}^2$, one can use stochastic gradient descent to recover a unique global optimum. Let $L\rbr{f}$ denote RKHS norm penalized log-likelihood, \ie,
\begin{equation}\label{eq:penalized_mle}
L\rbr{f} \defeq 
\frac{1}{N}\sum_{i=1}^N\log p_f(x_i) - \frac{\eta}{2}\nbr{f}_\Hcal^2.
\end{equation}
where $\eta>0$ denotes the regularization parameter. The gradient of $L\rbr{f}$ w.r.t. $f$ can be computed as
\begin{equation}\label{eq:grad}
\nabla_f L = \widehat\EE_{\Dcal}\sbr{\lambda \nabla_f f(x)} - \nabla_f A(\lambda f) - \eta f.
\end{equation}
To calculate the $\nabla_f A(\lambda f)$ in~\eq{eq:grad}, we denote $Z\rbr{\lambda f} = \int_\Omega \exp\rbr{\lambda f\rbr{x}}p_0\rbr{x}dx$ and expand the partition function by definition, 
\begin{eqnarray}
\nabla_f A(\lambda f) 
&=& \frac{1}{Z(\lambda f)}{\nabla_f \int_\Omega \exp\rbr{\lambda f(x)}p_0\rbr{x}dx} \nonumber \\
&=& \int_\Omega\frac{p_0\rbr{x}\exp\rbr{\lambda f(x)}}{Z(\lambda f)} {\nabla_f \lambda f(x)} dx\nonumber \\
&=& \EE_{p_f(x)}\sbr{\nabla_f \lambda f(x)}.
\end{eqnarray}
One can approximate the $\EE_{p_f(x)}\sbr{\nabla_f \lambda f(x)}$ by AIS~\citep{Neal05} or MCMC samples~\citep{VemGarBol09}, which leads to the Contrastive Divergence~(CD) algorithm~\citep{Hinton02}.  

To avoid costly MCMC sampling in estimating the gradient,  \citet{SriFukGreHyvetal17} construct an estimator based on score matching instead of MLE, which minimizes the penalized Fisher divergence. Plugging the kernel exponential family into the empirical Fisher divergence, the optimization reduces to 
\begin{equation}\label{eq:score_matching}
J\rbr{f} \defeq \frac{\lambda^2}{2}\langle f, \Chat f \rangle_\Hcal + \lambda\langle f,  \hat\delta\rangle_\Hcal + \frac{\eta}{2}\nbr{f}^2_{\Hcal},
\end{equation}
where 
\begin{eqnarray*}
\Chat&\defeq& \frac{1}{n} \sum_{i=1}^n\sum_{j=1}^d \partial_j k\rbr{x_i, \cdot}\otimes \partial_j k\rbr{x_i, \cdot},\\[-1mm]
\hat\delta &\defeq& \frac{1}{n}\sum_{i=1}^n\sum_{j=1}^d \partial_jk\rbr{x_i, \cdot}\rbr{\partial_j\log p_0\rbr{x_i} + \partial_j^2 k\rbr{x_i, \cdot}}. 
\end{eqnarray*}
As we can see, the score matching objective~\eqref{eq:score_matching} is convex and does not involve the intractable integral in $A\rbr{\lambda f}$. However, such an estimator requires the computation of first- and second-order of derivatives of kernel for each dimension on each data, leading to memory and time cost of $\Ocal\rbr{n^2d^2}$ and $\Ocal\rbr{n^3d^3}$ respectively. This quickly becomes prohibitive for even modest $n$ and $d$.

The same difficulty also appears in the score matching based estimator in~\citep{ArbGre17} for the \emph{conditional exponential family}, which is defined as 
\begin{equation}\label{eq:conditional_exp_family}
p\rbr{y|x} = p_0\rbr{y}\exp\rbr{\lambda f\rbr{x, y} - A_x\rbr{\lambda f}},\,\, f\in\Fcal
\end{equation}
where $y\in\Omega_y\subset \RR^p$, $x\in\Omega_x\subset \RR^d$, $\lambda \in \RR$, and $A_x\rbr{\lambda f}\defeq \log \int_{\Omega_y} p_0(y)\exp\rbr{\lambda f\rbr{x, y}}dy$, $\Fcal$ is defined as $\cbr{f\in \Hcal_y: \exp\rbr{A_x\rbr{\lambda f}}< \infty}$. We consider $f:\Omega_x\times\Omega_y\rightarrow \RR$ such that $f\rbr{x, \cdot}$ is in RKHS $\Hcal_y$ for $\forall x\in \Omega_x$. Denoting $\Tcal\in\Hcal_{\Omega_x} :\Omega_x\rightarrow \Hcal_y$ such that $\Tcal_x\rbr{y} = f\rbr{x, y}$, we can derive its kernel function following~\citet{MicPon05,ArbGre17}. By the Riesz representation theorem, $\forall x\in\Omega_x$ and $h\in\Hcal_y$, there exists a linear operator $\Gamma_x:\Hcal_y\rightarrow \Hcal_{\Omega_x}$ such that
$$
\langle h, \Tcal_x \rangle_{\Hcal_y} = \langle \Tcal, \Gamma_xh \rangle_{\Hcal_{\Omega_x}}, \,\, \forall\Tcal\in\Hcal_{\Omega_x}.
$$
Then, the kernel can be defined by composing $\Gamma_x$ with its dual, \ie, $k\rbr{x, x'} = \Gamma_x^*\Gamma_{x'}$ and the function $f\rbr{x, y} = \Tcal_x\rbr{y} = \langle \Tcal, \Gamma_x k\rbr{y, \cdot} \rangle$. We follow such assumptions for kernel conditional exponential family.

\subsection{Convex conjugate and Fenchel duality}
Denote $h\rbr{\cdot}$ as a function $ \RR^d\rightarrow \RR$, then its convex conjugate function is defined as
$$
h^*(u) = \sup_{v\in \RR^d}\{u^\top v - h(v)\}.
$$
If $h\rbr{\cdot}$ is proper, convex and lower semicontinuous, the conjugate function, $h^*\rbr{\cdot}$, is also proper, convex and lower semicontinuous. Moreover, $h$ and $h^*$ are dual to each other, \ie, $\rbr{h^*}^* = h$. Such a relationship is known as Fenchel duality~\citep{Rockafellar70,HirLem12}. By the conjugate function, we can represent the $h$ by as,
$$
h(v) = \sup_{u\in \RR^d}\{v^\top u - h^*(u)\}.
$$
The supremum achieves if $v\in \partial h^*(u)$, or equivalently $u\in \partial h(v)$.

\section{A Saddle-Point Formulation of Penalized MLE}\label{sec:dual_mle}
As discussed in~\secref{sec:preliminary}, the penalized MLE for the exponential family involves computing the $\log$-partition functions, $A\rbr{\lambda f}$ and $A_x\rbr{\lambda f}$, 
which are intractable in general. In this section, we first introduce a saddle-point reformulation of the penalized MLE of the exponential family, using Fenchel duality to bypass the computation of the intractable log-partition function. This approach can also be generalized to the conditional exponential family. First, observe that we can rewrite the log-partition function $A\rbr{\lambda f}$ via Fenchel duality as follows. 
\begin{theorem}[Fenchel dual of log-partition]\label{thm:fenchel_log_partition}
\begin{eqnarray}
A\rbr{\lambda f} &=& \max_{q\in \Pcal}\,\, \lambda\inner{q(x)}{f(x)}_2 - KL\rbr{q||p_0},\quad \label{eq:Af}\\
p_f\rbr{x} &=& \argmax_{q\in \Pcal}\,\, \lambda\inner{q(x)}{f(x)}_2 - KL\rbr{q||p_0},\quad\label{eq:pf}
\end{eqnarray}
where $\inner{f}{g}_2\defeq \int_\Omega f\rbr{x}g\rbr{x}dx$, $\Pcal$ denotes the space of distributions with bounded $L_2$ norm and $KL\rbr{q||p_0}\defeq \int_\Omega q\rbr{x}\log\frac{q\rbr{x}}{p_0\rbr{x}}dx$. 
\end{theorem}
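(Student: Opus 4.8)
The plan is to prove this by recognizing the right-hand objective as the log-partition shifted by a Kullback–Leibler term, so that the statement collapses to the Gibbs variational principle (equivalently, the Fenchel-duality pairing between the log-partition and negative entropy set up in \secref{sec:preliminary}). Write $G\rbr{q} \defeq \lambda\inner{q}{f}_2 - KL\rbr{q||p_0}$ for the functional being maximized over $q\in\Pcal$. My first step is to check that the candidate maximizer $p_f$ is feasible: since $f\in\Fcal$ guarantees $\exp\rbr{A\rbr{\lambda f}}<\infty$, the density $p_f$ is well-defined and normalized, and under the standing assumptions on $p_0$ and $k$ it has bounded $L_2$ norm, so $p_f\in\Pcal$ and $\inner{p_f}{f}_2$ is finite.

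The core of the argument is a single algebraic identity. Starting from $p_f\rbr{x} = p_0\rbr{x}\exp\rbr{\lambda f\rbr{x} - A\rbr{\lambda f}}$, I would solve for $\lambda f\rbr{x} = \log\frac{p_f\rbr{x}}{p_0\rbr{x}} + A\rbr{\lambda f}$ and substitute into $G$. Using $\int_\Omega q\rbr{x}dx = 1$, the linear term becomes $\lambda\inner{q}{f}_2 = \int_\Omega q\rbr{x}\log\frac{p_f\rbr{x}}{p_0\rbr{x}}dx + A\rbr{\lambda f}$, and combining with the entropy term yields
\[
G\rbr{q} = A\rbr{\lambda f} + \int_\Omega q\rbr{x}\log\frac{p_f\rbr{x}}{q\rbr{x}}dx = A\rbr{\lambda f} - KL\rbr{q||p_f}.
\]
Both claims follow at once from Gibbs' inequality $KL\rbr{q||p_f}\ge 0$, with equality if and only if $q = p_f$ almost everywhere: taking the supremum over $q\in\Pcal$ gives $\max_q G\rbr{q} = A\rbr{\lambda f}$, which is~\eqref{eq:Af}, and the unique maximizer is $q = p_f$, which is~\eqref{eq:pf}.

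As an alternative that exhibits the optimizer directly, I would treat $\max_q G\rbr{q}$ subject to $\int_\Omega q = 1$ as a concave program ($G$ is concave, being linear plus the concave term $-KL\rbr{\cdot||p_0}$) and form the Lagrangian with multiplier $\mu$ for normalization. Setting the first variation $\lambda f\rbr{x} - \log\frac{q\rbr{x}}{p_0\rbr{x}} - 1 - \mu$ to zero recovers $q\rbr{x}\propto p_0\rbr{x}\exp\rbr{\lambda f\rbr{x}}$; enforcing normalization fixes $e^{-1-\mu} = 1/Z\rbr{\lambda f}$, i.e.\ $q = p_f$, and plugging back reproduces $G\rbr{p_f} = A\rbr{\lambda f}$. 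This route is a useful cross-check but requires justifying the interchange of variation and integration.

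The main obstacle is not the algebra but the domain and regularity bookkeeping: I must confirm that $G\rbr{q}$ is well-defined and finite on $\Pcal$ (so neither $KL\rbr{q||p_0}$ nor $\inner{q}{f}_2$ is ill-posed), that the supremum is \emph{attained} inside $\Pcal$ rather than merely approached, and that the identity $G\rbr{q}=A\rbr{\lambda f}-KL\rbr{q||p_f}$ remains valid whenever either side is finite. In particular, the case where $q$ is not absolutely continuous with respect to $p_f$ must be handled, where $KL\rbr{q||p_f}=+\infty$ correctly forces $G\rbr{q}=-\infty$ and so does not compete for the maximum. These measure-theoretic caveats, together with verifying $p_f\in\Pcal$, are the only points requiring genuine care.
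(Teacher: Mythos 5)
Your proof is correct, and your primary argument takes a somewhat different route from the paper's. The paper proves the theorem by the variational/stationarity approach that you relegate to a cross-check: it notes that $l\rbr{q}=\lambda\inner{q}{f}_2-KL\rbr{q||p_0}$ is (strongly) concave in $q$, sets the first variation to zero to get $\log q^*\rbr{x}\propto \lambda f\rbr{x}+\log p_0\rbr{x}$, normalizes to identify $q^*=p_f$, and plugs back in to recover $A\rbr{\lambda f}$. Your main argument instead establishes the exact decomposition $G\rbr{q}=A\rbr{\lambda f}-KL\rbr{q||p_f}$ and invokes Gibbs' inequality; this buys you the optimal value, the attainment of the supremum, and the uniqueness of the maximizer in one stroke, without needing to justify functional differentiation or rely on concavity for global optimality --- which is precisely the ``interchange of variation and integration'' issue you correctly flag in the Lagrangian route. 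Your additional bookkeeping (verifying $p_f\in\Pcal$, finiteness of $\inner{q}{f}_2$ on $\Pcal$, and the behavior when $q$ is not absolutely continuous with respect to $p_f$) addresses regularity points that the paper's one-line proof passes over silently; these are genuinely needed for the statement to be airtight, so your version is the more careful of the two.
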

\begin{proof}
Denote $l\rbr{q} \defeq \lambda\inner{q(x)}{f\rbr{x}} - KL\rbr{q||p_0}$, which is strongly concave w.r.t. $q \in \Pcal$, the optimal $q^*$ can be obtained by setting the 
$$
\log q^*(x)\propto \lambda f\rbr{x}+\log{p_0(x)}. 
$$
Since $q^*\in\Pcal$, we have 
$$
q^*(x) = p_0(x)\exp\rbr{\lambda f(x) - A\rbr{\lambda f}} = p_f(x),
$$
which leads to (\ref{eq:pf}). Plugging $q^*$ to $l(q)$, we obtain the maximum as $\log \int_\Omega \exp\rbr{\lambda f(x)}p_0(x)dx$, which is exactly $A\rbr{\lambda f}$, leading to (\ref{eq:Af}).
\end{proof}
Therefore, invoking the Fenchel dual of $A\rbr{\lambda f}$ into the penalized MLE, we achieve a saddle-point optimization, \ie, 
{\small
\begin{eqnarray}\label{eq:primal_CD}
\max_{f\in\Fcal} L\rbr{f} \propto \min_{q\in\Pcal}\,\, \underbrace{\widehat\EE_{\Dcal}\sbr{f(x)} - \EE_{q(x)}\sbr{f(x)} - \frac{\eta}{2}\nbr{f}_\Hcal^2+ \frac{1}{\lambda}KL\rbr{q||p_0}}_{\ell(f, q)}. 
\end{eqnarray}
}

The min-max reformulation of the penalized MLE in~\eqref{eq:primal_CD} resembles the optimization in MMD GAN~\citep{LiChaYuYanetal17,BinSutArbArt18}. In fact, the dual problem of the penalized MLE of the exponential family is a \emph{KL-regularized} MMD GAN with a special design of the kernel family. Alternatively, if $f$ is a Wasserstein-$1$ function, the optimization resembles the Wasserstein GAN~\citep{ArjChiBot17}. 

Next, we consider the duality properties.
\begin{theorem}[weak and strong duality]\label{thm:minmax_switch} The weak duality holds in general, \ie,
$$
\max_{f\in\Fcal}\min_{q\in\Pcal}\,\, \ell(f, q) \le \min_{q\in\Pcal}\max_{f\in\Fcal} \,\,\ell(f, q).
$$
The strong duality holds when $\Fcal$ is a closed RKHS and $\Pcal$ is the distributions with bounded $L_2$ norm, \ie, 
\begin{equation}\label{eq:strong_dual}
\max_{f\in\Fcal}\min_{q\in\Pcal}\,\, \ell(f, q) = \min_{q\in\Pcal}\max_{f\in\Fcal} \,\,\ell(f, q).
\end{equation}
\end{theorem}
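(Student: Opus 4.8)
The plan is to dispatch the two claims separately: weak duality follows from the generic minimax inequality with no structural assumptions, while strong duality is obtained by invoking Sion's minimax theorem after exhibiting the saddle structure of $\ell$.

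First, for weak duality I would observe that for \emph{any} pair $\rbr{f, q}\in\Fcal\times\Pcal$ one has $\inf_{q'\in\Pcal}\ell\rbr{f, q'}\le \ell\rbr{f, q}\le \sup_{f'\in\Fcal}\ell\rbr{f', q}$. Taking the supremum over $f$ in the left bound and the infimum over $q$ in the right bound immediately yields $\max_{f}\min_{q}\ell \le \min_{q}\max_{f}\ell$. This uses nothing about $\Fcal$ or $\Pcal$, matching the claim that weak duality ``holds in general.''

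Second, for strong duality the key point is that $\ell$ is a saddle function. As a function of $f$ it is the sum of the linear terms $\widehat\EE_{\Dcal}\sbr{f} - \EE_{q}\sbr{f}$ and the strongly concave penalty $-\frac{\eta}{2}\nbr{f}_\Hcal^2$, hence strongly concave; as a function of $q$ it is the sum of the linear functional $-\EE_{q}\sbr{f} = -\inner{q}{f}_2$ and the convex relative entropy $\frac{1}{\lambda}KL\rbr{q\|p_0}$, hence convex. Thus $\ell$ is concave--convex, and both $\Fcal$ (a convex subset of $\Hcal$) and $\Pcal$ (a convex set of densities) are convex. I would then apply Sion's minimax theorem, for which I must supply (i) compactness of one feasible set and (ii) the matching semicontinuity of $\ell$. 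For compactness I use the $q$-side: $\Pcal=\cbr{q\ge 0:\int q = 1,\ \nbr{q}_2\le C}$ is convex, closed and norm-bounded in the Hilbert space $L_2\rbr{\Omega}$, hence weakly compact by reflexivity (Banach--Alaoglu/Kakutani), and the simplex constraints $q\ge 0,\ \int q = 1$ are convex and closed, so they survive weak limits. It then remains to verify semicontinuity: on $\Fcal$ the map $f\mapsto \ell\rbr{f, q}$ is norm-continuous (the linear terms are continuous because the RKHS evaluation $f\mapsto f\rbr{x}=\inner{f}{k\rbr{x,\cdot}}_\Hcal$ is) and strongly concave, hence upper semicontinuous and quasiconcave; on $\Pcal$ the map $q\mapsto \ell\rbr{f, q}$ is weakly lower semicontinuous, since $q\mapsto -\inner{q}{f}_2$ is weakly continuous (provided $f\in L_2$) and $q\mapsto KL\rbr{q\|p_0}$, being convex and strongly lower semicontinuous, is weakly lower semicontinuous. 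Sion's theorem then yields $\max_{f}\min_{q}\ell = \min_{q}\max_{f}\ell$, with the outer minimum over $q$ attained by weak compactness.

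The step I expect to be the main obstacle is the topological bookkeeping in infinite dimensions: choosing one topology on $\Pcal$ in which it is simultaneously compact \emph{and} $\ell\rbr{f,\cdot}$ is lower semicontinuous. The weak $L_2$ topology is the natural candidate, but it forces me to confirm that the pairing $\inner{q}{f}_2$ is a genuinely weakly continuous linear functional---that is, that $f\in L_2\rbr{\Omega}$, which is where the RKHS/$L_2$ compatibility of $\Fcal$ (and decay of $p_0$ on a possibly unbounded $\Omega$) enters---and that relative entropy is weakly lower semicontinuous on the bounded set $\Pcal$. Once lower semicontinuity of $KL$ and well-posedness of the pairing are secured, the remaining checks are routine, and the strong concavity in $f$ together with the strict convexity of $KL$ in $q$ additionally give uniqueness of the saddle point.
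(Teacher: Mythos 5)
Your proposal is correct and takes essentially the same route as the paper, which proves this theorem by a one-line appeal to a standard minimax theorem (Ekeland--Temam, Proposition 2.1) where you invoke Sion's theorem instead; the concave--convex structure, compactness, and semicontinuity checks you carry out are exactly the hypotheses that citation is meant to cover. The technical caveats you flag---weak compactness of $\Pcal$ requiring the normalization constraint to be weakly closed (an issue when $\Omega$ has infinite measure), the pairing $\inner{q}{f}_2$ requiring $\Fcal\subset L_2\rbr{\Omega}$, and weak lower semicontinuity of $KL$---are genuine and are silently assumed by the paper, so your more careful treatment is, if anything, an improvement.
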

\thmref{thm:minmax_switch} can be obtained by directly applying the minimax theorem~\citep{EkeTem99}[Proposition 2.1]. We refer to the $\max$-$\min$ problem in~\eqref{eq:strong_dual} as the primal problem, while the $\min$-$\max$ form as the its dual form. 

\paragraph{Remark (connections to MMD GAN):} Consider the dual problem with the kernel learning, \ie,
\begin{equation}\label{eq:dual_cd}
\min_{q\in\Pcal}\max_{f\in\Fcal_\phi, \phi} \widehat\EE_{\Dcal}\sbr{f(x)} - \EE_{q(x)}\sbr{f(x)} - \frac{\eta}{2}\nbr{f}_\Hcal^2+ \frac{1}{\lambda}KL\rbr{q||p_0}.
\end{equation}
where we involve the parameters of the kernel $\phi$ to be learned in the optimization. By setting the gradient $\nabla_f \ell\rbr{f, q} = 0$, for fixed $q$, we obtain the optimal witness function in the RKHS, $f_q = \frac{1}{\eta}\rbr{\widehat\EE\sbr{k_\phi\rbr{x, \cdot}} - \EE_{q}\sbr{k_\phi\rbr{x, \cdot}}}$, which leads to
\begin{eqnarray}\label{eq:reg_mmd}
\min_{q\in\Pcal}\max_{\phi}\underbrace{\widehat\EE\sbr{k_\phi\rbr{x, x'}}-2\widehat\EE\EE_q\sbr{k_\phi\rbr{x, x'}} + \EE_{q}\sbr{k_\phi\rbr{x,x'}}}_{MMD_\phi\rbr{\Dcal, q}}+ \frac{2\eta}{\lambda}KL\rbr{q||p_0}.
\end{eqnarray}
This can be regarded as the $KL$-divergence regularized MMD GAN. Thus, with $KL$-divergence regularization, the MMD GAN learns an infinite-dimension exponential family in an adaptive RKHS. Such a novel perspective bridges GAN and exponential family estimation, which appears to be of independent interest and potentially brings a new connection to the GAN literature for further theoretical development. 
\paragraph{Remark (connections to Maximum Entropy Moment Matching):} \citet{AltSmo06,DudPhiSch07} discuss the maximum entropy moment matching method for distribution estimation, 
\begin{eqnarray}\label{eq:max_ent}
\min_{q\in \Pcal}\,\, && KL\rbr{q||p_0}\\
\st &&\nbr{\EE_q\sbr{k\rbr{x, \cdot}} - \widehat\EE\sbr{k\rbr{x,\cdot}}}_{\Hcal_k}\le \frac{\eta'}{2},\nonumber 
\end{eqnarray} 
whose dual problem will be reduced to the penalized MLE~\eqref{eq:penalized_mle} with proper choice of $\eta'$~\citep{AltSmo06}[Lemma 6]. Interestingly, the proposed saddle-point formulation~\eqref{eq:primal_CD} shares the solution to~\eqref{eq:max_ent}. From the maximum entropy view,  the penalty $\nbr{f}_{\Hcal}$ relaxes the moment matching constraints. However, the algorithms provided in~\citet{AltSmo06,DudPhiSch07} simply ignore the difficulty in computing the expectation in $A\rbr{\lambda f}$, which is not practical, especially when $f$ is infinite-dimensional.

Similar to~\thmref{thm:fenchel_log_partition}, we can also represent $A_x\rbr{\lambda f}$ by its Fenchel dual,
\begin{eqnarray}\label{eq:fenchel_dual_condition}
A_x\rbr{\lambda f} 
&=& \max_{q(\cdot|x)\in \Pcal} \lambda\inner{q(y|x)}{f\rbr{x,y}}- KL\rbr{q||p_0}, \\
p_f\rbr{y|x} &=& \argmax_{q(\cdot|x)\in \Pcal} \lambda\inner{q(x)}{f(x)} - KL\rbr{q||p_0}.
\end{eqnarray}
Then, we can recover the penalized MLE for the conditional exponential 
family as 
\begin{eqnarray}\label{eq:primal_conditional_CD}
\max_{f\in\Fcal}\min_{q\rbr{\cdot|x}\in\Pcal}\,\,\widehat\EE_{\Dcal}\sbr{f(x,y)} - \EE_{q(y|x)}\sbr{f(x, y)} - \frac{\eta}{2}\nbr{f}_{\Hcal}^2 + \frac{1}{\lambda}KL\rbr{q||p_0}. \nonumber
\end{eqnarray}

The saddle-point reformulations of penalized MLE in~\eqref{eq:primal_CD} and~\eqref{eq:primal_conditional_CD} bypass the difficulty in the partition function. Therefore, it is very natural to consider learning the exponential family by solving the saddle-point problem~\eqref{eq:primal_CD} with an appropriate parametrized dual distribution $q\rbr{x}$. This approach is referred to as the ``dual embedding'' technique in~\citet{DaiHePanBooetal16}, which requires:
\begin{itemize}
  \item[{\bf i)}] the parametrization family should be flexible enough to reduce the extra approximation error;  
  \item[{\bf ii)}] the parametrized representation should be able to provide density value. 
\end{itemize}
As we will see in~\secref{sec:theory}, the flexibility of the parametrization of the dual distribution will have a significant effect on the consistency of the estimator. 

One can of course use the kernel density estimator~(KDE) as the dual distribution parametrization, which preserves convex-concavity. However, KDE will easily fail when approximating high-dimensional data. Applying the reparametrization trick with a suitable class of probability distributions~\citep{KinWel13,RezMohWie14} is another alternative to parametrize the dual distribution. However, the class of such parametrized distributions is typically restricted to simple known distributions, which might not be able to approximate the true solution, potentially leading to a  huge approximation bias. At the other end of the spectrum, the distribution family generated by transport mapping is sufficiently flexible to model smooth distributions~\citep{GooPouMirXuetal14,ArjChiBot17}. However, the density value of such distributions, \ie, $q\rbr{x}$, is not available for $KL$-divergence computation, and thus, is not applicable for parameterizing the dual distribution. Recently, flow-based parametrized density functions~\citep{RezMoh15,KinSalJozCheetal16,DinSohBen16} have been proposed for a trade-off between the flexibility and tractability. However, the expressive power of existing flow-based models remains restrictive even in our synthetic example and cannot be directly applied to conditional models.

\subsection{Doubly Dual Embedding for MLE}

Transport mapping is very flexible for generating smooth distributions. However, a major difficulty is that it lacks the ability to obtain the density value $q\rbr{x}$, making the computation of the $KL$-divergence impossible. To retain the flexibility of transport mapping and avoid the calculation of $q\rbr{x}$, we introduce \emph{doubly dual embedding}, which  achieves a delicate balance between flexibility and tractability. 
  
First, noting that $KL$-divergence is also a convex function, we consider the Fenchel dual of $KL$-divergence~\citep{NguWaiJor08}, \ie, 
\begin{eqnarray}\label{eq:kl_dual}
KL\rbr{q||p_0} &=& \max_{\nu} \EE_{q}\sbr{\nu(x)} - \EE_{p_0}\sbr{\exp\rbr{\nu(x)}} + 1,\\
\log\frac{q\rbr{x}}{p_0\rbr{x}} &=& \argmax_{\nu} \EE_{q}\sbr{\nu(x)} - \EE_{p_0}\sbr{\exp\rbr{\nu(x)}} + 1,
\end{eqnarray}
with $\nu\rbr{\cdot}: \Omega\rightarrow \RR$. One can see in the dual representation of the $KL$-divergence that the introduction of the auxiliary optimization variable $\nu$ eliminates the explicit appearance of $q\rbr{x}$ in~\eqref{eq:kl_dual}, which makes the transport mapping parametrization for $q\rbr{x}$  applicable. Since there is no extra restriction on $\nu$, we can use arbitrary smooth function approximation for $\nu\rbr{\cdot}$, such as kernel functions or neural networks. 

Applying the dual representation of $KL$-divergence into the dual problem of the saddle-point reformulation~\eqref{eq:primal_CD}, we obtain the ultimate saddle-point optimization for the estimation of the kernel exponential family,
\begin{equation}\label{eq:double_dual_mle}
\begin{split}
\min_{q\in\Pcal}\max_{f, \nu\in\Fcal}\tilde\ell\rbr{f, \nu, q}\defeq \widehat\EE_{\Dcal}\sbr{f} - \EE_q\sbr{f} -\frac{\eta}{2}\nbr{f}_{\Hcal}^2 +\frac{1}{\lambda }\rbr{\EE_q\sbr{\nu} - \EE_{p_0}\sbr{\exp\rbr{\nu}}}. 
\end{split}
\end{equation}
Note that several other work also applied Fenchel duality to $KL$-divergence~\citep{NguWaiJor08,NowCseTom16}, but the approach taken here is different as it employs dual of $KL\rbr{q||p_0}$, rather than $KL\rbr{q||\Dcal}$. 

\begin{algorithm}[t]
\caption{{\bf Doubly Dual Embedding-SGD} for saddle-point reformulation of MLE~\eqref{eq:double_dual_mle}}
  \begin{algorithmic}[1]\label{alg:sgd_double_dual}
    \FOR{$l=1,\ldots, L$}
      \STATE Compute $\rbr{f, \nu}$ by~\algref{alg:sgd_f}.
      \STATE Sample $\cbr{\xi_0\sim p\rbr{\xi}}_{b=1}^B$.
      \STATE Generate $x_b = g_{w_g}\rbr{\xi}$ for $b=1,\ldots, B$.
      \STATE Compute stochastic approximation $\widehat\nabla_{w_g} \Lhat \rbr{w_g}$ by~\eqref{eq:dual_gradient}.
      \STATE Update $w_g^{l+1} = w_g^{l} - \rho_l \widehat\nabla_{w_g} L\rbr{w_g}$.
    \ENDFOR
    \STATE Output $w_g$ and $f$.
  \end{algorithmic}
\end{algorithm}

\paragraph{Remark (Extension for kernel conditional exponential family):} Similarly, we can apply the doubly dual embedding technique to the penalized MLE of the kernel conditional exponential family,
which leads to 
\begin{eqnarray}\label{eq:double_dual_conditional_mle}
\min_{q\in\Pcal_x}\max_{f, \nu\in\Fcal} \widehat\EE_{\Dcal}\sbr{f} - \EE_{q\rbr{y|x}, x\sim\Dcal}\sbr{f} -\frac{\eta}{2}\nbr{f}_{\Hcal}^2 +\frac{1}{\lambda}\rbr{\EE_{q\rbr{y|x}, x\sim\Dcal}\sbr{\nu} - \EE_{p_0\rbr{y}}\sbr{\exp\rbr{\nu}}}.\nonumber
\end{eqnarray}

In summary, with the \emph{doubly dual embedding} technique, we derive a saddle-point reformulation of penalized MLE that bypasses the difficulty of handling the intractable partition function while allowing great flexibility in parameterizing the dual distribution.

\section{Practical Algorithm}\label{sec:alg}

\begin{algorithm}[t]
\caption{Stochastic Functional Gradients for $f$ and $\nu$}
  \begin{algorithmic}[1]\label{alg:sgd_f}
    \FOR{$k=1,\ldots, K$}
      \STATE Sample ${\xi\sim p\rbr{\xi}}$.
      \STATE Generate $x = g\rbr{\xi}$.
      \STATE Sample ${x'\sim p_0\rbr{x}}$.
      \STATE Compute stochastic function gradient w.r.t. $f$ and $\nu$ with~\eqref{eq:f_grad} and~\eqref{eq:nu_grad}.
      \STATE Update $f_k$ and $\nu_k$ with~\eqref{eq:f_update} and~\eqref{eq:nu_update}.
    \ENDFOR
    \STATE Output $\rbr{f_K, \nu_K}$ .
  \end{algorithmic}
\end{algorithm}
In this section, we introduce the transport mapping parametrization for the dual distribution, $q\rbr{x}$, and apply the stochastic gradient descent for solving the optimization problems in \eqref{eq:double_dual_mle} and \eqref{eq:double_dual_conditional_mle}. For simplicity of exposition, we only illustrate the algorithm for~\eqref{eq:double_dual_mle}. The algorithm can be easily applied to~\eqref{eq:double_dual_conditional_mle}.

Denote the parameters in the transport mapping for $q\rbr{x}$ as $w_g$, such that $\xi\sim p\rbr{\xi}$ and $x = g_{w_g}\rbr{\xi}$. We illustrate the algorithm with a kernel parametrize $\nu\rbr{\cdot}$. There are many alternative choices of parametrization of $\nu$, \eg, neural networks---the proposed algorithm is still applicable to the parametrization as long as it is differentiable. We abuse notation somewhat by using $\tilde\ell\rbr{f, \nu, w_g}$ as $\tilde\ell\rbr{f, \nu, q}$ in~\eqref{eq:double_dual_mle}. 
with such parametrization, the~\eqref{eq:double_dual_mle} is an upper bound of the penalty MLE in general by~\thmref{thm:minmax_switch}. 

With the kernel parametrized $\rbr{f, \nu}$, the inner maximization over $f$ and $\nu$ is a standard concave optimization. We can solve it using existing algorithms to achieve the global optimal solution. Due to the existence of the expectation, we will use the stochastic functional gradient descent for scalability. Given $\rbr{f, \nu}\in\Hcal$, following the definition of functional gradients~\citep{KivSmoWil04,DaiXieHeLiaEtAl14}, we have
\begin{eqnarray}\label{eq:f_grad}
\zeta_f\rbr{\cdot}\defeq\nabla_f \tilde\ell\rbr{f, \nu, w_g} = \widehat\EE_{\Dcal}\sbr{k\rbr{x, \cdot}} - \EE_{\xi}\sbr{k\rbr{g_{w_g}\rbr{\xi_0}, \cdot}} -\eta f\rbr{\cdot},
\end{eqnarray}
\begin{eqnarray}\label{eq:nu_grad}
\zeta_\nu\rbr{\cdot}\defeq\nabla_\nu \tilde\ell\rbr{f, \nu, w_g} = \frac{1}{\lambda}\rbr{\EE_\xi\sbr{k\rbr{g_{w_g}\rbr{\xi}, \cdot}} - \EE_{p_0}\sbr{\exp\rbr{\nu\rbr{x}}k\rbr{x, \cdot}}}. 
\end{eqnarray}
In the $k$-th iteration, given sample $x\sim\Dcal, \xi\sim p\rbr{\xi}$, and $x'\sim p_0\rbr{x}$, the update rule for $f$ and $\nu$ will be
\begin{eqnarray}\label{eq:f_update}
f_{k+1}\rbr{\cdot} =\rbr{1 - \eta\tau_k} f_k\rbr{\cdot} + {\tau_k}\rbr{k\rbr{x, \cdot} - k\rbr{g_{w_g}\rbr{\xi}, \cdot}},
\end{eqnarray}
\begin{eqnarray}\label{eq:nu_update}
\nu_{k+1}\rbr{\cdot} =\nu_k\rbr{\cdot} + \frac{\tau_k}{\lambda }\rbr{k\rbr{g_{w_g}\rbr{\xi}, \cdot} - \exp\rbr{\nu_k\rbr{x'}}k\rbr{x', \cdot}},\nonumber
\end{eqnarray}
where $\tau_k$ denotes the step-size. 

Then, we consider the update rule for the parameters in dual transport mapping embedding. 
\begin{theorem}[Dual gradient]\label{thm:dual_gradient}
Denoting  $\rbr{f_{w_g}^*, \nu_{w_g}^*} = \argmax_{\rbr{f, \nu}\in\Hcal} \tilde \ell\rbr{f, \nu, w_g}$ and $\Lhat\rbr{w_g} = \tilde\ell\rbr{f_{w_g}^*, \nu_{w_g}^*, w_g}$, we have 
\begin{eqnarray}\label{eq:dual_gradient}
\nabla_{w_g} \Lhat\rbr{w_g} = -\EE_{\xi}\sbr{\nabla_{w_g} f_{w_g}^*\rbr{g_{w_g}\rbr{\xi}}} + \frac{1}{\lambda}\EE_{\xi}\sbr{\nabla_{w_g} \nu_{w_g}^*\rbr{g_{w_g}\rbr{\xi}}}.
\end{eqnarray}
\end{theorem}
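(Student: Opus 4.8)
The plan is to recognize \thmref{thm:dual_gradient} as an instance of the envelope (Danskin) theorem applied to the value function $\Lhat$ of the inner maximization, so that the contribution coming from the dependence of the optimizers $\rbr{f^*_{w_g}, \nu^*_{w_g}}$ on $w_g$ vanishes and only the \emph{explicit} dependence of $\tilde\ell$ on $w_g$ through the generator $g_{w_g}$ survives. First I would verify the hypotheses for such an argument: the inner objective $\tilde\ell\rbr{f,\nu,w_g}$ in \eqref{eq:double_dual_mle} is strongly concave in $f$ (from the $-\tfrac{\eta}{2}\nbr{f}_\Hcal^2$ term) and strictly concave in $\nu$ (since $\nu\mapsto\EE_q\sbr{\nu}-\EE_{p_0}\sbr{\exp\rbr{\nu}}$ is strictly concave wherever $p_0$ has support, $\exp$ being strictly convex), so the maximizer $\rbr{f^*_{w_g},\nu^*_{w_g}}$ is unique and varies continuously with $w_g$. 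Combined with continuity of the explicit partial $\partial_{w_g}\tilde\ell$, this yields differentiability of $\Lhat$ together with $\nabla_{w_g}\Lhat\rbr{w_g} = \partial_{w_g}\tilde\ell\rbr{f,\nu,w_g}\big|_{\rbr{f,\nu}=\rbr{f^*_{w_g},\nu^*_{w_g}}}$, i.e.\ the partial gradient in the explicit slot evaluated at the maximizer.

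Equivalently---and this is the route I would present in detail, since it exposes exactly why the result holds---I would expand the total derivative by the chain rule,
\[
\nabla_{w_g}\Lhat\rbr{w_g} = \inner{\zeta_f}{\partial_{w_g} f^*_{w_g}}_\Hcal + \inner{\zeta_\nu}{\partial_{w_g}\nu^*_{w_g}}_\Hcal + \partial_{w_g}\tilde\ell\rbr{f^*_{w_g}, \nu^*_{w_g}, w_g},
\]
where $\zeta_f=\nabla_f\tilde\ell$ and $\zeta_\nu=\nabla_\nu\tilde\ell$ are the functional gradients \eqref{eq:f_grad} and \eqref{eq:nu_grad}. By the first-order optimality conditions at the interior maximizer, $\zeta_f=0$ and $\zeta_\nu=0$ as elements of $\Hcal$, so the first two terms---precisely those that would require the hard-to-compute sensitivities $\partial_{w_g}f^*_{w_g}$ and $\partial_{w_g}\nu^*_{w_g}$---drop out.

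It then remains to compute the explicit partial $\partial_{w_g}\tilde\ell$. Inspecting \eqref{eq:double_dual_mle}, the only terms carrying an explicit $w_g$ are $-\EE_q\sbr{f}=-\EE_\xi\sbr{f\rbr{g_{w_g}\rbr{\xi}}}$ and $\tfrac{1}{\lambda}\EE_q\sbr{\nu}=\tfrac{1}{\lambda}\EE_\xi\sbr{\nu\rbr{g_{w_g}\rbr{\xi}}}$, since $\widehat\EE_\Dcal\sbr{f}$, $\nbr{f}_\Hcal^2$, and $\EE_{p_0}\sbr{\exp\rbr{\nu}}$ are all independent of $q$. Differentiating under the expectation and applying the chain rule through the argument $g_{w_g}\rbr{\xi}$, with $f^*_{w_g},\nu^*_{w_g}$ held fixed as functions, produces exactly \eqref{eq:dual_gradient}.

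The main obstacle is the rigor of the envelope step in the infinite-dimensional RKHS setting: justifying that $w_g\mapsto\rbr{f^*_{w_g},\nu^*_{w_g}}$ is (Fr\'echet) differentiable so that the chain-rule expansion above is meaningful, and that $\zeta_f,\zeta_\nu$ genuinely vanish in $\Hcal$ at the optimum. Invoking Danskin's theorem circumvents differentiating the optimizers altogether---requiring only uniqueness and continuity of the maximizer plus continuity of $\partial_{w_g}\tilde\ell$---so I would favor that formulation, relegating the interchange of $\nabla_{w_g}$ and $\EE_\xi$ and the smoothness of $g_{w_g}$ (needed for the chain rule $\nabla_{w_g}f\rbr{g_{w_g}\rbr{\xi}}=\nabla_x f(x)\big|_{x=g_{w_g}\rbr{\xi}}\,\nabla_{w_g}g_{w_g}\rbr{\xi}$) to standard dominated-convergence assumptions.
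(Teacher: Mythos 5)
Your proposal is correct and follows essentially the same route as the paper's proof: a chain-rule expansion of $\nabla_{w_g}\Lhat$ in which the terms multiplying the sensitivities $\nabla_{w_g}f^*_{w_g}$ and $\nabla_{w_g}\nu^*_{w_g}$ are exactly the first-order optimality conditions and hence vanish, leaving only the explicit dependence through $g_{w_g}$. Your additional framing via Danskin's theorem, and your attention to the regularity needed to justify the envelope step in the RKHS setting, go slightly beyond what the paper writes but do not change the argument.
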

Proof details are given in \appref{appendix:subsec:dual_grad}. With this gradient estimator, we can apply stochastic gradient descent to update $w_g$ iteratively. We summarize the updates for $\rbr{f,\nu}$ and $w_g$ in~\algref{alg:sgd_double_dual} and~\ref{alg:sgd_f}.

Compared to score matching based estimators~\citep{SriFukGreHyvetal17,SutStrArbArt17}, although the convexity no longer holds for the saddle-point estimator, the doubly dual embedding estimator avoids representing $f$ with derivatives of the kernel, thus avoiding the memory cost dependence on the square of dimension. In particular, the proposed estimator for $f$ based on~\eqref{eq:double_dual_mle} reduces the memory cost from $\Ocal\rbr{n^2d^2}$ to $\Ocal\rbr{K^2}$ where $K$ denotes the number of iterations in~\algref{alg:sgd_f}. In terms of time cost, we exploit the stochastic update, which is naturally suitable for large-scale datasets and avoids the matrix inverse computation in score matching estimator whose cost is $\Ocal\rbr{n^3d^3}$. We also learn the dual distribution simultaneously, which can be easily used to generate samples from the exponential family for inference, thus saving the cost of Monte-Carlo sampling in the inference stage. For detailed discussion about the computation cost, please refer to~\appref{appendix:comp_cost}. 

\paragraph{Remark (random feature extension):} Memory cost is a well-known bottleneck for applying kernel methods to large-scale problems. When we set $K=n$, the memory cost will be $\Ocal\rbr{n^2}$, which is prohibitive for millions of data points. Random feature approximation~\citep{RahRec08,DaiXieHeLiaEtAl14,Bach15} can be utilized for scaling up kernel methods. The proposed~\algref{alg:sgd_double_dual} and~\algref{alg:sgd_f} are also compatible with random feature approximation, and hence, applicable to large-scale problems in the same way. With $r$ random features, we can further reduce the memory cost of storing $f$ to $\Ocal\rbr{rd}$. However, even with a random feature approximation, the score matching based estimator will still require $\Ocal\rbr{rd^2}$ memory. One can also learn random features by back-propagation, which leads to the neural networks extension. Please refer to the details of this variant of~\algref{alg:sgd_double_dual} in Appendix~\ref{appendix:random_feature}.

\section{Theoretical Analysis}\label{sec:theory}

In this section, we will first provide the analysis of consistency and the sample complexity of the proposed estimator based on the saddle-point reformulation of the penalized MLE in the well-specified case, where the true density is assumed to be in the kernel (conditional) exponential family, following~\citet{SutStrArbArt17,ArbGre17}. Then, we consider the convergence property of the proposed algorithm. We mainly focus on the analysis for $p\rbr{x}$. The results can be easily extended to kernel conditional exponential family $p\rbr{y|x}$. 

\subsection{Statistical Consistency}

We explicitly consider approximation error from the dual embedding in the consistency of the proposed estimator. We first establish some notation that will be used in the analysis. For simplicity, we set $\lambda =1$ and $p_0\rbr{x} = 1$ improperly in the exponential family expression~\eqref{eq:exp_family}. We denote $p^*\rbr{x} = \exp\rbr{f^*\rbr{x} - A\rbr{f^*}}$ as the ground-true distribution with its true potential function $f^*$ and $\rbr{\ftil, \qtil, \vtil}$ as the optimal primal and dual solution to the saddle point reformulation of the penalized MLE~\eqref{eq:double_dual_mle}. The parametrized dual space is denoted as $\Pcal_w$. We denote $p_{\ftil} \defeq \exp\rbr{\lambda \ftil - A\rbr{\lambda \ftil}}$ as the exponential family generated by $\ftil$. We have the consistency results as
\begin{theorem}\label{thm:consistency}
Assume the spectrum of kernel $k\rbr{\cdot, \cdot}$ decays sufficiently homogeneously in rate $l^{-r}$. With some other mild assumptions listed in~\appref{appendix:subsec:consistency}, we have as $\eta\rightarrow 0$ and $n\eta^{\frac{1}{r}}\to\infty$,
\begin{eqnarray*}
KL\rbr{p^*||p_{\ftil}} + KL\rbr{p_{\ftil}||p^*} = \Ocal_{p^*}\rbr{n^{-1}\eta^{-\frac{1}{r}} + \eta + \epsilon_{approx}},
\end{eqnarray*}
where $\epsilon_{approx}\defeq \sup_{f\in\Fcal}\inf_{q\in\Pcal_w}KL\rbr{q||p_{f}}$ denotes the approximate error due to the parametrization of $\qtil$ and $\tilde\nu$.
Therefore, when setting $\eta = \Ocal\rbr{n^{-\frac{r}{1+r}}}$, $p_{\ftil}$ converge
s to $p^*$  in terms of Jensen-Shannon divergence at rate 
$
\Ocal_{p^*}\rbr{n^{-\frac{r}{1+r}} + \epsilon_{approx}}.
$
\end{theorem}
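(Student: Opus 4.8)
The plan is to reduce the symmetrized divergence to an excess-risk quantity for the penalized MLE and then split that quantity into a regularization bias, a statistical variance, and a dual-approximation contribution. First I would exploit the exponential-family structure: with the simplification $\lambda=1$, $p_0=1$, the Bregman identity for the log-partition gives $KL\rbr{p^*||p_{\ftil}} = A\rbr{\ftil} - A\rbr{f^*} - \inner{\nabla A\rbr{f^*}}{\ftil - f^*}$, so the symmetric sum collapses to $\inner{\ftil - f^*}{\nabla A\rbr{\ftil} - \nabla A\rbr{f^*}} = \inner{\ftil - f^*}{\bar C\rbr{\ftil - f^*}}$, where $\bar C = \int_0^1 \nabla^2 A\rbr{f^* + t\rbr{\ftil - f^*}}dt$ is an averaged covariance operator. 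Equivalently, the forward term $KL\rbr{p^*||p_{\ftil}}$ equals the excess population log-likelihood risk $L_0\rbr{f^*} - L_0\rbr{\ftil}$ with $L_0\rbr{f} = \inner{\nabla A\rbr{f^*}}{f} - A\rbr{f}$. I would use the weighted-norm view for bias and variance and the excess-risk view for the approximation error. Both reduce the theorem to controlling $\ftil - f^*$.

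Next I would introduce two reference estimators and decompose $\ftil - f^* = \rbr{\ftil - \hat f_\eta} + \rbr{\hat f_\eta - f_\eta} + \rbr{f_\eta - f^*}$, where $f_\eta$ maximizes the population penalized objective $L_0\rbr{f} - \frac{\eta}{2}\nbr{f}_\Hcal^2$ and $\hat f_\eta$ is the empirical penalized MLE that uses the exact partition $A$. For the bias I would linearize the population optimality condition $\EE_{p^*}\sbr{k} - \EE_{p_{f_\eta}}\sbr{k} = \eta f_\eta$ to $C_{f^*}\rbr{f^* - f_\eta} \approx \eta f_\eta$, giving the resolvent bias $f^* - f_\eta \approx \eta\rbr{C_{f^*} + \eta}^{-1}f^*$; under the well-specified source condition its covariance-weighted squared size is $\Ocal\rbr{\eta}$. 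For the variance I would control the mean-embedding deviation $\widehat\EE_{\Dcal}\sbr{k} - \EE_{p^*}\sbr{k}$ in the $\rbr{C+\eta}^{-1/2}$-weighted norm with a Hilbert-space Bernstein inequality; the homogeneous spectral decay at rate $l^{-r}$ makes the effective dimension $N\rbr{\eta} = \mathrm{tr}\rbr{C\rbr{C+\eta}^{-1}} \asymp \eta^{-1/r}$, so this term is $\Ocal_{p^*}\rbr{N\rbr{\eta}/n} = \Ocal_{p^*}\rbr{n^{-1}\eta^{-1/r}}$.

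For the approximation term I would invoke the identity behind \thmref{thm:fenchel_log_partition}: restricting the dual to $\Pcal_w$ replaces $A\rbr{f}$ by $A_w\rbr{f} \le A\rbr{f}$ with gap exactly $A\rbr{f} - A_w\rbr{f} = \inf_{q\in\Pcal_w} KL\rbr{q||p_f} \le \epsilon_{approx}$ uniformly in $f$. Hence the doubly-dual empirical objective is a uniform $\epsilon_{approx}$-perturbation of the exact penalized MLE, and comparing $\ftil$ (optimal for the perturbed objective) against $\hat f_\eta$ (optimal for the exact one) at the level of objective values, rather than iterates, injects $\epsilon_{approx}$ additively into the excess risk; doing the comparison at the objective level is what avoids losing a spurious $1/\eta$ factor. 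Summing the three contributions bounds the symmetric KL by $\Ocal_{p^*}\rbr{n^{-1}\eta^{-1/r} + \eta + \epsilon_{approx}}$; balancing variance against bias forces $\eta^{1+1/r} = n^{-1}$, i.e.\ $\eta = \Ocal\rbr{n^{-r/\rbr{1+r}}}$ (which indeed satisfies $n\eta^{1/r}\to\infty$), giving the rate $\Ocal_{p^*}\rbr{n^{-r/\rbr{1+r}} + \epsilon_{approx}}$; since the Jensen-Shannon divergence is dominated by the symmetric KL up to a constant, the same rate transfers.

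The hard part will be the nonlinearity of the log-partition. Because the exponential family is not a linear model, $f_\eta$ and $\hat f_\eta$ are defined implicitly through nonlinear optimality equations, so transferring the linear resolvent and mean-embedding analysis to the actual estimators requires self-concordance, or uniform local strong concavity, of $A$ over a neighborhood of $f^*$, together with a careful localization/peeling argument to guarantee that the variance term is genuinely $N\rbr{\eta}/n$ and not inflated by the curvature. Establishing concentration of the empirical covariance operator toward $C$ and invertibility of $\bar C$ on the subspace spanned by $\ftil - f^*$ is the remaining delicate ingredient; these are what let me equate the covariance-weighted norm with the symmetric KL throughout.
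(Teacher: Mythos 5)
Your proposal is correct in outline and lands on the same bound, but it takes a genuinely different route from the paper. The paper (following Gu and Qiu) introduces a quadratic surrogate objective $\Ltil\rbr{h} = \widehat\EE_\Dcal\sbr{h} - \EE_{p^*}\sbr{h} - \frac{1}{2}\nbr{h-f^*}_{p^*}^2 - \frac{\eta}{2}\nbr{h}_\Hcal^2$ with maximizer $\htil$, computes the bias and variance of $\htil$ in closed form via the Mercer eigen-expansion (its sums $\sum_l \rbr{1+\eta\zeta_l^{-1}}^{-2}$ and $\sum_l \eta\zeta_l^{-1}\rbr{1+\eta\zeta_l^{-1}}^{-2}$ are exactly your effective dimension and resolvent-bias computations in coordinates), and then compares $\ftil$ to $\htil$ by subtracting their first-order optimality conditions; the dual-approximation error enters there as a cross term $\EE_{p_{\ftil}}\sbr{\ftil-\htil} - \EE_{\qtil}\sbr{\ftil-\htil}$, bounded by H\"older plus a $\chi^2$-versus-$KL$ comparison lemma to give $\epsilon_{approx}^{1/2}\nbr{\ftil-\htil}_{p^*}$, which becomes additive after absorbing the linear term. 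You instead route through the population and empirical exact-partition penalized maximizers $f_\eta$ and $\hat f_\eta$, and inject $\epsilon_{approx}$ via the identity $A\rbr{f} - \max_{q\in\Pcal_w}\cbr{\inner{q}{f} - KL\rbr{q||p_0}} = \inf_{q\in\Pcal_w}KL\rbr{q||p_f}$, so the doubly-dual objective is a uniform $\epsilon_{approx}$-perturbation of the exact one and strong concavity at the objective level yields $\nbr{\ftil - \hat f_\eta}^2$ of order $\epsilon_{approx}$ in the curvature-weighted norm without a $1/\eta$ loss. This is arguably cleaner conceptually (it makes transparent why $\epsilon_{approx}$ is additive and ties it to the Fenchel-dual gap of Theorem~\ref{thm:fenchel_log_partition}), at the price of needing uniform local strong concavity of $A$ and an empirical-to-population excess-risk transfer that you correctly flag as the hard step; the paper sidesteps some of this by leaning on the Gu--Qiu norm-equivalence $c\nbr{\cdot}_{p^*}\le\nbr{\cdot}_{p_{\htil+\theta\rbr{\ftil-\htil}}}$ under the uniform boundedness of all densities in the model class (its Assumptions 1--4), which plays the role of your self-concordance/localization requirement. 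Both arguments use the same final identity $KL\rbr{p^*||p_{\ftil}} + KL\rbr{p_{\ftil}||p^*} = \EE_{p_{\ftil}}\sbr{\ftil - f^*} - \EE_{p^*}\sbr{\ftil - f^*}$ and the same balancing of $n^{-1}\eta^{-1/r}$ against $\eta$.
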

For the details of the assumptions and the proof, please refer to~\appref{appendix:subsec:consistency}. Recall the connection between the proposed model and MMD GAN as discussed in~\secref{sec:dual_mle}. \thmref{thm:consistency} also provides a learnability guarantee for a class of GAN models as a byproduct. The most significant difference of the bound provided above, compared to~\citet{GuQiu93,AltSmo06}, is the explicit consideration of the bias from the dual parametrization. Moreover, instead of the Rademacher complexity used in the sample complexity results of~\citet{AltSmo06}, our result exploits the spectral decay of the kernel, which is more directly connected to properties of the RKHS.

From~\thmref{thm:consistency} we can clearly see the effect of the parametrization of the dual distribution: if the parametric family of the dual distribution is simple, the optimization for the saddle-point problem may become easy, however, $\epsilon_{approx}$ will dominate the error. The other extreme case is to also use the kernel exponential family to parametrize the dual distribution and the parametric family of $\nu$ contains $\log\frac{p_f\rbr{x}}{p_0\rbr{x}}$, then, $\epsilon_{approx}$ will reduce to $0$, however, the optimization will be difficult to handle. The saddle-point reformulation provides us the opportunity to balance the difficulty of the optimization 
with approximation error. 

The statistical consistency rate of our estimator and the score matching estimator~\citep{SriFukGreHyvetal17} are derived under different assumptions, therefore, they are not directly comparable. However, since the smoothness is not fully captured by the score matching based estimator in~\citet{SriFukGreHyvetal17}, it only achieves $\Ocal\rbr{n^{-\frac{2}{3}}}$ even if $f$ is infinitely smooth. While under the case that dual distribution parametrization is relative flexible, \ie, $\epsilon_{approx}$ is negligible, and the the spectrum of the kernel decay rate $r\rightarrow\infty$, the proposed estimator will converge in rate $\Ocal\rbr{n^{-1}}$, which is significantly more efficient than the score matching method.

\subsection{Algorithm Convergence}

It is well-known that the stochastic gradient descent converges for saddle-point problem with convex-concave property~\citep{NemJudLanSha09}. However, for better the dual parametrization to reduce $\epsilon_{approx}$ in~\thmref{thm:consistency}, we parameterize the dual distribution with the nonlinear transport mapping, which breaks the convexity. In fact, by~\thmref{thm:dual_gradient}, we obtain the unbiased gradient w.r.t. $w_g$. Therefore, the proposed~\algref{alg:sgd_double_dual} can be understood as applying the stochastic gradient descent for the non-convex dual minimization problem, \ie, $\min_{w_g} \Lhat\rbr{w_g}\defeq \tilde\ell\rbr{f_{w_g}^*, \nu_{w_g}^*, w_g}$. From such a view, we can prove the sublinearly convergence rate to a stationary point when stepsize is diminishing following~\citet{GhaLan13,DaiShaLiXiaHeetal17}. We list the result below for completeness.

\begin{theorem}\label{thm:convergence_opt}
Assume that the parametrized objective $\Lhat\rbr{w_g}$ is $C$-Lipschitz and variance of its stochastic gradient is bounded by $\sigma^2$. Let the algorithm run for $L$ iterations with stepsize $\rho_l=\min\{\frac{1}{L}, \frac{D'}{\sigma\sqrt{L}}\}$ for some $D'>0$ and output $w_g^1,\ldots, w_g^L$. Setting the candidate solution to be $\widehat w_g$ randomly chosen from $w_g^1,\ldots, w_g^L$ such that $P(w=w_g^j)=\frac{2\rho_j-C\rho_j^2}{\sum_{j=1}^L(2\rho_j-C\rho_j^2)}$, then it holds that
$\EE\sbr{\nbr{\nabla \Lhat(\widehat w_g)}^2}\leq \frac{C D^2}{L}+ (D'+\frac{D}{D'})\frac{\sigma}{\sqrt{L}}$
where $D:=\sqrt{2(\Lhat(w_g^1) -\min\Lhat(w_g))/L}$ represents the distance of the initial solution to the optimal solution. 
\end{theorem}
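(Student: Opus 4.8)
The plan is to reproduce, for the dual minimization $\min_{w_g}\Lhat\rbr{w_g}$, the standard guarantee of~\citet{GhaLan13} for stochastic gradient descent on a smooth nonconvex objective. The only problem-specific ingredient is unbiasedness of the gradient estimate: \thmref{thm:dual_gradient} expresses the exact gradient $\nabla_{w_g}\Lhat\rbr{w_g}$ as an expectation over $\xi\sim p\rbr{\xi}$, so replacing that expectation by the minibatch $\cbr{\xi_b}_{b=1}^B$ in~\eqref{eq:dual_gradient} yields a stochastic gradient $\widehat\nabla_{w_g}\Lhat$ with $\EE\sbr{\widehat\nabla_{w_g}\Lhat\mid w_g}=\nabla_{w_g}\Lhat\rbr{w_g}$. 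Together with the two stated hypotheses, namely that $\Lhat$ is $C$-smooth and that $\EE\nbr{\widehat\nabla_{w_g}\Lhat-\nabla_{w_g}\Lhat}^2\le\sigma^2$, these are all the facts the classical analysis requires.

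First I would invoke the descent lemma for $C$-smooth functions applied to the update $w_g^{l+1}=w_g^l-\rho_l\,\widehat\nabla_{w_g}\Lhat(w_g^l)$:
\begin{equation*}
\Lhat(w_g^{l+1})\le \Lhat(w_g^l) - \rho_l\inner{\nabla\Lhat(w_g^l)}{\widehat\nabla_{w_g}\Lhat(w_g^l)} + \frac{C\rho_l^2}{2}\nbr{\widehat\nabla_{w_g}\Lhat(w_g^l)}^2 .
\end{equation*}
Taking the conditional expectation given $w_g^l$, using unbiasedness on the cross term and the bias-variance decomposition $\EE\nbr{\widehat\nabla_{w_g}\Lhat}^2\le\nbr{\nabla\Lhat}^2+\sigma^2$ on the last term, and rearranging, I obtain
\begin{equation*}
\rbr{\rho_l - \tfrac{C\rho_l^2}{2}}\nbr{\nabla\Lhat(w_g^l)}^2 \le \Lhat(w_g^l) - \EE\sbr{\Lhat(w_g^{l+1})\mid w_g^l} + \frac{C\rho_l^2}{2}\sigma^2 .
\end{equation*}

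Next I would take full expectations, multiply through by $2$, sum over $l=1,\ldots,L$ so that the function values telescope, and lower-bound the terminal value by $\min\Lhat$, which gives
\begin{equation*}
\sum_{l=1}^{L}\rbr{2\rho_l - C\rho_l^2}\,\EE\nbr{\nabla\Lhat(w_g^l)}^2 \le 2\rbr{\Lhat(w_g^1)-\min\Lhat} + C\sigma^2\sum_{l=1}^L\rho_l^2 .
\end{equation*}
The randomized output rule, which draws $\widehat w_g=w_g^j$ with probability proportional to $2\rho_j-C\rho_j^2$, is precisely engineered so that the left-hand side equals $\rbr{\sum_l(2\rho_l-C\rho_l^2)}\,\EE\nbr{\nabla\Lhat(\widehat w_g)}^2$; dividing converts the weighted sum into the target expectation. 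Substituting the constant step size $\rho_l=\min\cbr{\tfrac{1}{L},\tfrac{D'}{\sigma\sqrt L}}$ and simplifying $\sum_l\rho_l^2$ and $\sum_l(2\rho_l-C\rho_l^2)$ then yields the two-term rate $\tfrac{CD^2}{L}+\rbr{D'+\tfrac{D}{D'}}\tfrac{\sigma}{\sqrt L}$.

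The one genuinely delicate point—and the step I would treat most carefully—is the unbiasedness claim itself. The formula in~\thmref{thm:dual_gradient} is an envelope (Danskin-type) identity that is valid only at the exact inner maximizers $\rbr{f_{w_g}^*,\nu_{w_g}^*}$, whereas \algref{alg:sgd_f} returns only their $K$-step approximations. The clean argument above therefore presumes the inner concave maximization is solved to optimality, so that the envelope identity holds and no bias contaminates $\widehat\nabla_{w_g}\Lhat$; propagating the inner-loop optimization error through the bound, or arguing it is negligible for large $K$, is where the real work would lie. Everything else is the routine smooth-nonconvex SGD calculation and can be cited directly from~\citet{GhaLan13,DaiShaLiXiaHeetal17}.
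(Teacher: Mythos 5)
Your proof is correct and is precisely the standard Ghadimi--Lan randomized stochastic gradient argument that the paper itself relies on: the paper provides no proof of this theorem, stating only that the result follows from \citet{GhaLan13,DaiShaLiXiaHeetal17}, so your descent-lemma/telescoping/randomized-output reconstruction is the intended one. Your closing caveat is also well placed --- the unbiasedness of $\widehat\nabla_{w_g}\Lhat$ via the envelope identity of \thmref{thm:dual_gradient} holds only at the exact inner maximizers, whereas \algref{alg:sgd_f} returns a $K$-step approximation, and the paper asserts unbiasedness without addressing this inexactness, so that bias term is glossed over in the paper as well.
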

The above result implies that under the choice of the parametrization of $f, \nu$ and $g$, the proposed~\algref{alg:sgd_double_dual} converges sublinearly to a stationary point, whose rate will depend on the smoothing parameter.

\section{Experiments}\label{sec:experiments}

%
\begin{figure}[t]
  \begin{tabular}{ccc}
    \includegraphics[width=0.315\textwidth]{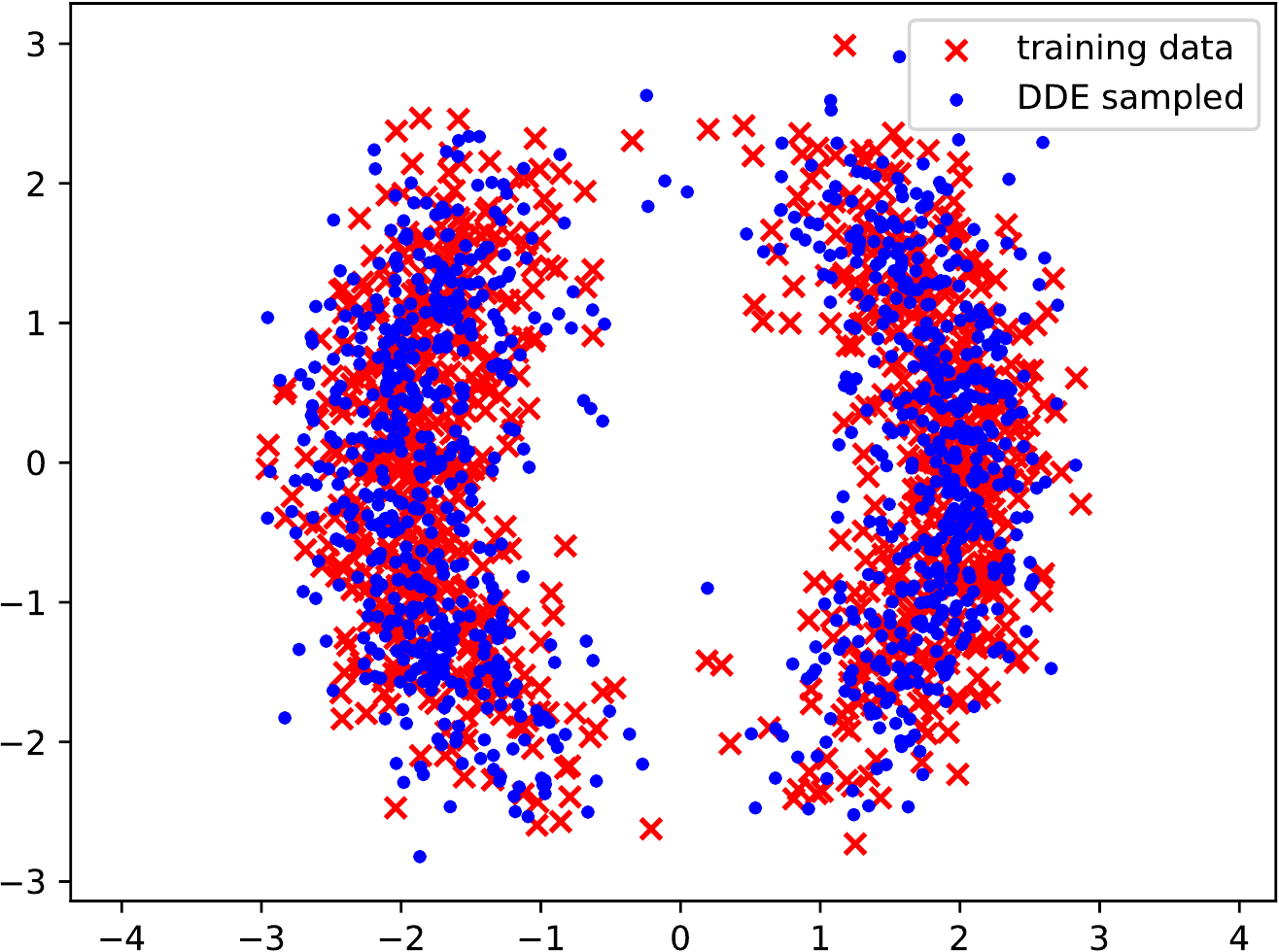}&
    \includegraphics[width=0.315\textwidth]{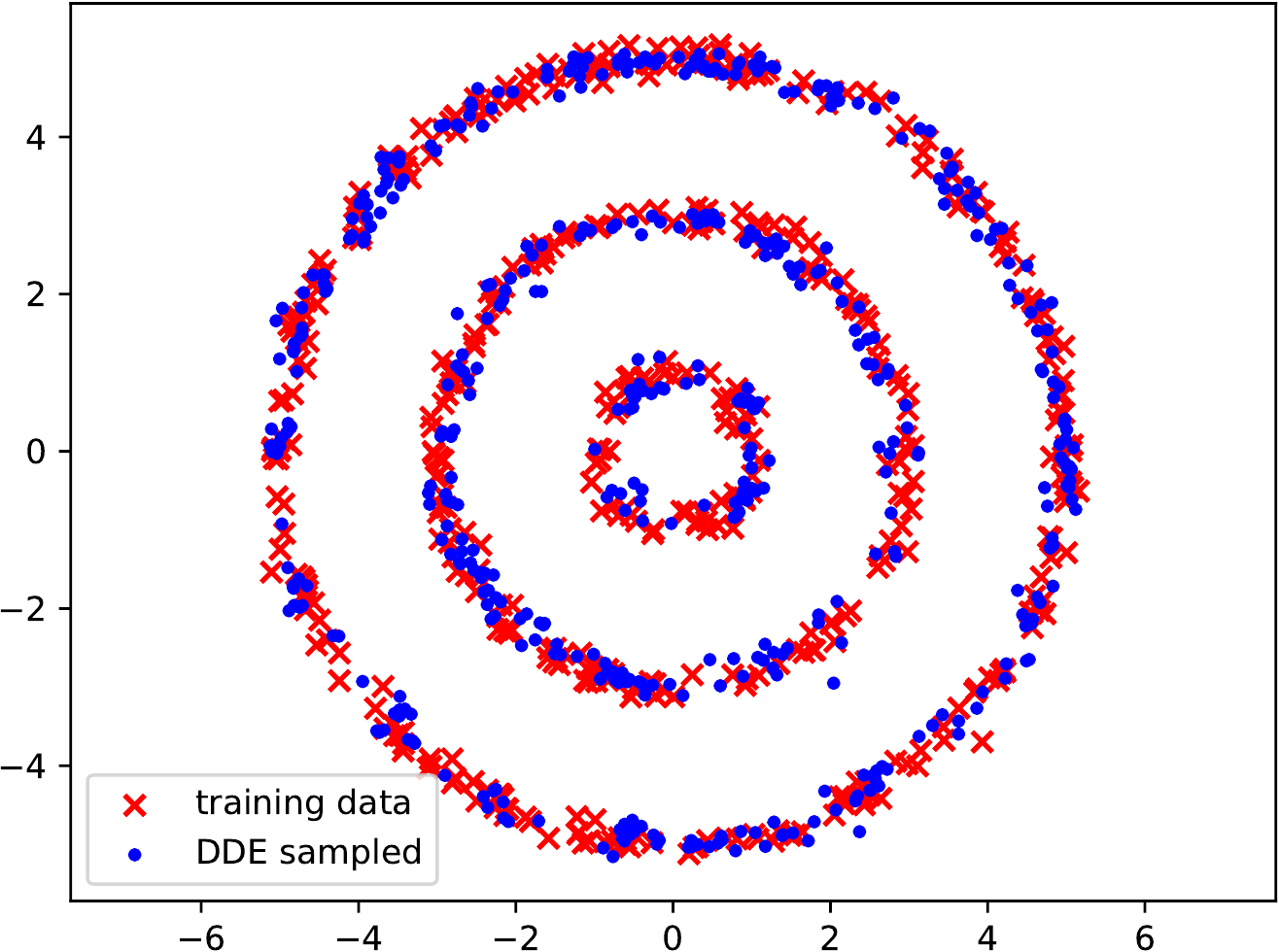}&
    \includegraphics[width=0.315\textwidth]{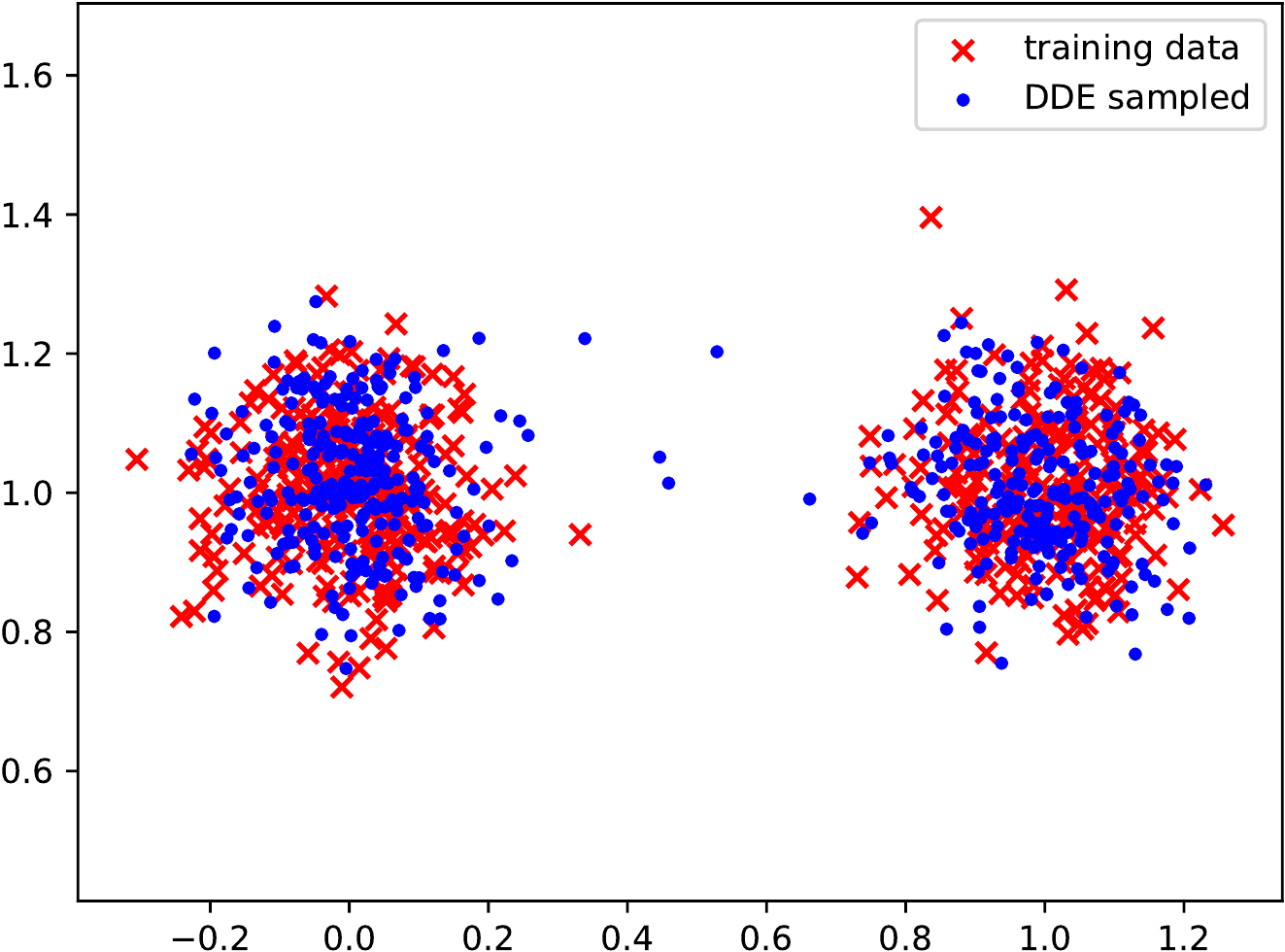}\\
    \includegraphics[width=0.3\textwidth]{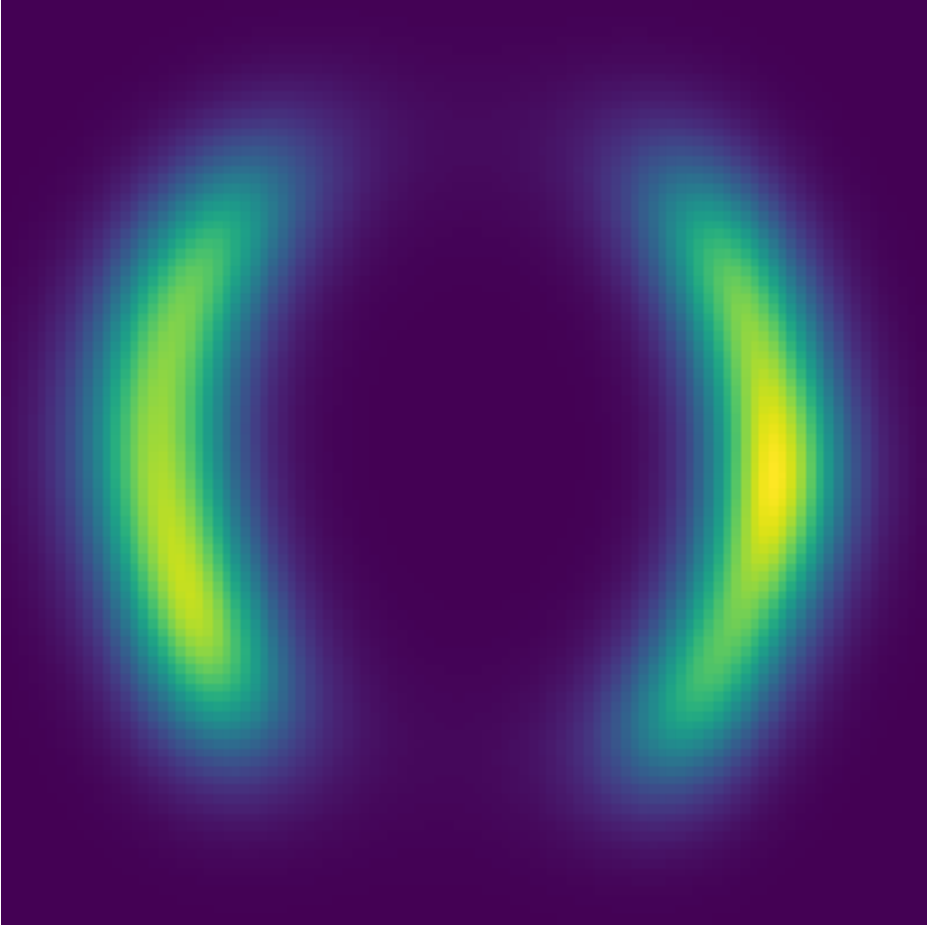}&
    \includegraphics[width=0.3\textwidth]{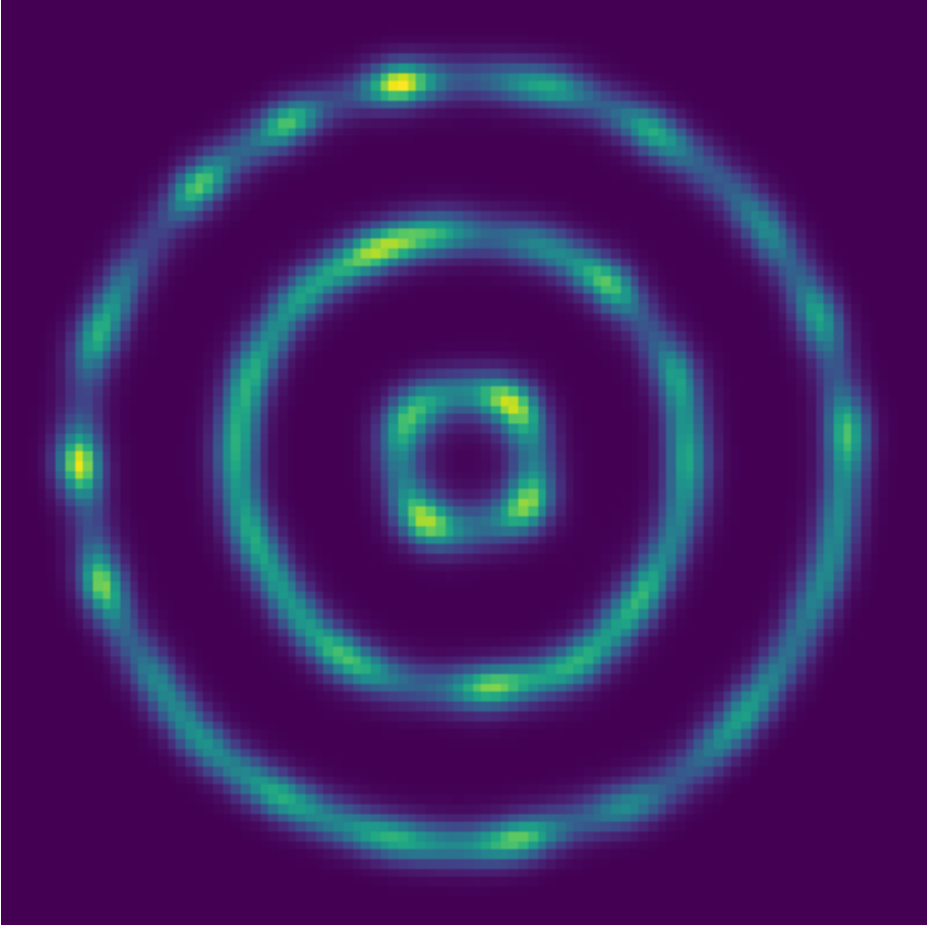}&
    \includegraphics[width=0.3\textwidth]{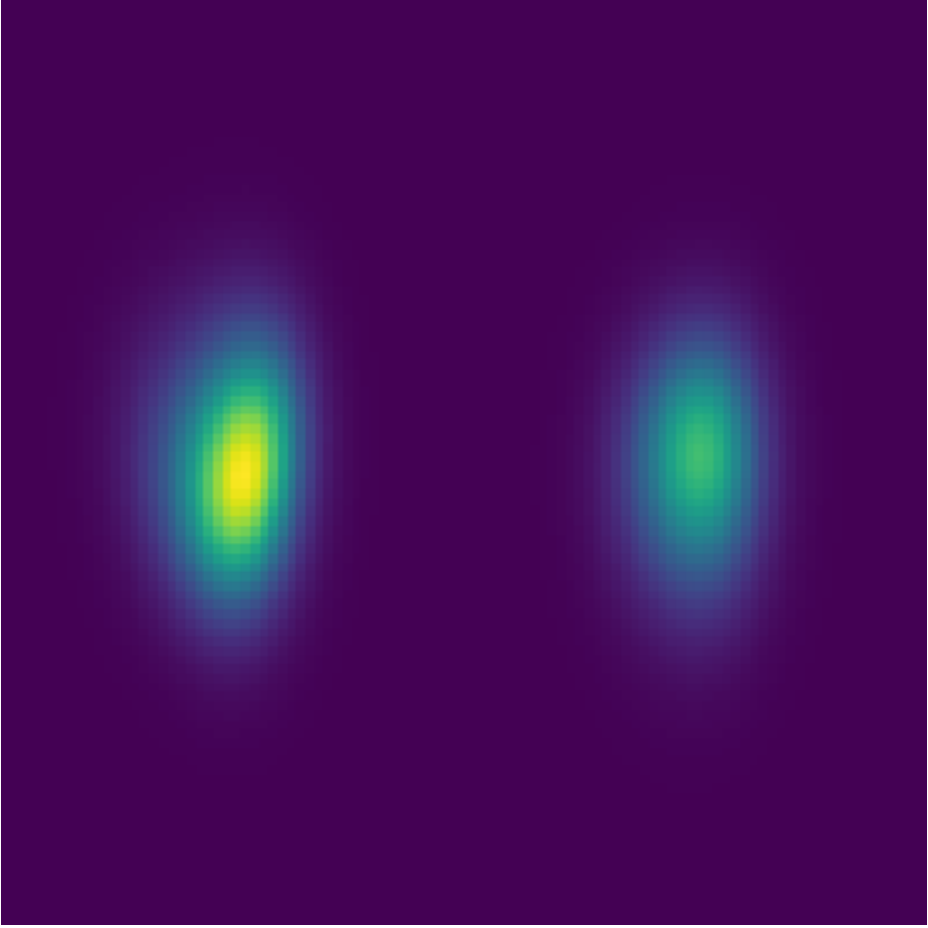}\\
  \end{tabular}
  \caption{The {\color{blue} blue} points in the figures in first row show the generated samples by learned models, and the {\color{red} red} points are the training samples. The learned $f$ are illustrated in the second row. }\label{fig:vis}
\end{figure}

In this section, we compare the proposed doubly dual embedding~(DDE) with the current state-of-the-art score matching estimators for kernel exponential family~(KEF)~\citet{SutStrArbArt17}\footnote{\url{https://github.com/karlnapf/nystrom-kexpfam}} and its conditional extension~(KCEF)~\citet{ArbGre17}\footnote{\url{https://github.com/MichaelArbel/KCEF}}, respectively, as well as several competitors. We test the proposed estimator empirically following their setting. We use Gaussian RBF kernel for both exponential family and its conditional extension, $k\rbr{x, x'} = \exp\rbr{-\nbr{x- x'}_2^2/\sigma^2}$, with the bandwidth $\sigma$ set by median-trick~\citep{DaiXieHeLiaEtAl14}. For a fair comparison, we follow~\citet{SutStrArbArt17} and~\citet{ArbGre17} to set the $p_0\rbr{x}$ for kernel exponential family and its conditional extension, respectively. The dual variables are parametrized by MLP with $5$ layers. More implementation details can be found in Appendix~\ref{appendix:impl_details} and the code repository which is available at \url{https://github.com/Hanjun-Dai/dde}.

\paragraph{Density estimation}
We evaluate the DDE on the synthetic datasets, including \texttt{ring}, \texttt{grid} and \texttt{two moons}, where the first two are used in~\citet{SutStrArbArt17}, and the last one is from~\citet{RezMoh15}. The \texttt{ring} dataset contains the points uniformly sampled along three circles with radii $\rbr{1,3,5}\in \RR^2$ and $\Ncal\rbr{0, 0.1^2}$ noise in the radial direction and extra dimensions. The $d$-dim \texttt{grid} dataset contains samples from mixture of $d$ Gaussians. Each center lies on one dimension in the $d$-dimension hypercube. The \texttt{two moons} dataset is sampled from the exponential family with potential function as $\frac{1}{2}\rbr{\frac{\nbr{x}-2}{0.4}}^2 - \log\rbr{\exp\rbr{-\frac{1}{2}\rbr{\frac{x-2}{0.6}}^2}+ \exp\rbr{-\frac{1}{2}\rbr{\frac{x+2}{0.6}}^2}}$. We use $500$ samples for training, and for testing $1500$ (\texttt{grid}) or $5000$ (\texttt{ring, two moons}) samples, following~\citet{SutStrArbArt17}.

We visualize the samples generated by the learned sampler, and compare it with the training datasets in the first row in~\figref{fig:vis}. The learned $f$ is also plotted in the second row in~\figref{fig:vis}. The DDE learned models generate samples that cover the training data, showing the ability of the DDE for estimating kernel exponential family on complicated data.  

Then, we demonstrate the convergence of the DDE in~\figref{fig:alg_conv} on \texttt{rings} dataset. We initialize with a random dual distribution. As the algorithm iterates, the distribution converges to the target true distribution, justifying the convergence guarantees. More results on $2$-dimensional \texttt{grid} and \texttt{two moons} can be found in~\figref{fig:alg_conv_more} in~\appref{appendix:more_exp}. The DDE algorithm behaves similarly on these two datasets.  
\begin{figure}[t]
\centering
  \begin{tabular}{ccc}
    \includegraphics[width=0.32\textwidth]{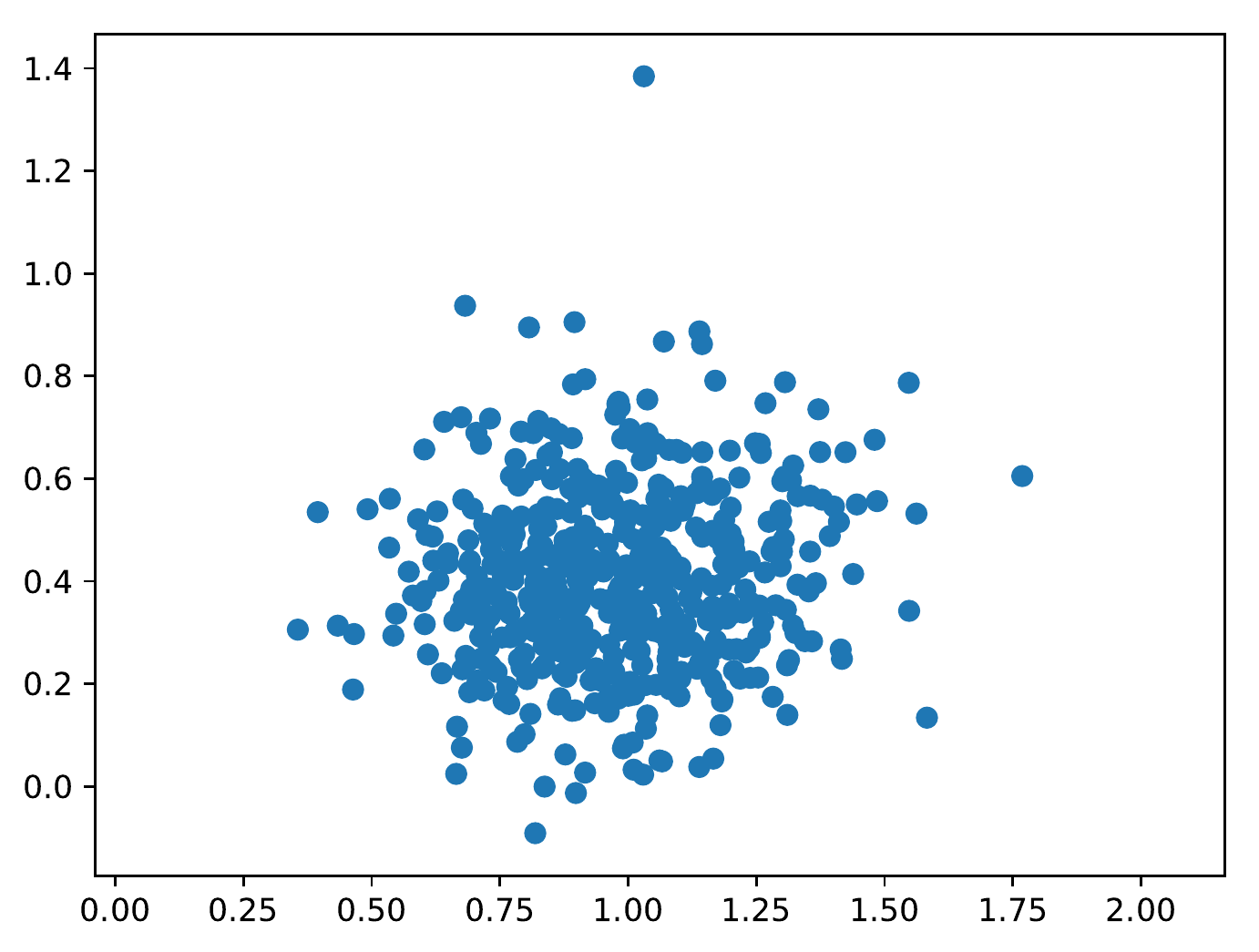}&
    \includegraphics[width=0.32\textwidth]{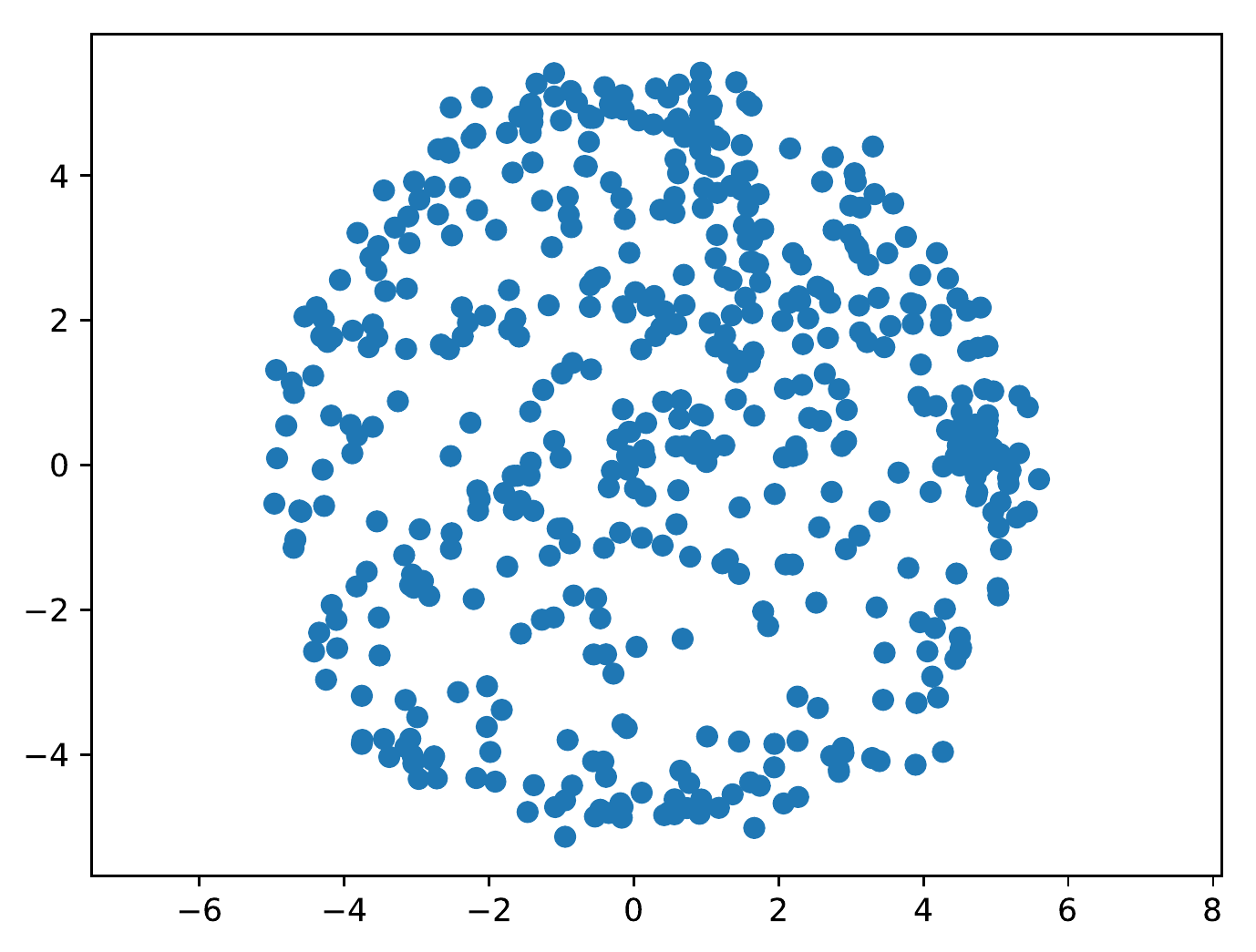}&
    \includegraphics[width=0.32\textwidth]{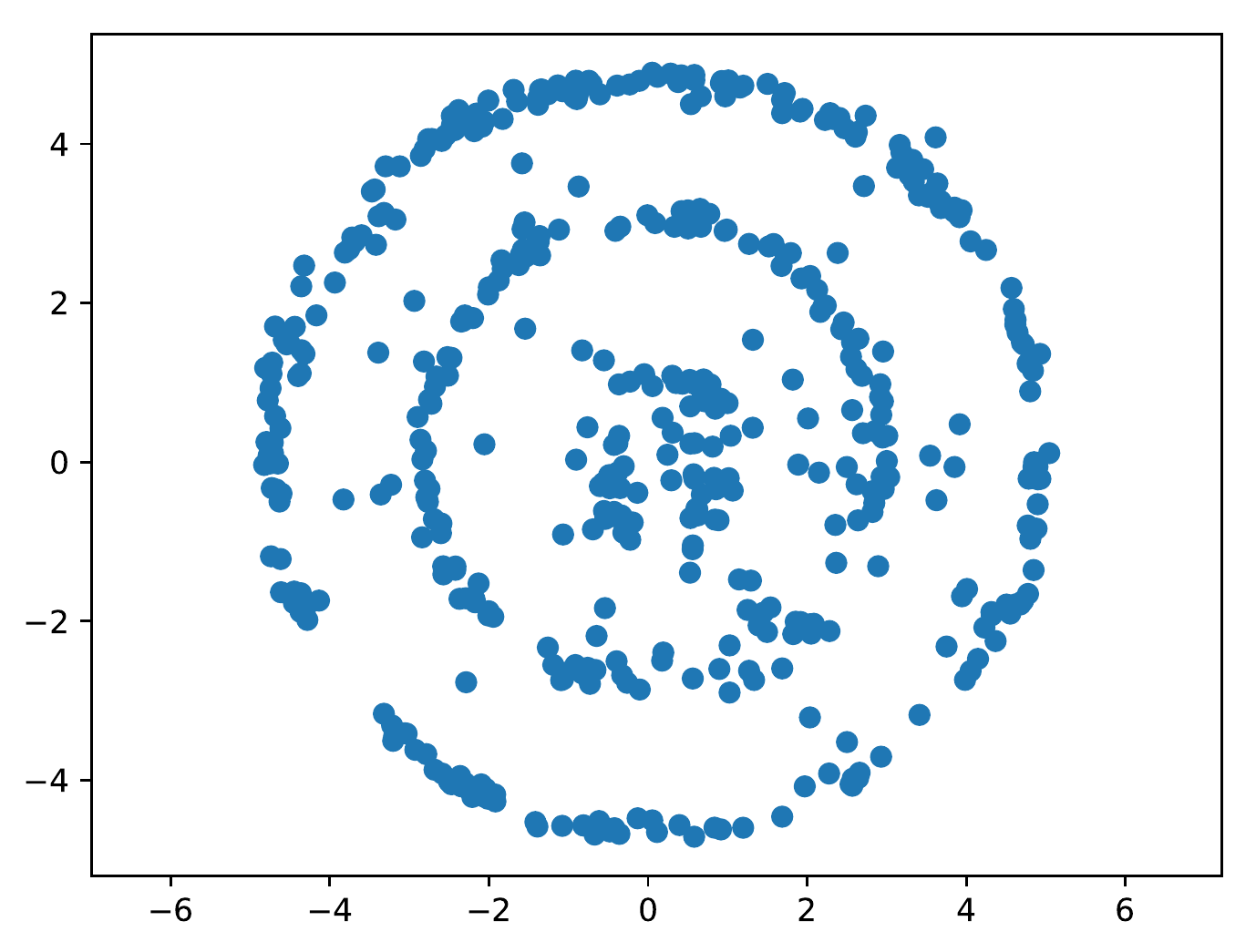}\\
    (a) initialization &(b) $500$-th   &(c) $2500$-th\\
  \end{tabular}
  \begin{tabular}{ccc}
    \includegraphics[width=0.32\textwidth]{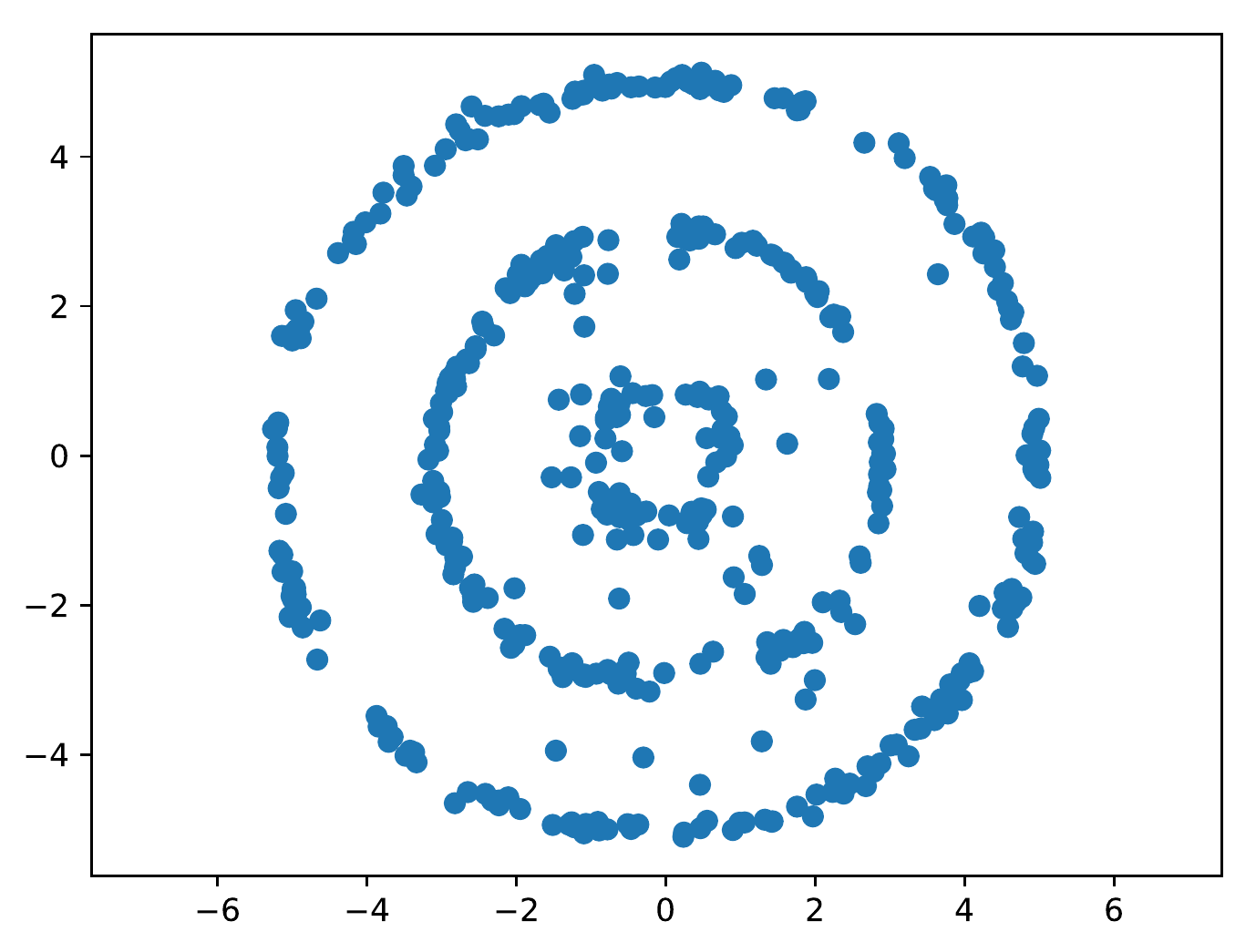}&
    \includegraphics[width=0.32\textwidth]{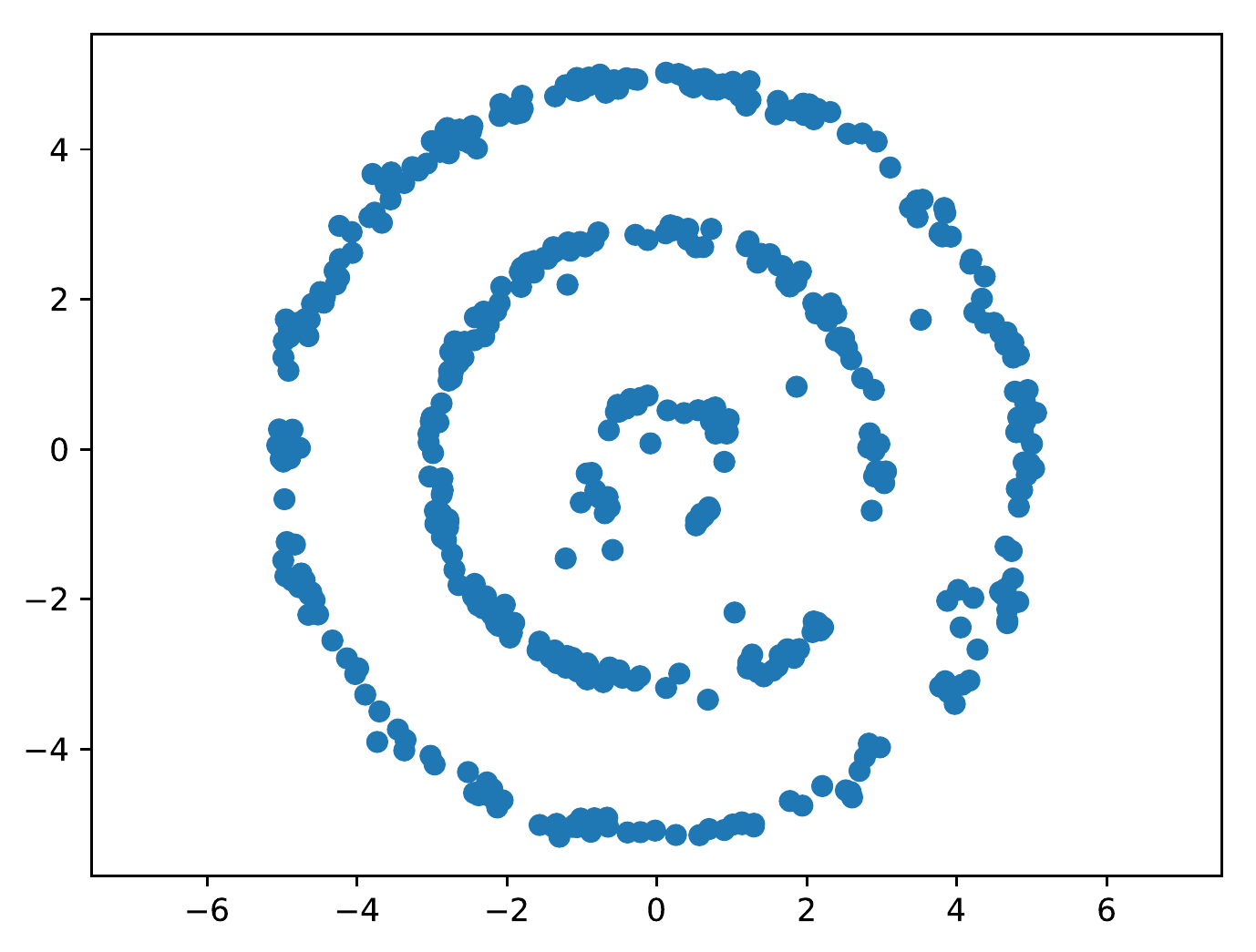}&
    \includegraphics[width=0.32\textwidth]{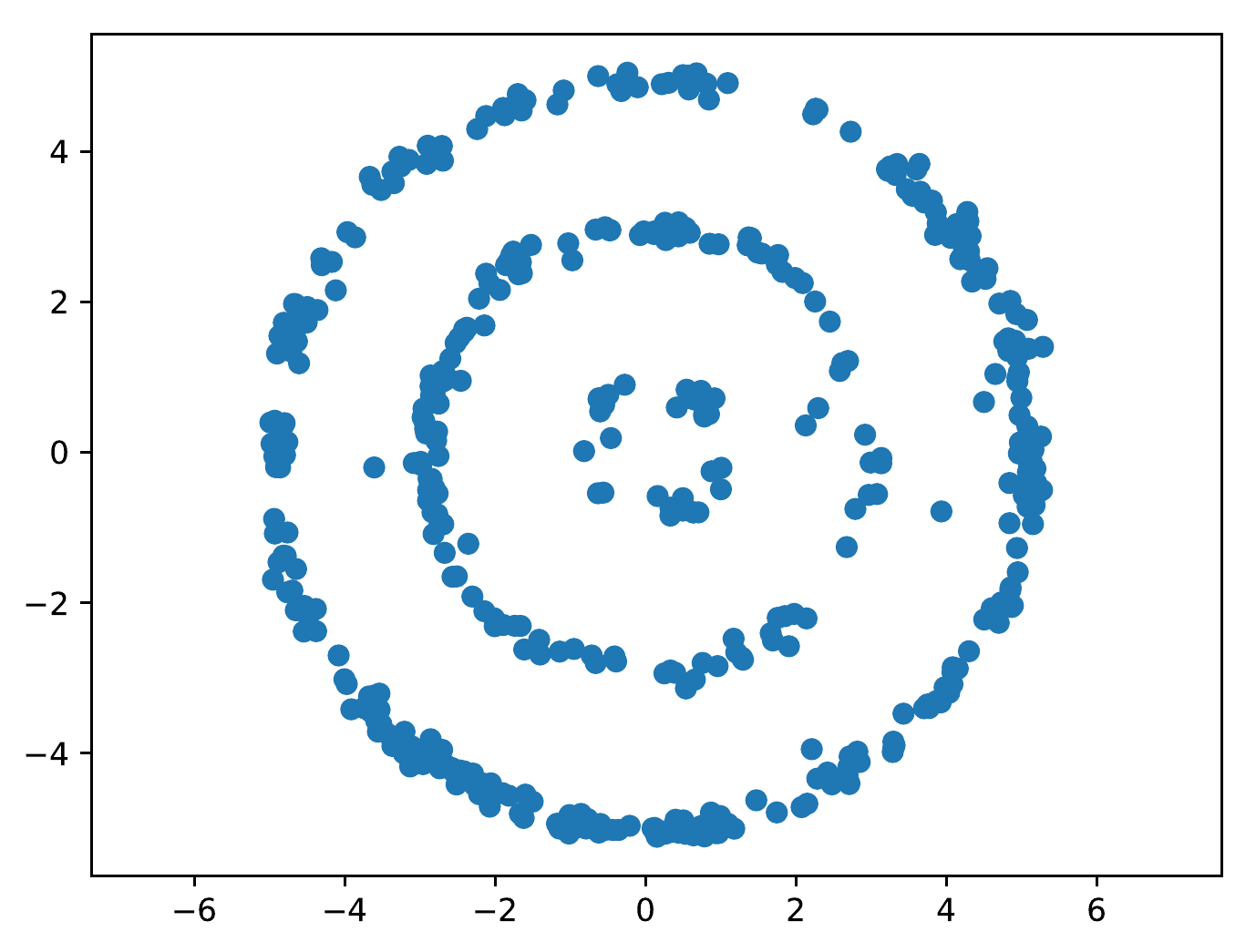}\\
    (d) $5000$-th  &(c) $10000$-th  &(d) $20000$-th \\
  \end{tabular}
  \caption{The DDE estimators on \texttt{rings} dataset in each iteration. The {\color{blue}blue} points are sampled from the learned model. With the algorithm proceeds, the learned distribution converges to the ground-truth. }
  \label{fig:alg_conv}
\end{figure}

\begin{table}[h]
\centering
\caption{Quantitative evaluation on synthetic data. MMD in $\times 10^{-3}$ scale on test set is reported.  \label{tab:mmd_synthetic}}
\begin{tabular}{ccccc}
 \toprule
  &\multicolumn{2}{c}{MMD} & \multicolumn{2}{c}{time (s)} \\
 \cmidrule(l){2-3} \cmidrule(l){4-5} 
 Datasets & DDE & KEF & DDE-sample & KEF+HMC  \\
 \cmidrule(l){1-1} \cmidrule(l){2-3} \cmidrule(l){4-5} 
 two moons & \textbf{0.11}  & 1.32 & {\bf 0.07} & 13.04\\
 ring & \textbf{1.53} & 3.19 & {\bf 0.07} & 14.12 \\
 grid & \textbf{1.32}  & 40.39 & {\bf 0.04} & 4.43 \\
 \bottomrule
\end{tabular}
\end{table}

Finally, we compare the MMD between the generated samples from the learned model to the training data with the current state-of-the-art method for KEF~\citep{SutStrArbArt17}, which performs better than KDE and other alternatives~\citep{StrSejLivSzaetal15}. We use HMC to generate the samples from the KEF learned model, while in the DDE, we can bypass the HMC step by the learned sampler. The computation time for inference is listed in~\tabref{tab:mmd_synthetic}. It shows the DDE improves the inference efficiency in orders. The MMD comparison is listed in~\tabref{tab:mmd_synthetic}. The proposed DDE estimator performs significantly better than the best performances by KEF in terms of MMD.

\paragraph{Conditional model extension.} 

In this part of the experiment, models are trained to estimate the conditional distribution $p\rbr{y|x}$ on the benchmark datasets for studying these methods~\citep{ArbGre17,SugTakSuzKanetal10}. We centered and normalized the data and randomly split the datasets into a training and a testing set with equal size, as in~\citet{ArbGre17}. We evaluate the performances by the negative $log$-likelihood. Besides the score matching based KCEF, we also compared with LS-CDE and $\epsilon$-KDE, introduced in~\citep{SugTakSuzKanetal10}. The empirical results are summarized in~\tabref{tab:r_benchmark}.

\begin{table}[h]
\centering
\caption{The negative log-likelihood comparison on benchmarks. Mean and std are calculated on 20 runs of different train/test splits. KCEF gets numerically unstable on two datasets, where we mark as N/A.
}\label{tab:r_benchmark}%
\begin{tabular}{ccccc}
 \toprule
 & DDE & KCEF & $\epsilon$-KDE & LSCDE  \\
 \midrule
 geyser & {\bf 0.55 $\pm$ 0.07} & 1.21 $\pm$ 0.04 & 1.11 $\pm$ 0.02 & 0.7 $\pm$ 0.01 \\
 caution & {\bf 0.95 $\pm$ 0.19} & {\bf 0.99 $\pm$ 0.01} & 1.25 $\pm$ 0.19 & 1.19 $\pm$ 0.02\\
 ftcollinssnow & {\bf 1.49 $\pm$ 0.14} & {\bf 1.46 $\pm$  0.0} & 1.53 $\pm$ 0.05 & 1.56 $\pm$ 0.01 \\
 highway &  {\bf 1.18 $\pm$ 0.30} & {\bf 1.17 $\pm$ 0.01} & 2.24 $\pm$ 0.64  & 1.98 $\pm$ 0.04 \\
 snowgeese & {\bf 0.42 $\pm$ 0.31} & {\bf 0.72 $\pm$ 0.02} & 1.35 $\pm$ 0.17 & 1.39 $\pm$ 0.05 \\
 GAGurine & {\bf 0.43 $\pm$ 0.15} & {\bf 0.46 $\pm$ 0.0} & 0.92 $\pm$ 0.05 & 0.7 $\pm$ 0.01 \\
 topo & 1.02 $\pm$ 0.31 & {\bf 0.67 $\pm$ 0.01} & 1.18 $\pm$ 0.09 & 0.83 $\pm$ 0.0 \\
 CobarOre & {\bf 1.45 $\pm$ 0.23} & 3.42 $\pm$ 0.03 & {\bf 1.65 $\pm$ 0.09} & {\bf 1.61 $\pm$ 0.02} \\
 mcycle & {\bf 0.60 $\pm$ 0.15} &  {\bf 0.56 $\pm$ 0.01} & 1.25 $\pm$ 0.23 & 0.93 $\pm$ 0.01 \\
 BigMac2003 & {\bf 0.47 $\pm$ 0.36} & {\bf 0.59 $\pm$  0.01} & 1.29 $\pm$  0.14 & 1.63 $\pm$  0.03 \\
 cpus & {\bf -0.63 $\pm$ 0.77} & N/A & 1.01 $\pm$ 0.10 & 1.04 $\pm$ 0.07 \\
 crabs & {\bf -0.60 $\pm$ 0.26} & N/A & 0.99 $\pm$ 0.09 & -0.07 $\pm$ 0.11 \\
 birthwt & {\bf 1.22 $\pm$ 0.15} & {\bf 1.18 $\pm$ 0.13} & 1.48 $\pm$ 0.01 & 1.43 $\pm$ 0.01 \\
 gilgais & {\bf 0.61 $\pm$ 0.10} & {\bf 0.65 $\pm$ 0.08} & 1.35 $\pm$ 0.03 & 0.73 $\pm$ 0.05\\
 UN3 & {\bf 1.03 $\pm$ 0.09} & 1.15 $\pm$ 0.21 & 1.78 $\pm$ 0.14 & 1.42 $\pm$ 0.12  \\
 ufc & {\bf 1.03 $\pm$ 0.10} & {\bf 0.96 $\pm$ 0.14} & 1.40 $\pm$ 0.02 & {\bf 1.03 $\pm$ 0.01}\\
 \bottomrule
\end{tabular}
\end{table}

Although these datasets are low-dimensional with few samples and the KCEF uses the anisotropic RBF kernel~(\ie, different bandwidth in each dimension, making the experiments preferable to the KCEF), the proposed DDE still outperforms the competitors on six datasets significantly, and achieves comparable performance on the rest, even though it uses a simple isotropic RBF kernel. This further demonstrates the statistical power of the proposed DDE, comparing to the score matching estimator.

\section{Conclusion}\label{sec:conclusion}
In this paper, we exploit the \emph{doubly dual embedding} to reformulate the penalized MLE to a novel saddle-point optimization, which bypasses the intractable integration and provides flexibility in parameterizing the dual distribution. The saddle point view reveals a unique understanding of GANs and leads to a practical algorithm, which achieves state-of-the-art performance. We also establish the statistical consistency and algorithm convergence guarantee for the proposed algorithm. Although the transport mapping parametrization is flexible enough, it requires extra optimization for the $KL$-divergence estimation. For the future work, we will exploit dynamic-based sampling methods to design new parametrization, which shares both flexibility and density tractability.

\subsubsection*{Acknowledgements}
We thank Michael Arbel and the anonymous reviewers of AISTASTS 2019 for their insightful comments and suggestions. NH is supported in part by NSF-CRII-1755829, NSF-CMMI-1761699, and NCSA Faculty Fellowship. 


\clearpage
\newpage

\appendix

\begin{appendix}

\begin{center}
{\huge Appendix}
\end{center}

\section{Proof Details}\label{appendix:proof_details}

\subsection{Proof of~\thmref{thm:dual_gradient}}\label{appendix:subsec:dual_grad}

{\bf\thmref{thm:dual_gradient} (Dual gradient)}
\emph{Denoted as $\rbr{f^*, \nu^*} = \argmax_{\rbr{f, \nu}\in\Hcal} \tilde \ell\rbr{f, \nu, w_g}$ and $\Lhat\rbr{w_g} = \tilde\ell\rbr{f^*, \nu^*, w_g}$, we have 
\begin{eqnarray*}
\nabla_{w_g} \Lhat\rbr{w_g} = -\EE_{\xi}\sbr{\nabla_{w_g} f^*\rbr{g_{w_g}\rbr{\xi}}} + \frac{1}{\lambda}\EE_{\xi}\sbr{\nabla_{w_g} \nu^*\rbr{g_{w_g}\rbr{\xi}}}.\nonumber
\end{eqnarray*}
}

\begin{proof}
The conclusion can be proved by chain rule and the optimality conditions. 

Specifically, notice that the $\rbr{f_{w_g}^*, \nu_{w_g}^*}$ are implicit functions of $w_g$, we can calculate the gradient of $\Lhat\rbr{w_g}$ w.r.t. $w_g$
\begin{eqnarray*}
\nabla_{w_g}\Lhat\rbr{w_g} &=& \widehat\EE_{\Dcal}\sbr{\nabla_f f_{w_g}^*\nabla_{w_g}f^*_{w_g}} - \EE_\xi\sbr{\nabla_{g}f\rbr{g\rbr{\xi}}\nabla_{w_g} g}- \EE_q\sbr{\nabla_f f_{w_g}^*\nabla_{w_g}f^*_{w_g}} -\frac{\eta}{2}\nabla_{f}\nbr{f_{w_g}^*}_{\Hcal}^2\nabla_{w_g}f_{w_g}^*\\
&& +\frac{1}{\lambda }\rbr{\EE_{\xi}\sbr{\nabla_g\nu_{w_g}^*\rbr{g\rbr{\xi}}\nabla_{w_g}g} + \EE_q\sbr{\nabla_{\nu}\nu_{w_g}^*\nabla_{w_g}\nu^*_{w_g}} - \EE_{p_0}\sbr{\exp\rbr{\nu_{w_g}^*}\nabla_{\nu}\nu_{w_g}^*\nabla_{w_g}\nu_{w_g}^*}}\\
&=& \underbrace{\rbr{\widehat\EE_{\Dcal}\sbr{\nabla_f f_{w_g}^*} - \EE_q\sbr{\nabla_f f_{w_g}^*} -\frac{\eta}{2}\nabla_{f}\nbr{f_{w_g}^*}_{\Hcal}^2}}_{0}\nabla_{w_g}f_{w_g}^*- \EE_\xi\sbr{\nabla_{g}f\rbr{g\rbr{\xi}}\nabla_{w_g} g}\\
&& +\frac{1}{\lambda }\EE_{\xi}\sbr{\nabla_g\nu_{w_g}^*\rbr{g\rbr{\xi}}\nabla_{w_g}g} + \frac{1}{\lambda}\underbrace{\rbr{\EE_q\sbr{\nabla_{\nu}\nu_{w_g}^*} - \EE_{p_0}\sbr{\exp\rbr{\nu_{w_g}^*}\nabla_{\nu}\nu_{w_g}^*}}}_{0}\nabla_{w_g}\nu_{w_g}^*\\
&=&-\EE_{\xi}\sbr{\nabla_{w_g} f^*\rbr{g\rbr{\xi}}} + \frac{1}{\lambda}\EE_{\xi}\sbr{\nabla_{w_g} \nu^*\rbr{g_{w_g}\rbr{\xi}}},
\end{eqnarray*} 
where the second equations come from the fact $\rbr{f_{w_g}^*, \nu_{w_g}^*}$ are optimal and $\rbr{\nabla_{w_g}f_{w_g}^*, \nabla_{w_g}\nu_{w_g}^*}$ are not functions of $\rbr{x, \xi, x'}$. 

\end{proof}

\subsection{Proof of~\thmref{thm:consistency}}\label{appendix:subsec:consistency}
The proof of~\thmref{thm:consistency} mainly follows the technique in~\citet{GuQiu93} with extra consideration of the approximation error from the dual embedding. 

We first define some notations that will be used in the proof. We denote $\langle f, g \rangle_{p} = \int_\Omega f\rbr{x}g\rbr{x}p\rbr{x}dx$, which induces the norm denoted as $\nbr{\cdot}_p^2$. We introduce $\htil$ as the maximizer to $\Ltil\rbr{h}$ defined as
$$
\Ltil\rbr{h} \defeq \widehat\EE_\Dcal\sbr{h} - \EE_{p^*}\sbr{h} - \frac{1}{2}\nbr{h - f^*}_{p^*}^2 - \frac{\eta}{2}\nbr{h}_\Hcal^2.
$$
The proof relies on decomposing the error into two parts: {\bf i)} the error between $\htil$ and $f^*$; and {\bf ii)} the error between $\ftil$ and $\htil$.

By Mercer decomposition~\citep{Konig86}, we can expand $k\rbr{\cdot, \cdot}$ as 
$$
k\rbr{x, x'} = \sum_{l=1}^\infty \zeta_l \psi_l\rbr{x}\psi_l\rbr{x'}, 
$$
With the eigen-decomposition, we can rewrite function $f\in \Hcal$ as $f\rbr{\cdot} = \sum_{l=1}^\infty \langle f, \psi_l \rangle_{p^*} \psi\rbr{\cdot}$. Then, we have
$\nbr{f}_{\Hcal}^2 = \sum_{l=1}^\infty \zeta_l^{-1}\langle f, \psi_l \rangle_{p^*}^2$ and $\nbr{f}_{p^*}^2 = \sum_{l=1}^\infty \langle f, \psi_l \rangle_{p^*}^2$. 

We make the following standard assumptions:
\begin{assumption}\label{asmp:kernel_property}
There exists $\kappa >0$ such that $k\rbr{x, x'}\le \kappa$, $\forall x, x'\in\Omega$. 
\end{assumption}
\begin{assumption}\label{asmp:exp_decay}
The eigenvalues of the kernel $k\rbr{\cdot, \cdot}$ decay sufficiently homogeneously with rate $r$, \ie, $\zeta_l = \Ocal\rbr{l^{-r}}$ where $r>1$. 
\end{assumption}
\begin{assumption}\label{asmp:prior}
There exists a distribution $p_0$ on the support $\Omega$ which is uniformly upper and lower bounded. 
\end{assumption}

\begin{lemma}\label{lemma:bounded_pf}
Under~\asmpref{asmp:prior}, $\forall f\in\Hcal_k$, $p_{f}\rbr{x} = \frac{\exp\rbr{f\rbr{x} - \log p_0\rbr{x}}}{\int_\Omega \exp\rbr{f\rbr{x} - \log p_0\rbr{x}} p_0\rbr{x}}p_0\rbr{x}$, we have $2\exp\rbr{-\kappa C_{\Hcal} - C_0}\le p^*\rbr{x}\le 2\exp\rbr{\kappa C_{\Hcal} + C_0}$.
\end{lemma}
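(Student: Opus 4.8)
The plan is to turn the two boundedness hypotheses---\asmpref{asmp:kernel_property} on the kernel and \asmpref{asmp:prior} on the base density---into uniform two-sided bounds on the potential and on $\log p_0$, and then to propagate them through the exponential-family formula while controlling the normalizing constant separately. First I would bound the potential uniformly: by the reproducing property $f(x)=\langle f,k(x,\cdot)\rangle_{\Hcal}$ and Cauchy--Schwarz, $|f(x)|\le \|f\|_{\Hcal}\sqrt{k(x,x)}$, and since \asmpref{asmp:kernel_property} bounds the kernel by $\kappa$, setting $C_{\Hcal}\defeq\|f\|_{\Hcal}$ gives $\sup_{x\in\Omega}|f(x)|\le \kappa C_{\Hcal}$ (the $\sqrt{\cdot}$ being absorbed into the normalization of $\kappa$). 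Hence $\exp(-\kappa C_{\Hcal})\le \exp(f(x))\le\exp(\kappa C_{\Hcal})$ pointwise.

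Next, \asmpref{asmp:prior} states that $p_0$ is uniformly bounded above and below on its support, so there is a constant $C_0>0$ with $\exp(-C_0)\le p_0(x)\le\exp(C_0)$, i.e.\ $|\log p_0(x)|\le C_0$. I would then control the normalizer $Z\defeq\int_\Omega \exp(f(x))p_0(x)\,dx$: because $p_0$ integrates to one and $\exp(f)$ is squeezed between $\exp(\mp\kappa C_{\Hcal})$ by the first step, it follows that $\exp(-\kappa C_{\Hcal})\le Z\le\exp(\kappa C_{\Hcal})$, so $Z$ is bounded away from both $0$ and $\infty$.

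Combining these, I write $p_f(x)=p_0(x)\exp(f(x))/Z$ and multiply the extreme bounds: the numerator $p_0(x)\exp(f(x))$ lies in $[\exp(-\kappa C_{\Hcal}-C_0),\exp(\kappa C_{\Hcal}+C_0)]$, while $Z$ contributes a further factor between $\exp(-\kappa C_{\Hcal})$ and $\exp(\kappa C_{\Hcal})$, yielding a two-sided bound of the claimed exponential form, with the leading numerical constant (here $2$) collecting the residual factors from the normalization conventions. Specializing to $f=f^*\in\Hcal_k$ then gives the stated bounds on $p^*=p_{f^*}$.

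The argument is mostly bookkeeping; the two load-bearing steps are the passage from an $\Hcal$-norm bound to a uniform sup-norm bound on $f$ (using the reproducing kernel together with \asmpref{asmp:kernel_property}) and the lower bound on the partition function $Z$ via $\int_\Omega p_0=1$. I expect the only real nuisance to be matching the precise constants ($\sqrt{\kappa}$ versus $\kappa$, and the leading factor $2$), which depend solely on the normalization of $\kappa$ and the convention fixing $p_0$, and can be absorbed into $C_{\Hcal}$ and $C_0$.
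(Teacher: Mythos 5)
Your proof is correct and follows essentially the same route as the paper's, which is little more than a one-line assertion of the conclusion from $\nbr{f}_{\Hcal}\le C_{\Hcal}$, the kernel bound, and $\nbr{\log p_0}_\infty\le C_0$; you simply make explicit the three steps the paper leaves implicit (sup-norm control of $f$ via the reproducing property and Cauchy--Schwarz, the two-sided bound on $p_0$, and the squeeze on the normalizer using $\int_\Omega p_0=1$). Your remarks on the constants are apt: Cauchy--Schwarz actually yields $\sqrt{\kappa}\,C_{\Hcal}$ rather than $\kappa C_{\Hcal}$, and combining the numerator and normalizer bounds doubles the exponent rather than producing the multiplicative prefactor $2$ in the stated inequality, so the paper's constants should be read up to such normalization conventions exactly as you describe.
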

\begin{proof}
For $\forall f\in \Hcal_k$ and the~\asmpref{asmp:prior}, $p_{f}\rbr{x} = \frac{\exp\rbr{f\rbr{x} - \log p_0\rbr{x}}}{\int_\Omega \exp\rbr{f\rbr{x} - \log p_0\rbr{x}} p_0\rbr{x}}p_0\rbr{x} $ with $f\in\Hcal_k$ and $\nbr{f}_{\Hcal}\le C_{\Hcal}$ and $\nbr{\log p_0\rbr{x}}_\infty \le C_0$, implies $2\exp\rbr{-\kappa C_{\Hcal} - C_0}\le p^*\rbr{x}\le 2\exp\rbr{\kappa C_{\Hcal} + C_0}$. 
\end{proof}

Therefore, it is reasonable to consider the parametrization of the dual distribution:
\begin{assumption}\label{asmp:q_param}
The parametric family of dual distribution in~\eqref{eq:double_dual_mle} is bounded above from infinity, \ie, $\forall q\rbr{x}\in\Pcal_w$, $q\rbr{x}\le C_{\Pcal_w}<\infty$, $\forall x\in \Omega$. 
\end{assumption}

To prove the~\thmref{thm:consistency}, we first show the error between $\htil$ and $f^*$ under~\asmpref{asmp:kernel_property}, \ref{asmp:exp_decay},~\ref{asmp:prior}, and~\ref{asmp:q_param}. 
\begin{lemma}\label{lemma:intermediate_I}
Under~\asmpref{asmp:exp_decay}, we have 
\begin{eqnarray*}
\EE\sbr{\nbr{\htil - f^*}^2_{p^*}} = \Ocal\rbr{n^{-1}\eta^{-\frac{1}{r}} + \eta},\\
\eta\EE\sbr{\nbr{\htil - f^*}^2_{\Hcal}} = \Ocal\rbr{n^{-1}\eta^{-\frac{1}{r}} + \eta}. 
\end{eqnarray*} 
\end{lemma}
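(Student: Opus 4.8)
The plan is to exploit the fact that $\Ltil$ is a strongly concave \emph{quadratic} functional on $\Hcal$, so that $\htil$ has a closed form amenable to spectral calculus on the covariance operator. First I would express every term through the RKHS inner product: writing $\mu\defeq\EE_{p^*}\sbr{k(x,\cdot)}$ and $\hat\mu\defeq\widehat\EE_\Dcal\sbr{k(x,\cdot)}$ for the population and empirical mean embeddings, and $C\defeq\EE_{p^*}\sbr{k(x,\cdot)\otimes k(x,\cdot)}$ for the (uncentred) covariance operator (trace-class by the bounded-kernel assumption), the reproducing property gives $\widehat\EE_\Dcal\sbr{h}-\EE_{p^*}\sbr{h}=\inner{h}{\hat\mu-\mu}_\Hcal$ and $\nbr{h-f^*}_{p^*}^2=\inner{h-f^*}{C(h-f^*)}_\Hcal$, so that $\Ltil(h)=\inner{h}{\hat\mu-\mu}_\Hcal-\tfrac12\inner{h-f^*}{C(h-f^*)}_\Hcal-\tfrac{\eta}{2}\nbr{h}_\Hcal^2$. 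Setting the functional gradient to zero yields the normal equation $(C+\eta I)\htil=\hat\mu-\mu+Cf^*$, hence the closed form $\htil=(C+\eta I)^{-1}(\hat\mu-\mu+Cf^*)$.

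The second step is a bias--variance split. A short rearrangement gives
$$\htil-f^*=(C+\eta I)^{-1}(\hat\mu-\mu)-\eta(C+\eta I)^{-1}f^*,$$
whose first summand I call the stochastic (variance) term and whose second I call the approximation (bias) term. I would bound the two target quantities $\nbr{\htil-f^*}_{p^*}^2=\nbr{C^{1/2}(\htil-f^*)}_\Hcal^2$ and $\eta\nbr{\htil-f^*}_\Hcal^2$ by $\nbr{a+b}^2\le 2\nbr{a}^2+2\nbr{b}^2$, then treat each summand separately through the Mercer spectral representation $C=\sum_l\zeta_l\,\psi_l\otimes\psi_l$ already introduced, under which every operator above acts as a scalar multiplier $g(\zeta_l)$ on the $l$-th eigendirection.

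The bias terms are deterministic and dispatched by operator-norm bounds: from $\sup_{\zeta\ge 0}\eta\sqrt\zeta/(\zeta+\eta)\le\sqrt\eta/2$ (AM--GM) one gets $\nbr{\eta C^{1/2}(C+\eta I)^{-1}}\le\tfrac12\sqrt\eta$ and trivially $\nbr{\eta(C+\eta I)^{-1}}\le 1$, so both bias contributions are $\Ocal(\eta)$ once $\nbr{f^*}_\Hcal<\infty$. The variance terms I would control in expectation: since the $x_i$ are i.i.d.\ from $p^*$ with the kernel bounded, for any fixed operator $A$ one has $\EE\sbr{\nbr{A(\hat\mu-\mu)}_\Hcal^2}=\tfrac1n\,\mathrm{tr}(A\Sigma A^*)\le\tfrac1n\,\mathrm{tr}(ACA^*)$, where $\Sigma\preceq C$ is the centred covariance. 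Choosing $A=C^{1/2}(C+\eta I)^{-1}$ and $A=(C+\eta I)^{-1}$ reduces the two variance bounds to the spectral sums $\tfrac1n\sum_l \zeta_l^2/(\zeta_l+\eta)^2$ and $\tfrac{\eta}{n}\sum_l \zeta_l/(\zeta_l+\eta)^2$ respectively.

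The hard part --- and the only place \asmpref{asmp:exp_decay} is used --- is evaluating these spectral sums under $\zeta_l=\Ocal(l^{-r})$. I would split each sum at the effective dimension $l\asymp\eta^{-1/r}$, where $\zeta_l\approx\eta$: on the low-frequency block ($\zeta_l\ge\eta$) and the high-frequency block ($\zeta_l<\eta$) I bound the summands by $\min(1,\zeta_l^2/\eta^2)$ and by $\min(1/\zeta_l,\zeta_l/\eta^2)$ respectively, then compare the resulting series with integrals. Both blocks contribute $\Ocal(\eta^{-1/r})$ after multiplying by the prefactors, yielding $\EE\sbr{\nbr{\htil-f^*}_{p^*}^2}=\Ocal(n^{-1}\eta^{-1/r}+\eta)$ and $\eta\EE\sbr{\nbr{\htil-f^*}_\Hcal^2}=\Ocal(n^{-1}\eta^{-1/r}+\eta)$, as claimed. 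The delicate bookkeeping lies entirely in matching the exponents in these two tail estimates; everything else is routine Hilbert-space manipulation.
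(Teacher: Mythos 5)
Your proposal is correct and is essentially the paper's own argument in operator rather than coordinate form: the normal equation $\htil=(C+\eta I)^{-1}(\hat\mu-\mu+Cf^*)$ is exactly the paper's coordinatewise solution $\htil_l=(f^*_l+\alpha_l)/(1+\eta\zeta_l^{-1})$, and your two spectral sums $\tfrac1n\sum_l\zeta_l^2/(\zeta_l+\eta)^2$ and $\tfrac{\eta}{n}\sum_l\zeta_l/(\zeta_l+\eta)^2$ coincide with the quantities the paper bounds by $\Ocal(\eta^{-1/r})$ via the same split at $l\asymp\eta^{-1/r}$. The only cosmetic differences are your use of $\nbr{a+b}^2\le 2\nbr{a}^2+2\nbr{b}^2$ where the paper expands the square exactly and lets the cross term vanish by $\EE[\alpha_l]=0$, and your explicit operator-norm treatment of the bias terms.
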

\begin{proof}
Denote the $\htil\rbr{\cdot} = \sum_{l}^\infty \underbrace{\langle \htil, \psi_l \rangle_{p^*}}_{\htil_l}\psi_l\rbr{\cdot}$ and $f^*\rbr{\cdot} = \sum_{l}^\infty \underbrace{\langle f^*, \psi_l \rangle_{p^*}}_{f^*_l}\psi_l\rbr{\cdot}$, then, we can rewrite the $\Ltil\rbr{h}$ as 
$$
\Ltil\rbr{h} = \sum_{l=1}^\infty h_l\sbr{\widehat\EE\sbr{\psi_l\rbr{x}} -\EE_{p^*}\sbr{\psi_l\rbr{x}}} -\frac{1}{2}\sum_{l=1}^\infty \rbr{h_l - f^*_l}^2 - \frac{\eta}{2}\sum_{l=1}^\infty\zeta_l^{-1}h_l^2.
$$
Setting the derivative of $\Ltil\rbr{h}$ w.r.t. $\sbr{h_l}$ equal to zero, we obtain the representation of $\htil_l$ as
$$
\htil_l = \frac{f^*_l + \alpha_l}{1 + \eta\zeta_l^{-1}},
$$
where $\alpha_l = \widehat\EE\sbr{\psi_l\rbr{x}} - \EE_{p^*}\sbr{\psi_l\rbr{x}}$. Then, we have 
\begin{eqnarray*}
\nbr{\htil - f^*}_{p^*}^2 = \sum_{l=1}^\infty \rbr{\htil_l - f^*_l}^2 = \sum_{l=1}^\infty \frac{\alpha_l^2 - 2\alpha_l\eta\zeta_l^{-1}f^*_l + \eta^2\zeta_l^{-2}\rbr{f_l^*}^2}{\rbr{1 + \eta\zeta_l^{-1}}^2},\\ 
\eta\nbr{\htil - f^*}_{\Hcal}^2 = \eta\sum_{l=1}^\infty \zeta_l^{-1}\rbr{\htil_l - f^*_l}^2 = \sum_{l=1}^\infty \eta\zeta_l^{-1}\frac{\alpha_l^2 - 2\alpha_l\eta\zeta_l^{-1}f^*_l + \eta^2\zeta_l^{-2}\rbr{f_l^*}^2}{\rbr{1 + \eta\zeta_l^{-1}}^2}. 
\end{eqnarray*}
Recall that $\EE\sbr{a_l} = 0$ and $\EE\sbr{a_l^2}=\frac{1}{n}$, then we have
\begin{eqnarray}\label{eq:expectation_expression}
\EE\sbr{\nbr{\htil - f^*}_{p^*}^2} = \frac{1}{n}\sum_{l=1}^\infty \frac{1}{\rbr{1 + \eta\zeta_l^{-1}}^2} + \eta\sum_{l=1}^\infty \frac{\eta\zeta_l^{-1}}{\rbr{1 + \eta\zeta_l^{-1}}^2}\cdot\zeta_t^{-1}\rbr{f_l^*}^2,\\ 
\EE\sbr{\eta\nbr{\htil - f^*}_{\Hcal}^2} = \frac{1}{n}\sum_{l=1}^\infty \frac{\eta\zeta_l^{-1}}{\rbr{1 + \eta\zeta_l^{-1}}^2} + \eta\sum_{l=1}^\infty\frac{\eta^2\zeta_l^{-2}}{\rbr{1 + \eta\zeta_l^{-1}}^2}\cdot\zeta_l^{-1}\rbr{f_l^*}^2. 
\end{eqnarray}
By calculation, we obtain that
\begin{eqnarray}\label{eq:order_I}
\sum_{l=1}^\infty \frac{\eta\zeta_l^{-1}}{\rbr{1 + \eta\zeta_l^{-1}}^2} &=& \sum_{l<\eta^{-\frac{1}{r}}} \frac{\eta\zeta_l^{-1}}{\rbr{1 + \eta\zeta_l^{-1}}^2} + \sum_{l\ge \eta^{-\frac{1}{r}}} \frac{\eta\zeta_l^{-1}}{\rbr{1 + \eta\zeta_l^{-1}}^2} \nonumber\\
&=& \Ocal\rbr{\eta^{-\frac{1}{r}}} +\Ocal\rbr{\int_{\eta^{-\frac{1}{r}}}^\infty \frac{\eta t^r}{\rbr{1+\eta t^r}^2}dt} \nonumber\\
& = & \Ocal\rbr{\eta^{-\frac{1}{r}}} + \eta^{-\frac{1}{r}}\Ocal\rbr{\int_{1}^\infty \frac{ t^r}{\rbr{1+t^r}^2}dt} = \Ocal\rbr{\eta^{-\frac{1}{r}}}.
\end{eqnarray}
Similarly, we can achieve
\begin{eqnarray}\label{eq:order_II}
\sum_{l=1}^\infty \frac{1}{\rbr{1 + \eta\zeta_l^{-1}}^2} &=& \Ocal\rbr{\eta^{-\frac{1}{r}}}, \\
\sum_{l=1}^\infty \frac{1}{{1 + \eta\zeta_l^{-1}}} &=& \Ocal\rbr{\eta^{-\frac{1}{r}}}.
\end{eqnarray}
Note also that $\sum_{l=1}^\infty \zeta_t^{-1}(f_l^*)^2=\|f^*\|_\Hcal^2<\infty$. Hence, the second term in \eqref{eq:expectation_expression} is also finite. 
Plugging~\eqref{eq:order_I} and~\eqref{eq:order_II} into~\eqref{eq:expectation_expression}, we achieve the conclusion. 
\end{proof}

Next, we characterize the approximation error due to parametrization in $L_2$ norm,
\begin{lemma}\label{lemma:kl_to_l2}
Under~\asmpref{asmp:q_param}, $\forall q\in \Pcal_w$ and $f\in \Hcal_k$, we have
$
\nbr{p_{f} - q}_2^2 \le 4\exp\rbr{2\kappa C_{\Hcal} + 2C_0}C_{\Pcal_w} KL\rbr{q||p_{f}}.
$
\end{lemma}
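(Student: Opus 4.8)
The plan is to prove this as a reverse-Pinsker-type inequality, exploiting the fact that the likelihood ratio $q/p_f$ is uniformly bounded. First I would rewrite the divergence in its $\phi$-divergence form $KL\rbr{q||p_f} = \int_\Omega p_f\rbr{x}\,\phi\rbr{q\rbr{x}/p_f\rbr{x}}dx$ with $\phi\rbr{t}\defeq t\log t - t + 1\ge 0$, where the extra $-t+1$ integrates to zero because $p_f$ and $q$ are both densities. The purpose of this form is to expose a pointwise integrand that can be lower bounded by a quadratic.

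The key estimate is a quadratic lower bound $\phi\rbr{t}\ge \rbr{t-1}^2/\rbr{2R}$ valid for all $t\in\sbr{0,R}$, where $R$ is an upper bound on $q/p_f$. This follows from Taylor's theorem with integral remainder: since $\phi\rbr{1}=\phi'\rbr{1}=0$ and $\phi''\rbr{s}=1/s\ge 1/R$ for $s\in\sbr{0,R}$, one has $\phi\rbr{t}=\int_1^t\rbr{t-s}\phi''\rbr{s}ds\ge \rbr{t-1}^2/\rbr{2R}$. To supply the ratio bound I invoke the two hypotheses: by Assumption~\ref{asmp:q_param}, $q\rbr{x}\le C_{\Pcal_w}$, and by Lemma~\ref{lemma:bounded_pf}, $p_f\rbr{x}\ge m\defeq 2\exp\rbr{-\kappa C_\Hcal - C_0}$, so $t=q/p_f\le C_{\Pcal_w}/m=:R$ everywhere on $\Omega$.

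With the pointwise bound in hand, integrating against $p_f$ gives $KL\rbr{q||p_f}\ge \frac{1}{2R}\int_\Omega \frac{\rbr{q-p_f}^2}{p_f}dx$, i.e. a lower bound by a scaled $\chi^2$-divergence. Next I use the upper bound $p_f\rbr{x}\le M\defeq 2\exp\rbr{\kappa C_\Hcal + C_0}$ from Lemma~\ref{lemma:bounded_pf} to replace $1/p_f$ by $1/M$, yielding $KL\rbr{q||p_f}\ge \frac{1}{2RM}\nbr{p_f-q}_2^2$. Rearranging and substituting $R=C_{\Pcal_w}/m$ together with $M/m=\exp\rbr{2\kappa C_\Hcal + 2C_0}$ gives $\nbr{p_f-q}_2^2\le 2Mm^{-1}C_{\Pcal_w}KL\rbr{q||p_f}=2\exp\rbr{2\kappa C_\Hcal + 2C_0}C_{\Pcal_w}KL\rbr{q||p_f}$, which lies within the stated constant $4\exp\rbr{2\kappa C_\Hcal + 2C_0}C_{\Pcal_w}$.

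The crux is the quadratic lower bound on $\phi$ with the correct dependence on $R$: it is essential that the likelihood ratio is bounded, because a \emph{linear}-in-$KL$ control of the squared $L_2$ distance is strictly stronger than what Pinsker's inequality yields, since Pinsker only controls $\nbr{p_f-q}_1^2$ and would lead to a $\sqrt{KL}$ dependence. It is precisely the two-sided boundedness of $p_f$ from Lemma~\ref{lemma:bounded_pf} (the lower bound feeding the ratio bound $R$, the upper bound feeding the $\chi^2$-to-$L_2$ step) together with the cap on $q$ from Assumption~\ref{asmp:q_param} that make the reverse direction hold; absent a positive lower bound on $p_f$, the ratio could blow up and the linear bound would fail.
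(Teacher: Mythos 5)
Your proof is correct and follows essentially the same route as the paper's: both exploit the uniform bound on the ratio $q/p_{f}$ (from Assumption~\ref{asmp:q_param} together with the lower bound in Lemma~\ref{lemma:bounded_pf}) to show that $KL\rbr{q||p_f}$ dominates a constant multiple of $\chi^2\rbr{q,p_f}$, and then use the upper bound on $p_f$ to pass from $\chi^2$ to the squared $L_2$ distance. The only cosmetic difference is that you obtain the quadratic lower bound on the KL integrand pointwise via Taylor's theorem with integral remainder, whereas the paper applies Jensen's inequality to the convex function $t\log t - C_\Phi t^2$; both yield the claimed constant (indeed with some slack).
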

\begin{proof}
By~\lemref{lemma:bounded_pf}, we have $t = \frac{q}{p_{f}}\le 2\exp\rbr{\kappa C_{\Hcal} + C_0} C_{\Pcal_w}<\infty$. Denote $\Phi\rbr{t} = t\log t$, we have $\Phi''\rbr{t} = \frac{1}{t}\ge C_\Phi$ with $C_\Phi\defeq  \frac{1}{2\exp\rbr{\kappa C_{\Hcal} + C_0} C_{\Pcal_w}}$, and thus, $\Phi\rbr{t} - C_\Phi t^2$ is convex. Therefore, applying Jensen's inequality, we have 
\begin{eqnarray*}
&&\Phi\rbr{\EE_{p_{f}}\sbr{\frac{q}{p_{f}}}} - C_\Phi\rbr{\EE_{p_{f}}\sbr{\frac{q}{p_{f}}}}^2 \le \EE_{p_{f}}\sbr{\Phi\rbr{\frac{q}{p_{f}}} - C_\Phi \frac{q^2}{p^2_{f}}}\\
&\Rightarrow&C_\Phi\rbr{\int \frac{q^2\rbr{x}}{p_{f}\rbr{x}}dx - \rbr{\EE_{p_{f}}\sbr{\frac{q}{p_{f}}}}^2}\le  \EE_{p_{f}}\sbr{\Phi\rbr{\frac{q}{p_{f}}} } - \Phi\rbr{\EE_{p_{f}}\sbr{\frac{q}{p_{f}}}}\\
&\Rightarrow& C_\Phi\underbrace{\rbr{\int \frac{q^2\rbr{x}}{p_{f}\rbr{x}}dx - 1}}
_{\chi^2\rbr{q, p_{f}}} \le KL\rbr{q||p_{f}} - \Phi\rbr{1} = KL\rbr{q||p_{f}}.
\end{eqnarray*}

On the other hand, 
\begin{eqnarray*}
\chi^2\rbr{q, p_{f}} = \int \frac{\rbr{q\rbr{x} - p_{f}\rbr{x}}^2}{p_{f}\rbr{x}}dx\ge 2\exp\rbr{-\kappa C_{\Hcal} - C_0}\nbr{q - p_{f}}_2^2,
\end{eqnarray*}
which leads to the conclusion.

\end{proof}

We proceed the other part of the error, \ie, between $\ftil$ and $\htil$. 
\begin{lemma}\label{lemma:intermediate_II}
Under~\asmpref{asmp:kernel_property} and~\asmpref{asmp:exp_decay}, we have as $n\to\infty$ and $\eta\to 0$,
\begin{eqnarray*}
\nbr{\ftil - \htil}^2_{p^*} &=& o_{p^*}\rbr{n^{-1}\eta^{-\frac{1}{r}} + \eta} + C{\epsilon_{approx}},\\
\eta\nbr{\ftil - \htil}^2_{\Hcal} &=& o_{p^*}\rbr{\rbr{n^{-1}\eta^{-\frac{1}{r}} + \eta} + \epsilon_{approx}} + o_{p^*}{\rbr{n^{-1}\eta^{-\frac{1}{r}} + \eta}} + C\epsilon_{approx}.
\end{eqnarray*}
Therefore, 
\end{lemma}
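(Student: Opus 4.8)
The plan is to leverage that the surrogate $\Ltil$ is an \emph{exactly} quadratic, strongly concave functional of $h$, so that the optimality of $\htil$ gives the clean identity
\begin{equation*}
\Ltil\rbr{\htil} - \Ltil\rbr{\ftil} = \frac{1}{2}\nbr{\ftil - \htil}_{p^*}^2 + \frac{\eta}{2}\nbr{\ftil - \htil}_\Hcal^2 ,
\end{equation*}
which holds exactly because the Hessian of $\Ltil$ is the fixed operator $\inner{\cdot}{\cdot}_{p^*} + \eta\inner{\cdot}{\cdot}_\Hcal$. Both terms on the right are nonnegative, so it suffices to upper bound the left-hand side $\Ltil\rbr{\htil} - \Ltil\rbr{\ftil}$; distributing that bound over the two summands then yields the two displayed estimates at once. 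Everything therefore reduces to controlling how far $\ftil$, the maximizer of the doubly-dual-embedding primal, can fall short of $\htil$ in the $\Ltil$-value.

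First I would reduce the dual-embedding objective to the penalized log-likelihood $L$ of~\eqref{eq:penalized_mle}. Maximizing $\nu$ out of $\tilde\ell$ recovers $KL\rbr{q||p_0}$ exactly by~\eqref{eq:kl_dual} (the $\nu$-family being assumed rich enough), and minimizing the result over $q\in\Pcal_w$ leaves an effective primal $\bar L\rbr{f}$. Using the cumulant identity $KL\rbr{q||p_{f}} = KL\rbr{q||p_0} - \EE_q\sbr{f} + A\rbr{f}$ together with~\thmref{thm:fenchel_log_partition} gives the exact decomposition
\begin{equation*}
\bar L\rbr{f} = L\rbr{f} + \inf_{q\in\Pcal_w} KL\rbr{q||p_{f}}, \qquad 0 \le \bar L\rbr{f} - L\rbr{f} \le \epsilon_{approx},
\end{equation*}
uniformly in $f$, where the upper bound is just the definition of $\epsilon_{approx}$. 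Since $\ftil$ maximizes $\bar L$ whereas $\htil$ is merely feasible, $\bar L\rbr{\ftil} \ge \bar L\rbr{\htil}$, and combining with the sandwich yields $L\rbr{\ftil} \ge L\rbr{\htil} - \epsilon_{approx}$. I would then Taylor-expand the log-partition $A$ to second order about $f^*$, writing $L = \Ltil + R$ up to an $f$-independent constant, where the remainder $R$ gathers the third-cumulant term and the gap between $\mathrm{Var}_{p^*}\rbr{f-f^*}$ and $\nbr{f-f^*}_{p^*}^2$ (the Hessian of $A$ at $f^*$ being the covariance, not the second-moment operator). Substituting and rearranging gives
\begin{equation*}
\Ltil\rbr{\htil} - \Ltil\rbr{\ftil} \le R\rbr{\ftil} - R\rbr{\htil} + \epsilon_{approx},
\end{equation*}
so through the first identity the lemma follows once $R\rbr{\ftil} - R\rbr{\htil} = o_{p^*}\rbr{n^{-1}\eta^{-\frac{1}{r}} + \eta}$.

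The hard part will be exactly this remainder control, which is the heart of the Gu--Qiu argument. The cubic remainder can be bounded by $C\,\nbr{f-f^*}_\infty\,\nbr{f-f^*}_{p^*}^2$ using boundedness of the exponential-family densities (\lemref{lemma:bounded_pf}) and the reproducing-kernel estimate $\nbr{f-f^*}_\infty \le \kappa^{1/2}\nbr{f-f^*}_\Hcal$ from~\asmpref{asmp:kernel_property}. The difficulty is that this is self-referential: the bound on $R$ involves the very deviations $\nbr{\ftil - f^*}$ and $\nbr{\htil - f^*}$ we are trying to estimate. I would break the circularity by first localizing both iterates in a shrinking $\Hcal$-ball around $f^*$ as $\eta\to 0$ --- using coercivity of the penalized objective, the boundedness in~\asmpref{asmp:q_param}, and the control of $\nbr{\htil - f^*}_{p^*}$ and $\nbr{\htil - f^*}_\Hcal$ already furnished by~\lemref{lemma:intermediate_I} --- and then running a continuity/bootstrapping argument so that $\nbr{f-f^*}_\infty \to 0$ makes the cubic term genuinely negligible against the quadratic terms. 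Finally, \lemref{lemma:kl_to_l2} is the bridge used whenever the KL-form approximation error must be re-expressed in the $L_2$ (hence $p^*$) geometry in which $\Ltil$ is strongly concave; assembling these pieces and splitting the nonnegative left-hand side of the first identity produces both claimed bounds.
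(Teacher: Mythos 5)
Your route is genuinely different from the paper's. The paper never compares objective values: it subtracts the two first-order stationarity conditions (for $\ftil$ in the saddle-point problem and for $\htil$ in $\Ltil$), turns the resulting cross term $\EE_{p_{\ftil}}\sbr{\ftil-\htil}-\EE_{p_{\htil}}\sbr{\ftil-\htil}$ into a norm via the mean value theorem, isolates the dual-parametrization error as $\EE_{p_{\ftil}}\sbr{\ftil-\htil}-\EE_{\qtil}\sbr{\ftil-\htil}$, and bounds that by H\"older plus \lemref{lemma:kl_to_l2}, giving $\epsilon_{approx}^{1/2}\nbr{\ftil-\htil}_{p^*}$ which is then absorbed. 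Your excess-value argument buys two things: the exact quadratic identity $\Ltil\rbr{\htil}-\Ltil\rbr{\ftil}=\frac{1}{2}\nbr{\ftil-\htil}_{p^*}^2+\frac{\eta}{2}\nbr{\ftil-\htil}_\Hcal^2$ replaces the mean-value-theorem step, and the decomposition $\bar L\rbr{f}=L\rbr{f}+\inf_{q\in\Pcal_w}KL\rbr{q||p_f}$ makes $\epsilon_{approx}$ enter additively in one line, which is cleaner than the paper's H\"older--$\chi^2$ detour. Note, however, that $\bar L\rbr{\ftil}\ge\bar L\rbr{\htil}$ requires $\ftil$ to be a genuine maximizer of the max--min value, which is a stronger characterization than the stationarity condition $\nabla_f\ell\rbr{\ftil,\tilde\nu,\qtil}=0$ that the paper actually uses; with the nonconvex family $\Pcal_w$ the min--max and max--min orderings need not agree.

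There is one concrete gap in your remainder control. Since $\nabla^2 A\rbr{f^*}$ is the \emph{covariance} operator while $\Ltil$ penalizes the \emph{second moment} $\nbr{h-f^*}_{p^*}^2$, your $R\rbr{f}$ contains the term $\frac{1}{2}\rbr{\EE_{p^*}\sbr{f-f^*}}^2$, which is exactly second order in the deviation, not higher order, so $R\rbr{\ftil}-R\rbr{\htil}=o\rbr{n^{-1}\eta^{-1/r}+\eta}$ cannot hold as stated. Worse, writing $a=\EE_{p^*}\sbr{\ftil-\htil}$ and $b=\EE_{p^*}\sbr{\htil-f^*}$, this contribution is $\frac{1}{2}a^2+ab$ with $a^2\le\nbr{\ftil-\htil}_{p^*}^2$, so the $\frac{1}{2}a^2$ piece can cancel the entire $\frac{1}{2}\nbr{\ftil-\htil}_{p^*}^2$ on the left, leaving you with control only of $\mathrm{Var}_{p^*}\rbr{\ftil-\htil}+\eta\nbr{\ftil-\htil}_\Hcal^2$. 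You need either the standard identifiability convention that the model space is orthogonal to constants (so $\EE_{p^*}\sbr{g}=0$ and the second moment \emph{is} the variance), or to state the bound in the variance seminorm; and even then Young's inequality on the $ab$ term delivers $\Ocal\rbr{\cdot}$ rather than the lemma's $o\rbr{\cdot}$ (harmless for \thmref{thm:consistency}, but weaker than the display). Separately, your localization step $\nbr{f-f^*}_\infty\le\kappa^{1/2}\nbr{f-f^*}_\Hcal\to0$ does not follow from \lemref{lemma:intermediate_I}, which only gives $\eta\nbr{\htil-f^*}_\Hcal^2\to0$, so $\nbr{\htil-f^*}_\Hcal$ may diverge; the sup-norm consistency must come from an interpolation inequality between $\nbr{\cdot}_{p^*}$ and $\nbr{\cdot}_\Hcal$ as in Gu--Qiu, so your proposed bootstrap is carrying real weight there (as does the paper's own appeal to Gu--Qiu for $c\nbr{\cdot}_{p^*}\le\nbr{\cdot}_{p_{\htil+\theta\rbr{\ftil-\htil}}}$).
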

\begin{eqnarray*}
\nbr{\ftil - f^*}^2_{p^*} &=& \Ocal_{p^*}\rbr{n^{-1}\eta^{-\frac{1}{r}} + \eta} + C{\epsilon_{approx}},\\
\eta\nbr{\ftil - f^*}^2_{\Hcal} &=& \Ocal_{p^*}{\rbr{n^{-1}\eta^{-\frac{1}{r}} + \eta}} + o_{p^*}\rbr{\rbr{n^{-1}\eta^{-\frac{1}{r}} + \eta} + \epsilon_{approx} } + C\epsilon_{approx}.
\end{eqnarray*}
\begin{proof}
Since $\rbr{\ftil, \tilde\nu, \qtil}$ are the optimal solutions to the primal-dual reformulation of the penalized MLE~\eqref{eq:double_dual_mle}, we have the first-order optimality condition: $\nabla_f \ell(\ftil,\tilde\nu, \qtil)=0$, which implies 
$\widehat\EE\sbr{k\rbr{x, \cdot}} - \EE_{\qtil}\sbr{k\rbr{x, \cdot} } - \eta\ftil = 0$. Hence,  
\begin{equation}\label{eq:opt_ftil}
\widehat\EE\sbr{\langle k\rbr{x, \cdot}, \ftil- \htil\rangle } - \EE_{\qtil}\sbr{\langle k\rbr{x, \cdot}, \ftil - \htil\rangle } - \eta\langle \ftil, \ftil-\htil \rangle_\Hcal = 0.
\end{equation}
Similarly, by the optimality of $\htil$ w.r.t. $\Ltil\rbr{h}$, we have
\begin{equation}\label{eq:opt_htil}
\widehat\EE\sbr{\langle k\rbr{x, \cdot}, \ftil-\htil\rangle }-\EE_{p^*}\sbr{\langle k\rbr{x, \cdot}, \ftil - \htil\rangle } - \langle\ftil - \htil, \htil - f^*\rangle_{p^*} - \eta\langle\htil, \ftil - \htil\rangle_\Hcal = 0.
\end{equation}
Combining the~\eqref{eq:opt_ftil} and~\eqref{eq:opt_htil}, we further obtain
\begin{eqnarray}\label{eq:opt_cond}
&&\EE_{\qtil}\sbr{\ftil\rbr{x} - \htil\rbr{x}} - \EE_{p_{\htil}}\sbr{\ftil\rbr{x} - \htil\rbr{x}} + \eta\nbr{\ftil - \htil}_{\Hcal}^2 \\
&&= \langle\ftil - \htil, \htil - f^*\rangle_{p^*} + \EE_{p^*}\sbr{\ftil\rbr{x} -\htil\rbr{x}} - \EE_{p_{\htil}}\sbr{\ftil\rbr{x} - \htil\rbr{x}} \nonumber\\
&\Rightarrow&\EE_{p_{\ftil}}\sbr{\ftil\rbr{x} - \htil\rbr{x}} - \EE_{p_{\htil}}\sbr{\ftil\rbr{x} - \htil\rbr{x}} + \eta\nbr{\ftil - \htil}_{\Hcal}^2 \nonumber\\
&&= \langle\ftil - \htil, \htil - f^*\rangle_{p^*} + \underbrace{\EE_{p^*}\sbr{\ftil\rbr{x} -\htil\rbr{x}} - \EE_{p_{\htil}}\sbr{\ftil\rbr{x} - \htil\rbr{x}}}_{\epsilon_1}\nonumber\\
&& + \underbrace{\EE_{p_{\ftil}}\sbr{\ftil\rbr{x} - \htil\rbr{x}}  - \EE_{\qtil}\sbr{\ftil\rbr{x} - \htil\rbr{x}}}_{\epsilon_2}.\nonumber
\end{eqnarray}
For $\epsilon_1$, denote $F\rbr{\theta} = \EE_{p_{f^* + \theta\rbr{\htil-f^*}/\varsigma}}\sbr{\ftil\rbr{x} - \htil\rbr{x}} - \EE_{p^*}\sbr{\ftil\rbr{x} -\htil\rbr{x}}$ with $\varsigma = \nbr{f^* - \htil}_{p^*} = o_{p^*}\rbr{1}$, then, apply Taylor expansion to $F\rbr{\theta}$ will lead to
\begin{equation}
F\rbr{\theta} = \frac{\theta}{\varsigma}
\rbr{1 + o_p(1)}\langle\ftil - \htil, \htil - f^*\rangle_{p^*}
\end{equation}
where $o_{p^*}\rbr{1}$ w.r.t. $\theta\rightarrow 0$. Therefore, 
\begin{equation}\label{eq:epsilon_1}
\EE_{p_{\htil}}\sbr{\ftil\rbr{x} - \htil\rbr{x}} - \EE_{p^*}\sbr{\ftil\rbr{x} -\htil\rbr{x}} = F\rbr{\varsigma} = \rbr{1 + o_p(1)}\langle\ftil - \htil, \htil - f^*\rangle_{p^*},
\end{equation}
as $\eta\rightarrow 0$ and $n\eta^{\frac{1}{r}}\to\infty$.

For $\epsilon_2$, by H\"{o}lder inequality,
\begin{eqnarray*}
\epsilon_2 &=& \EE_{p_{\ftil}}\sbr{\ftil\rbr{x} - \htil\rbr{x}}  - \EE_{\qtil}\sbr{\ftil\rbr{x} - \htil\rbr{x}} = \int_\Omega \frac{p_{\ftil}(x) - \qtil(x)}{p^*\rbr{x}}\rbr{\ftil\rbr{x} - \htil\rbr{x}}p^*\rbr{x}dx\\
&\le&\nbr{\ftil - \htil}_{p^*}\nbr{\frac{p_{\ftil}(x) - \qtil(x)}{p^*\rbr{x}}}_{p^*}.
\end{eqnarray*}
By~\lemref{lemma:bounded_pf}, we have 
$p^*\rbr{x} = \exp\rbr{f^* - A\rbr{f^*}} = \frac{\exp\rbr{f^*\rbr{x} - \log p_0\rbr{x}}}{\int_\Omega \exp\rbr{f^*\rbr{x} - \log p_0\rbr{x}} p_0\rbr{x}}p_0\rbr{x} $ with $f^*\in\Hcal_k$ and $\nbr{f^*}_{\Hcal}\le C_{f^*}$ 
implying $2\exp\rbr{-\kappa C_{f^*} - C_0}\le p^*\rbr{x}\le 2\exp\rbr{\kappa C_{f^*} + C_0}$. Therefore, we have
\begin{equation}\label{eq:epsilon_2}
\epsilon_2\le 2\exp\rbr{\kappa C_{f^*} + C_0}\nbr{p_{\ftil} - \qtil}_{2}\nbr{\ftil - \htil}_{p^*}.
\end{equation}

Given $\forall f\in\Fcal$ fixed, we have 
\begin{eqnarray}
\min_{q\in\Pcal}\max_{\nu\in \Fcal} - \EE_q\sbr{f}  +\frac{1}{\lambda }\rbr{\EE_q\sbr{\nu} - \EE_{p_0}\sbr{\exp\rbr{\nu}}} + 1 = \min_{q\in\Pcal} - \EE_q\sbr{f}  + \frac{1}{\lambda} KL\rbr{q||p_0} = \min_{q\in\Pcal}\frac{1}{\lambda}KL\rbr{q||p_f}.
\end{eqnarray}
In ideal case where the $\Pcal$ and the domain of $\nu$ is flexible enough, we have $KL\rbr{q||p_f} = 0$. However, due to parametrization, we introduce extra approximate error, denoting as $\epsilon_{approx}\defeq \sup_{f\in\Hcal_k}\inf_{q\in\Pcal_w}KL\rbr{q||p_{f}}$. As we can see, as the parametrized family of $q$ and $\nu$ becomes more and more flexible, the $\epsilon_{approx}\rightarrow 0$. 

Therefore, by~\lemref{lemma:kl_to_l2}, we obtain 
$$
\epsilon_2\le 8{C_{\Pcal_w}}{\exp\rbr{3\kappa C_{\Hcal} + 3C_0}}\epsilon^{\frac{1}{2}}_{approx}\nbr{\ftil - \htil}_{p^*}.
$$

On the other hand, we define $D\rbr{\theta} = \EE_{p_{\htil + \theta\rbr{\ftil - \htil}}}\sbr{\ftil - \htil}$, notice that $D'\rbr{\theta} = \nbr{\ftil - \htil}^2_{p_{\htil + \theta\rbr{\ftil - \htil}}}$, by the mean value theorem, we can obtain that
\begin{eqnarray}
\EE_{p_{\ftil}}\sbr{\ftil\rbr{x} - \htil\rbr{x}} - \EE_{p_{\htil}}\sbr{\ftil\rbr{x} - \htil\rbr{x}}  = D\rbr{1} - D\rbr{0} = D'\rbr{\theta} = \nbr{\ftil - \htil}^2_{p_{\htil + \theta\rbr{\ftil - \htil}}}
\end{eqnarray}
with $\theta\in[0, 1]$.  \citet{GuQiu93} shows that when $\forall f\in \Hcal_k$ is uniformly bounded, then, $c\nbr{\cdot}_{p^*}\le \nbr{\cdot}_{p_{\htil + \theta\rbr{\ftil - \htil}}}$, $\theta\in[0, 1]$, which is the true under the~\asmpref{asmp:kernel_property}. 

Plugging \eqref{eq:epsilon_1} and~\eqref{eq:epsilon_2} into~\eqref{eq:opt_cond}, we achieve  
$$
c\nbr{\ftil - \htil}^2_{p^*} + \eta\nbr{\ftil - \htil}^2_{\Hcal}\le o_p\rbr{\langle\ftil - \htil, \htil - f^*\rangle_{p^*}} +  8{C_{\Pcal_w}}{\exp\rbr{3\kappa C_{\Hcal} + 3C_0}}\epsilon^{\frac{1}{2}}_{approx}\nbr{\ftil - \htil}_{p^*},
$$
which leads to the first part in the conclusion. Combining with the~\lemref{lemma:intermediate_I}, we obtain the second part of the conclusion.

\end{proof}

Now, we are ready for proving the main theorem about the statistical consistency.

\noindent{\bf Theorem~\ref{thm:consistency}}
\emph{
Assume the spectrum of kernel $k\rbr{\cdot, \cdot}$ decays sufficiently homogeneously in rate $l^{-r}$. With some other mild assumptions listed in~\appref{appendix:subsec:consistency}, we have as $\eta\rightarrow 0$ and $n\eta^{\frac{1}{r}}\to\infty$,
\begin{eqnarray*}
KL\rbr{p^*||p_{\ftil}} + KL\rbr{p_{\ftil}||p^*} = \Ocal_{p^*}\rbr{n^{-1}\eta^{-\frac{1}{r}} + \eta + \epsilon_{approx}},
\end{eqnarray*}
where $\epsilon_{approx}\defeq\sup_{f\in\Fcal}\inf_{q\in\Pcal_w}KL\rbr{q||p_{f}}$ denotes the approximate error  due to the parametrization of $\qtil$ and $\tilde\nu$.
Therefore, when setting $\eta = \Ocal\rbr{n^{-\frac{r}{1+r}}}$, $p_{\ftil}$ converge
s to $p^*$  in terms of Jensen-Shannon divergence at rate 
$
\Ocal_{p^*}\rbr{n^{-\frac{r}{1+r}} + \epsilon_{approx}}.
$
}
\begin{proof}
Recall the $\rbr{\ftil, \qtil}$ is the optimal solution to~\eqref{eq:primal_CD}, we have the first-order optimality condition as 
\begin{equation}\label{eq:opt_ftil_f*}
\widehat\EE\sbr{\langle k\rbr{x, \cdot}, \ftil- f^*\rangle } - \EE_{\qtil}\sbr{\langle k\rbr{x, \cdot}, \ftil - f^*\rangle } - \eta\langle \ftil, \ftil-f^* \rangle_\Hcal = 0,
\end{equation}
which leads to
\begin{eqnarray*}
\EE_{p_{\ftil}}\sbr{\langle k\rbr{x, \cdot}, \ftil - f^*\rangle } &=& \widehat\EE\sbr{\langle k\rbr{x, \cdot}, \ftil- f^*\rangle } - \eta\langle \ftil, \ftil-f^* \rangle_\Hcal\\
&&+ \underbrace{\EE_{p_{\ftil}}\sbr{\langle k\rbr{x, \cdot}, \ftil - f^*\rangle } - \EE_{\qtil}\sbr{\langle k\rbr{x, \cdot}, \ftil - f^*\rangle }}_{\epsilon_3}. 
\end{eqnarray*}
Then, we can rewrite the Jensen-Shannon divergence
\begin{eqnarray*}
KL\rbr{p^*||p_{\ftil}} + KL\rbr{p_{\ftil}||p^*} &=&\EE_{p_{\ftil}}\sbr{\langle k\rbr{x, \cdot}, \ftil - f^*\rangle }- \EE_{p^*}\sbr{\langle k\rbr{x, \cdot}, \ftil - f^*\rangle }\\
&=& \epsilon_3  + \eta\langle \ftil, f^* - \ftil \rangle_\Hcal + \widehat\EE\sbr{\langle k\rbr{x, \cdot}, \ftil- f^*\rangle } - \EE_{p^*}\sbr{\langle k\rbr{x, \cdot}, \ftil - f^*\rangle }
\end{eqnarray*}

Similar to the bound of $\epsilon_2$, we have
$$
\epsilon_3 \le 8{C_{\Pcal_w}}{\exp\rbr{3\kappa C_{\Hcal} + 3C_0}}\epsilon^{\frac{1}{2}}_{approx}\nbr{\ftil - f^*}_{p^*} = \Ocal\rbr{\epsilon^{\frac{1}{2}}_{approx}\sqrt{n^{-1}\eta^{-\frac{1}{r}} + \eta}} = \Ocal\rbr{\epsilon_{approx} + \rbr{n^{-1}\eta^{-\frac{1}{r}} + \eta}}.
$$
Moreover, with Cauchy-Schwarz inequality,
$$
\eta\langle \ftil, f^* - \ftil\rangle_\Hcal\le \eta \nbr{\ftil}_{\Hcal}\nbr{\ftil-f^*}_\Hcal,
$$
$$
\eta\nbr{\ftil}_\Hcal\le 2\eta\nbr{f^*}_\Hcal + 2\eta\nbr{\ftil - f^*}_\Hcal
$$
Applying the conclusion in~\lemref{lemma:intermediate_II} and the fact that $\|f^*\|_\Hcal\leq C_{f^*}$, 
we obtain that
$$
\eta\langle \ftil, f^*-\ftil \rangle_\Hcal = \Ocal(\eta).
$$
Finally, for the term
$$
\widehat\EE\sbr{\langle k\rbr{x, \cdot}, \ftil- f^*\rangle } - \EE_{p^*}\sbr{\langle k\rbr{x, \cdot}, \ftil - f^*\rangle },
$$
we rewrite $\ftil$ and $f^*$ in the form of $\psi$ as $\sum_{l=1}^\infty \rbr{\ftil_l - f^*_l}\alpha_l$. Then, apply Cauchy-Schwarz inequality, 
$$
\sum_{l=1}^\infty \abr{\rbr{\ftil_l - f^*_l}\alpha_l} \le\rbr{\sum_{l=1}^\infty a_l^2\rbr{\ftil_l - f^*_l}^2}^{\frac{1}{2}}\rbr{\sum_{l=1}^\infty \rbr{\frac{\alpha_l}{a_l}}^2}^{\frac{1}{2}}
$$
where $a_l^2 = 1 + \eta\zeta_l^{-1}$. Then, 
$$
\sum_{l=1}^\infty a_l^2\rbr{\ftil_l - f^*_l}^2 = \nbr{\ftil - f^*}^2_{p^*} + 
\eta\nbr{\ftil - f^*}^2_{\Hcal} = \Ocal_{p^*}{\rbr{n^{-1}\eta^{-\frac{1}{r}} + \eta + \epsilon_{approx}}} + o_{p^*}\rbr{\rbr{n^{-1}\eta^{-\frac{1}{r}} + \eta} + \epsilon_{approx}}.
$$
On the other hand, by~\lemref{lemma:intermediate_I}
$$
\EE\sbr{\sum_{l=1}^\infty \rbr{\frac{\alpha_l}{a_l}}^2} = \Ocal\rbr{n^{-1}\eta^{-\frac{1}{r}}}.
$$

Combining these bounds, we achieve the conclusion that
$$
KL\rbr{p^*||p_{\ftil}} + KL\rbr{p_{\ftil}||p^*} = \Ocal_{p^*}{\rbr{n^{-1}\eta^{-\frac{1}{r}} + \eta + \epsilon_{approx}}}.
$$
The second conclusion is straightforward by balancing $\eta$.

\end{proof}

\section{MLE with Random Feature Approximation}\label{appendix:random_feature}

The memory cost is the main bottleneck for applying the kernel methods to large-scale problems. The random feature~\citep{RahRec08,DaiXieHeLiaEtAl14,Bach15} can be utilized for scaling up kernel methods. In this section, we will propose the variant of the proposed algorithm with random feature approximation.

For arbitrary positive definite kernel, $k(x, x)$, there exists a measure $\PP$ on $\Xcal$, such that $k(x, x') = \int \phi_w(x)\phi_w(x')d\PP(w)$~\cite{Devinatz53,HeiBou04}, where $\phi_w(x):\Xcal\rightarrow \RR$ from $L_2(\Xcal, \PP)$. Therefore, we can approximate the function $f\in \Hcal$ with Monte-Carlo approximation $\hat f \in \widehat\Hcal^r=\{\sum_{i=1}^r\beta_i\phi_{\omega_i}(\cdot)|\|\beta\|_2\leq C\}$ where $\{w_i\}_{i=1}^r$ sampled from $\PP(\omega)$. The $\cbr{\phi_{\omega_i}(\cdot)}_{i=1}^r$ are called random features~\cite{RahRec09}. With such approximation, we will apply the stochastic gradient to learn $\cbr{\beta_i}_{i=1}^r$. For simplicity, we still consider the saddle-point reformulation of MLE for exponential families. However, the algorithm applies to general flows and conditional models too. 

Plug the approximation of $\fhat\rbr{\cdot} = \sum_{i=1}^r\beta_i\phi_{\omega_i}(\cdot) = \beta_f^\top \Phi\rbr{\cdot}$ and $\widehat\nu = \beta_\nu^\top \Phi\rbr{\cdot}$ into the optimization~\eqref{eq:double_dual_mle} and denote $\beta = \cbr{\beta_f, \beta_\nu}$, we have
\begin{equation}\label{eq:double_dual_random_mle}
\min_{w_g}\,\Lbar\rbr{w_g} := \max_{\beta_f,\beta_\nu}\,\,\underbrace{\tilde\ell\rbr{\beta_f, \beta_\nu, w_g} - \frac{\eta}{2}\nbr{\beta_f}^2}_{\bar\ell\rbr{\beta_f, \beta_\nu, w_g}}.
\end{equation}
Therefore, we have the random feature variant of~\algref{alg:sgd_f}
\begin{algorithm}[h]
\caption{Stochastic Gradients for $\beta_f^*$ and $\beta_\nu^*$}
  \begin{algorithmic}[1]\label{alg:sgd_beta}
    \FOR{$k=1,\ldots, K$}
      \STATE Sample ${\xi\sim p\rbr{\xi}}$, and generate $x = g\rbr{\xi}$.
      \STATE Sample ${x'\sim p_0\rbr{x}}$.
      \STATE Compute stochastic function gradient w.r.t. $\beta_f$ and $\beta_\nu$.
      \STATE Decay the stepsize $\tau_k$.
      \STATE Update $\beta_f^k$ and $\beta_\nu^k$ with the stochastic gradients. 
    \ENDFOR
    \STATE Output $\beta_f^K, \beta_\nu^K$.
  \end{algorithmic}
\end{algorithm}

With the obtained $\rbr{\beta_f^K, \beta_\nu^K}$, the~\algref{alg:sgd_double_dual} will keep almost the same, except in Step 2 call~\algref{alg:sgd_beta} instead. 

One can also adapt the random feature $\cbr{\omega_i}_{i=1}^r$ by stochastic gradient back-propagation~(BP) too in~\algref{alg:sgd_beta}. Then, the $\nu$ is equivalent to parametrized by a two-layer MLP neural networks. Similarly, we can deepen the neural networks for $\nu$, and the parameters can still be trained by BP. 

\section{Computational Cost Analysis}\label{appendix:comp_cost}

Following the notations in the paper, the computational cost for~\algref{alg:sgd_f} will be $\Ocal\rbr{K^2d}$. Then, the total cost for~\algref{alg:sgd_double_dual} will be $\Ocal\rbr{L\rbr{K^2d + BKd}}$ with $B$ as batchsize and $L$ as the number of iterations. If we stop the algorithm after scanning the dataset, i.e., $BL=N$, we have the cost as $\Ocal\rbr{NK^2d}$, which is more efficient comparing to score matching based estimator.

\section{More Experimental Results}\label{appendix:more_exp}

We provide more empirical experimental results here. We further illustrate the convergence of the algorithm on the $2$-dimensional \texttt{grid} and \texttt{two moons} in~\figref{fig:alg_conv_more}. 

\begin{figure*}[htb]
\centering
  \begin{tabular}{cccc}
    \includegraphics[width=0.24\textwidth]{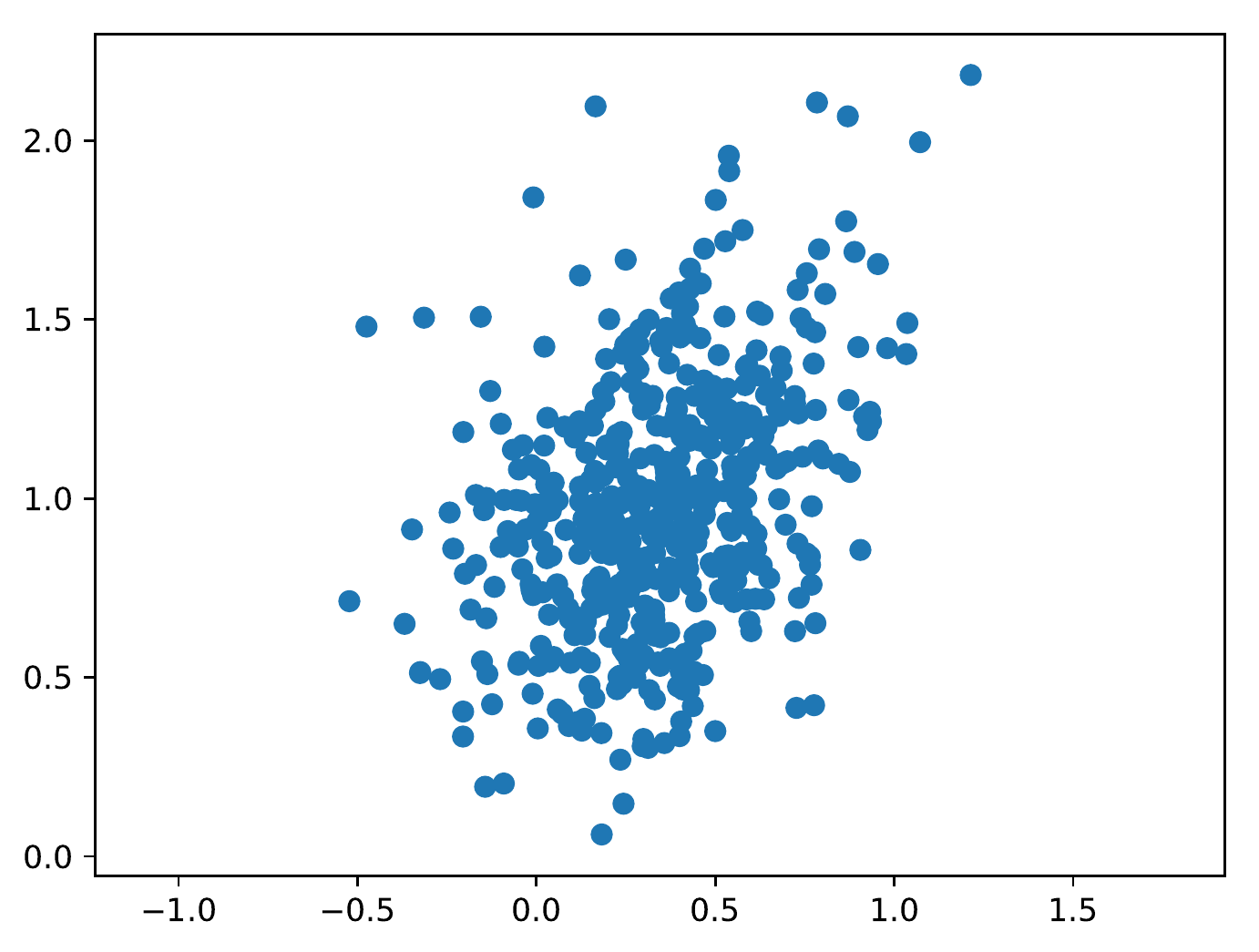}&\hspace{-5mm}
    \includegraphics[width=0.24\textwidth]{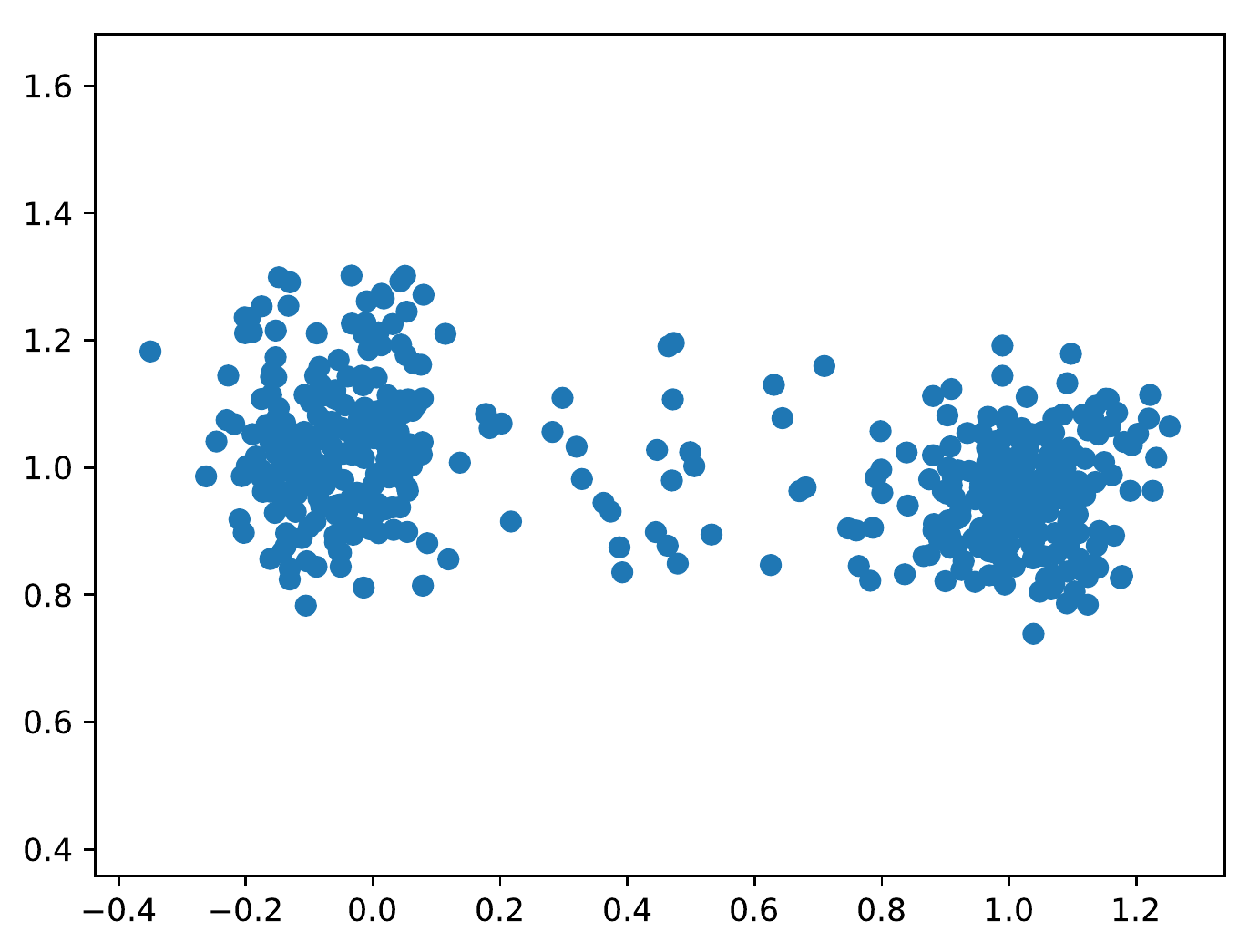}&\hspace{-5mm}
    \includegraphics[width=0.24\textwidth]{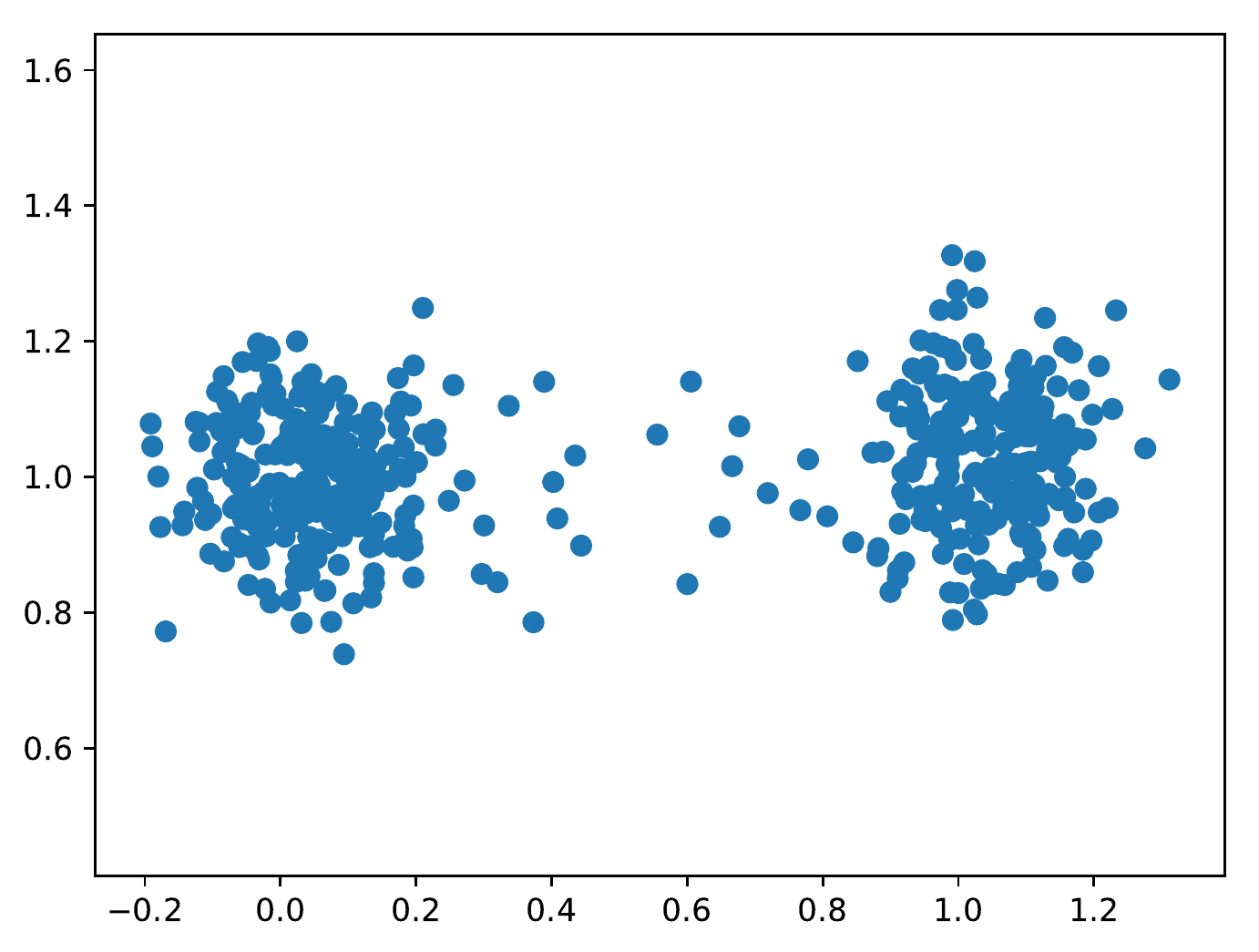}&\hspace{-5mm}
    \includegraphics[width=0.24\textwidth]{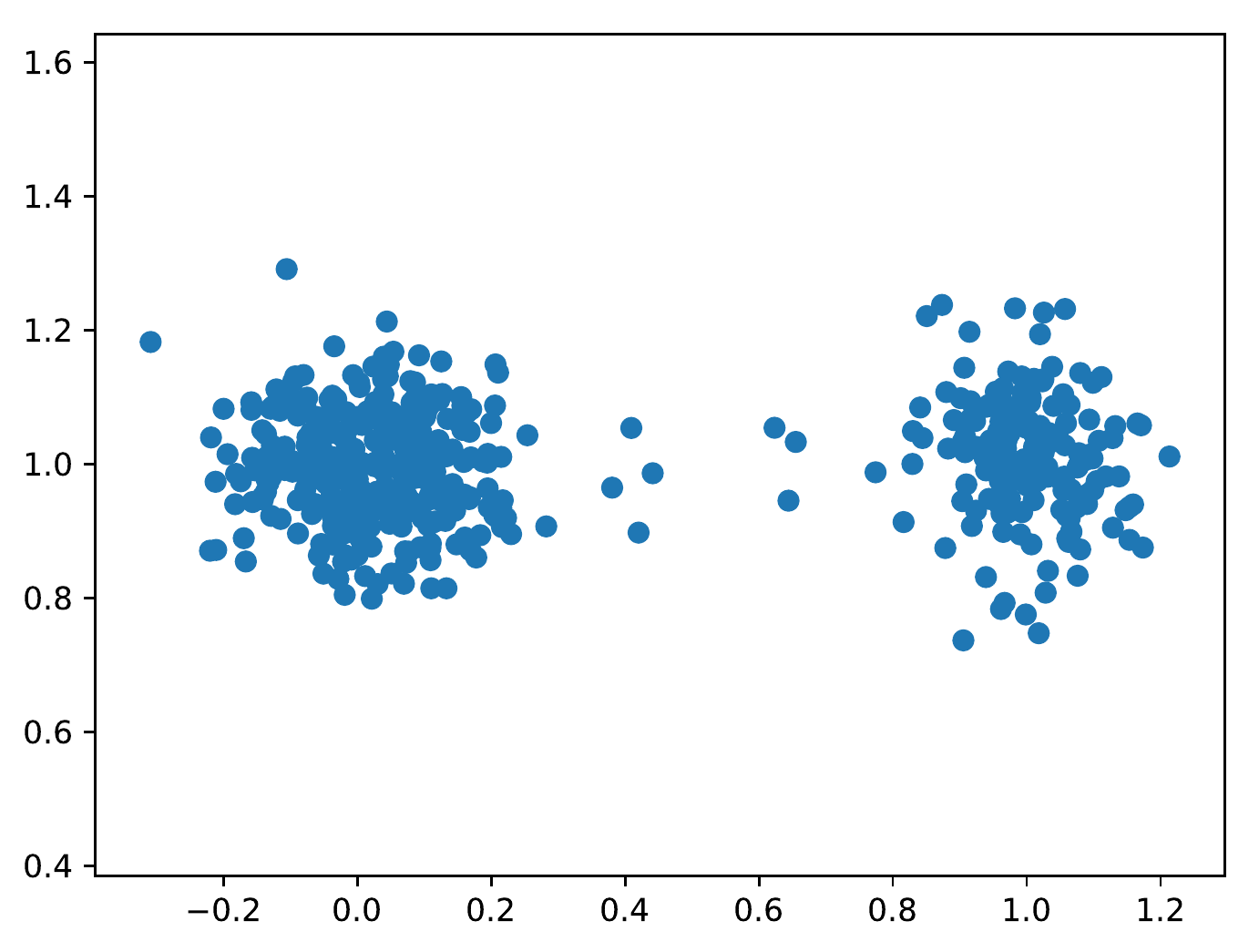}\\
    (a) initialization &(b) $500$-th iteration  &(c) $1000$-th iteration &(d) $2000$-th iteration\\
  \end{tabular}
  \begin{tabular}{cccc}
    \includegraphics[width=0.24\textwidth]{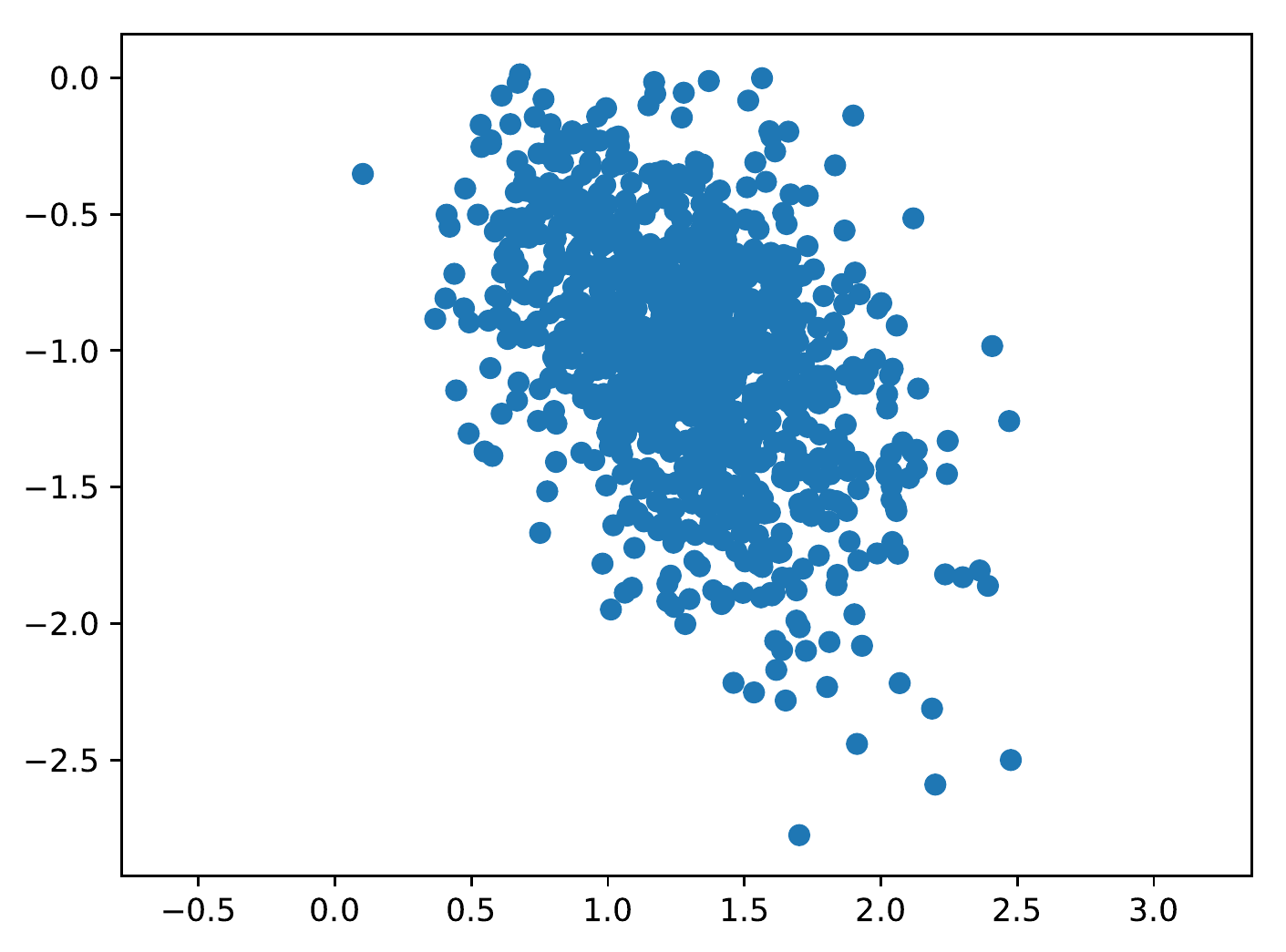}&\hspace{-5mm}
    \includegraphics[width=0.24\textwidth]{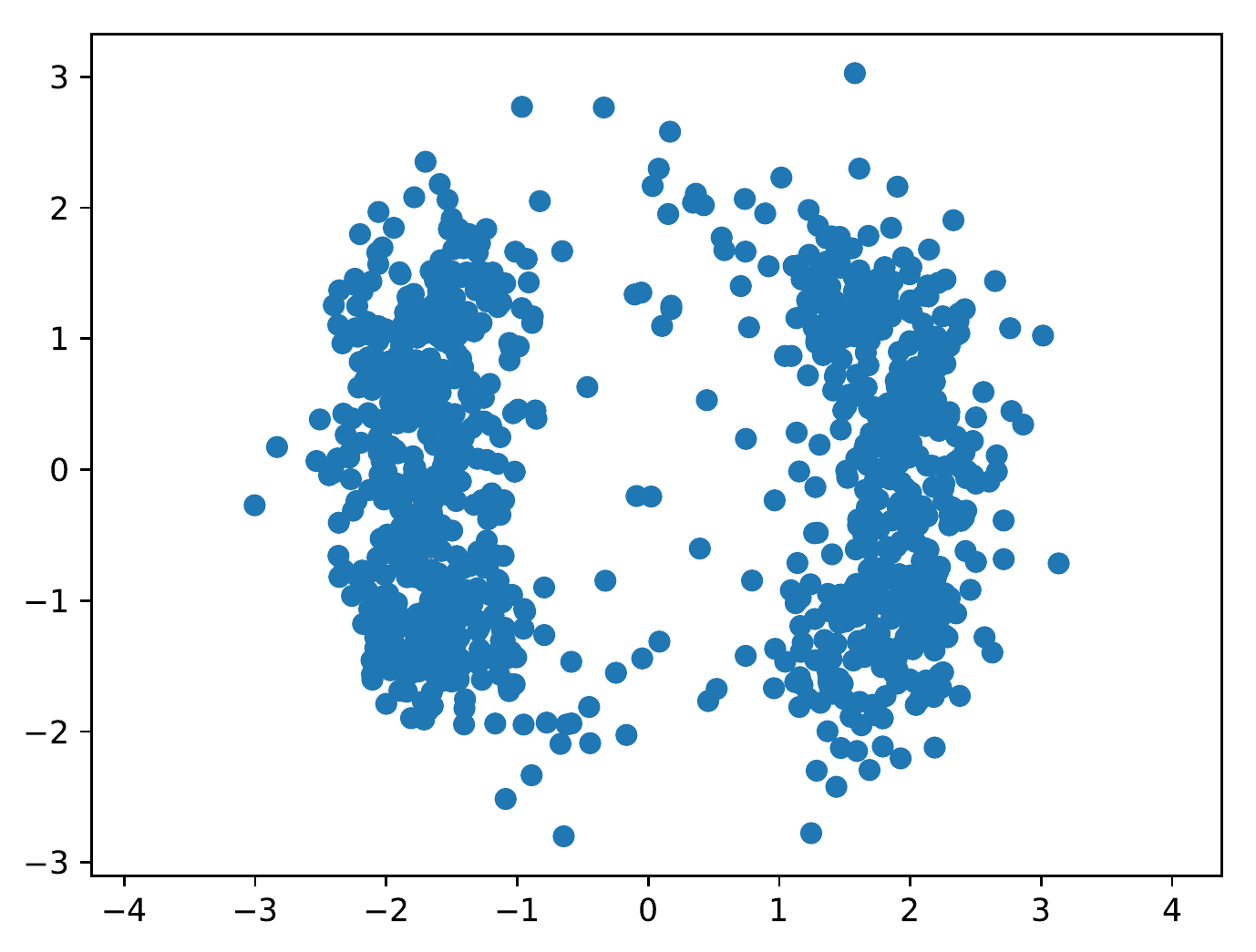}&\hspace{-5mm}
    \includegraphics[width=0.24\textwidth]{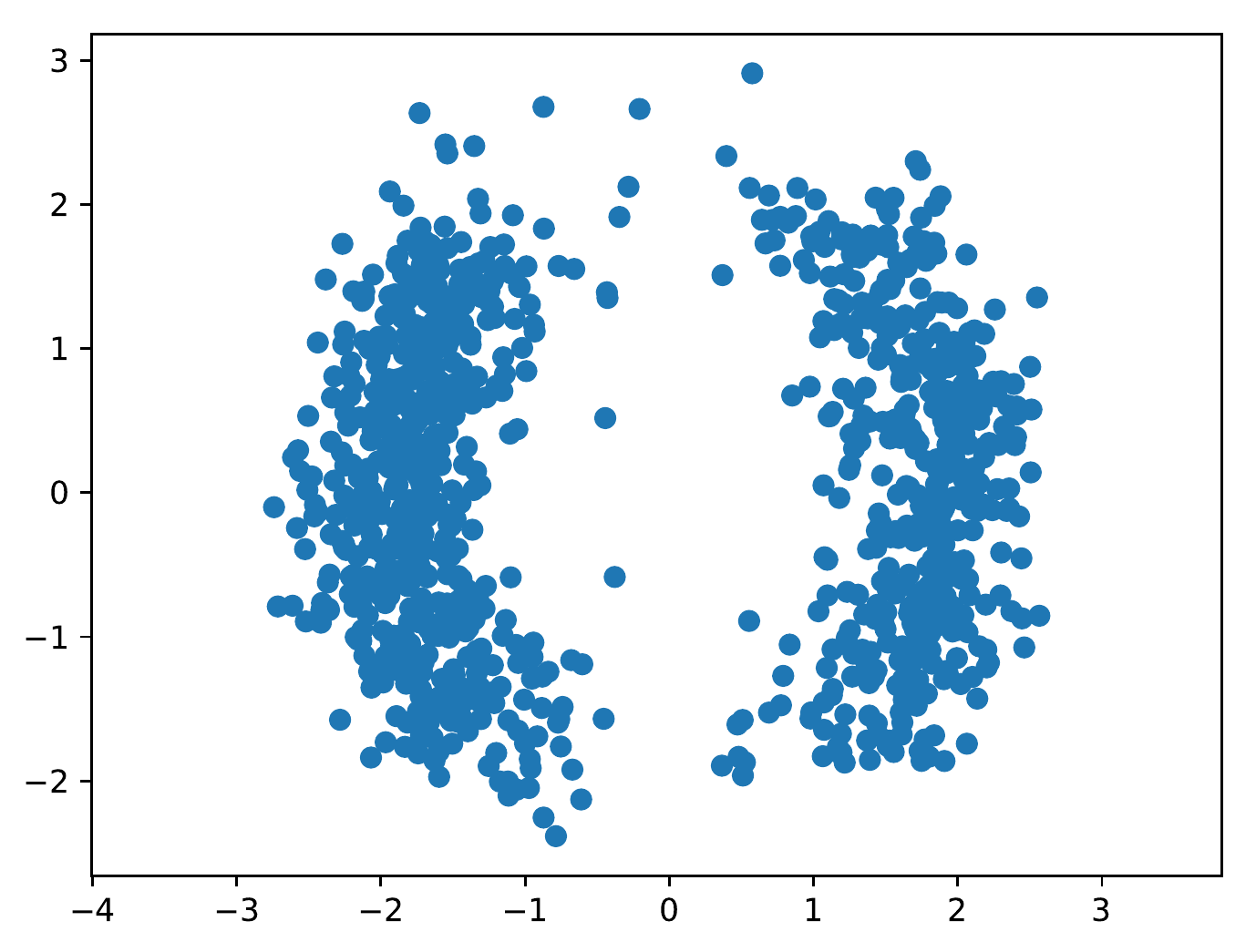}&\hspace{-5mm}
    \includegraphics[width=0.24\textwidth]{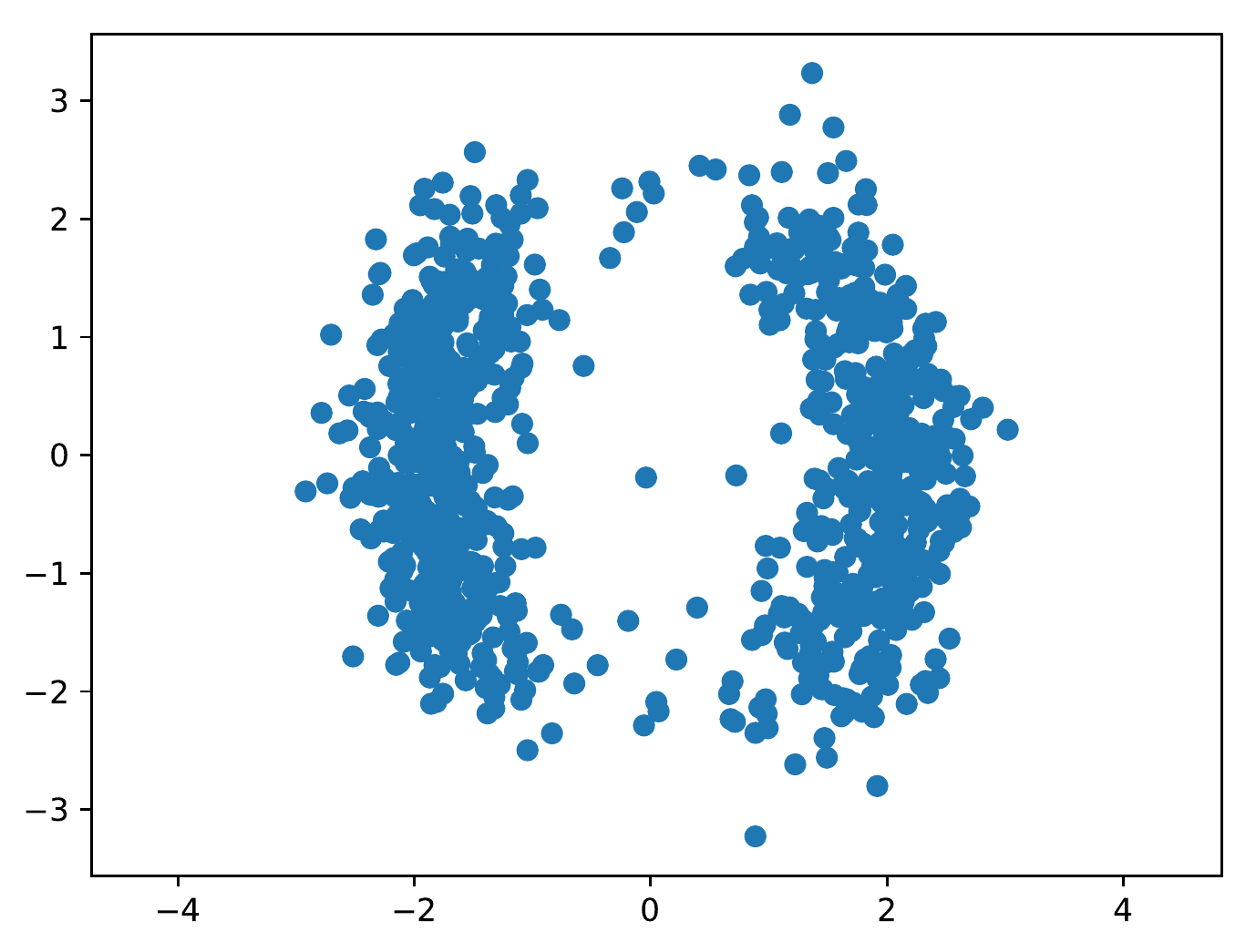}\\
    (a) initialization &(b) $500$-th iteration  &(c) $1000$-th iteration &(d) $2000$-th iteration\\
  \end{tabular}
  \caption{The DDE estimators on $2$-dimensional \texttt{grid} and \texttt{two moons} datasets in each iteration. The {\color{blue}blue} points are sampled from the learned dual distribution. The algorithm starts with random initialization. With the algorithm proceeds, the learned distribution converges to the ground-truth target distributions. }
  \label{fig:alg_conv_more}
\end{figure*}

\section{Implementation Details}
\label{appendix:impl_details}

In this section, we will provide more details about algorithm implementation. Our implementation is based on PyTorch, and is open sourced at \url{https://github.com/Hanjun-Dai/dde}. 

To optimize with the double min-max form, we adopt the following training schema. For every gradient update of the exponential family model $f$, 5 updates of sampler $g_{w_g}$ will be performed. And for each update of $g_{w_g}$, 3 updates of $\nu$ will be performed. Generally, the inner terms of the objective function will get more updates. 

For unconditional experiments on synthetic datasets, we use dimension 128 for both the hidden layers of MLP networks, as well as $\xi$. The number of layers for generator $g_{w_g}$ and $\nu$ are tuned in the range of $\{3, 4, 5\}$. For conditional experiments on real-world datasets, we use 3 layers for both $g_{w_g}$ and $\nu$, since the dataset is relatively small. To make the training stable, we also clip the gradients of all updates by the norm of 5. 

The hyperparameters, \eg, stepsize, kernel parameters, and weights of the penalty, are tuned by cross-validation.

\end{appendix}


\begin{thebibliography}{43}
\providecommand{\natexlab}[1]{#1}
\providecommand{\url}[1]{\texttt{#1}}
\expandafter\ifx\csname urlstyle\endcsname\relax
  \providecommand{\doi}[1]{doi: #1}\else
  \providecommand{\doi}{doi: \begingroup \urlstyle{rm}\Url}\fi

\bibitem[Altun and Smola(2006)]{AltSmo06}
Yasemin Altun and Alex Smola.
\newblock Unifying divergence minimization and statistical inference via convex
  duality.
\newblock In \emph{COLT}, pages 139--153, 2006.

\bibitem[Arbel and Gretton(2017)]{ArbGre17}
Michael Arbel and Arthur Gretton.
\newblock Kernel conditional exponential family.
\newblock In \emph{AISTATS}, pages 1337--1346, 2017.

\bibitem[Arjovsky et~al.(2017)Arjovsky, Chintala, and Bottou]{ArjChiBot17}
Martin Arjovsky, Soumith Chintala, and L{\'e}on Bottou.
\newblock Wasserstein GAN.
\newblock In \emph{ICML}, 2017.

\bibitem[Bach(2015)]{Bach15}
Francis~R. Bach.
\newblock On the equivalence between quadrature rules and random features.
\newblock \emph{Journal of Machine Learning Research}, 18:\penalty0 714--751,
  2017.

\bibitem[Barron and Sheu(1991)]{BarShe91}
Andrew~R Barron and Chyong-Hwa Sheu.
\newblock Approximation of density functions by sequences of exponential
  families.
\newblock \emph{The Annals of Statistics}, pages 1347--1369, 1991.

\bibitem[Bi{\'n}kowski et~al.(2018)Bi{\'n}kowski, Sutherland, Arbel, and
  Gretton]{BinSutArbArt18}
Miko{\l}aj Bi{\'n}kowski, Dougal~J Sutherland, Michael Arbel, and Arthur
  Gretton.
\newblock Demystifying MMD GANs.
\newblock In \emph{ICLR}, 2018.

\bibitem[Brown(1986)]{Brown86}
Lawrence~D. Brown.
\newblock \emph{Fundamentals of Statistical Exponential Families}, volume~9 of
  \emph{Lecture notes-monograph series}.
\newblock Institute of Mathematical Statistics, Hayward, Calif, 1986.

\bibitem[Canu and Smola(2006)]{CanSmo06}
Stephane Canu and Alex Smola.
\newblock Kernel methods and the exponential family.
\newblock \emph{Neurocomputing}, 69\penalty0 (7-9):\penalty0 714--720, 2006.

\bibitem[Dai et~al.(2014)Dai, Xie, He, Liang, Raj, Balcan, and
  Song]{DaiXieHeLiaEtAl14}
Bo Dai, Bo Xie, Niao He, Yingyu Liang, Anant Raj,  Maria-Florina Balcan, and Le Song. 
\newblock Scalable kernel methods via doubly stochastic gradients.
\newblock In \emph{NeurIPS}, 2014.

\bibitem[Dai et~al.(2016)Dai, He, Pan, Boots, and Song]{DaiHePanBooetal16}
Bo~Dai, Niao He, Yunpeng Pan, Byron Boots, and Le~Song.
\newblock Learning from conditional distributions via dual embeddings.
\newblock In \emph{AISTATS}, pages 1458--1467, 2017.


\bibitem[Dai et~al.(2017)Dai, Shaw, Li, Xiao, He, Liu, Chen, and
  Song]{DaiShaLiXiaHeetal17}
Bo~Dai, Albert Shaw, Lihong Li, Lin Xiao, Niao He, Zhen Liu, Jianshu Chen, and
  Le~Song.
\newblock SBEED: Convergent reinforcement learning with nonlinear function
  approximation.
\newblock In \emph{ICML}, pages 1133--1142, 2018.


\bibitem[Devinatz(1953)]{Devinatz53}
A.~Devinatz.
\newblock Integral representation of pd functions.
\newblock \emph{Trans. AMS}, 74\penalty0 (1):\penalty0 56--77, 1953.

\bibitem[Dinh et~al.(2016)Dinh, Sohl-Dickstein, and Bengio]{DinSohBen16}
Laurent Dinh, Jascha Sohl-Dickstein, and Samy Bengio.
\newblock Density estimation using real NVP.
\newblock In \emph{ICLR}, 2017.

\bibitem[Dud\'{\i}k et~al.(2007)Dud\'{\i}k, Phillips, and
  Schapire]{DudPhiSch07}
Miroslav Dud\'{\i}k, Steven~J. Phillips, and Robert~E. Schapire.
\newblock Maximum entropy density estimation with generalized regularization
  and an application to species distribution modeling.
\newblock \emph{Journal of Machine Learning Research}, 8:\penalty0 1217--1260,
  2007.


\bibitem[Ekeland and Temam(1999)]{EkeTem99}
Ivar Ekeland and Roger Temam.
\newblock \emph{Convex analysis and variational problems}, volume~28.
\newblock SIAM 1999.

\bibitem[Fukumizu(2009)]{Fukumizu09}
Kenji ~Fukumizu.
\newblock \emph{Exponential manifold by reproducing kernel Hilbert spaces},
  page 291–306.
\newblock Cambridge University Press, 2009.

\bibitem[Ghadimi and Lan(2013)]{GhaLan13}
Saeed Ghadimi and Guanghui Lan.
\newblock Stochastic first-and zeroth-order methods for nonconvex stochastic
  programming.
\newblock \emph{SIAM Journal on Optimization}, 23\penalty0 (4):\penalty0
  2341--2368, 2013.

\bibitem[Goodfellow et~al.(2014)Goodfellow, Pouget-Abadie, Mirza, Xu,
  Warde-Farley, Ozair, Courville, and Bengio]{GooPouMirXuetal14}
Ian Goodfellow, Jean Pouget-Abadie, Mehdi Mirza, Bing Xu, David Warde-Farley,
  Sherjil Ozair, Aaron Courville, and Yoshua Bengio.
\newblock Generative adversarial nets.
\newblock In \emph{NeurIPS}, pages
  2672--2680, 2014.

\bibitem[Gu and Qiu(1993)]{GuQiu93}
Chong Gu and Chunfu Qiu.
\newblock Smoothing spline density estimation: Theory.
\newblock \emph{The Annals of Statistics}, pages 217--234, 1993.

\bibitem[Hein and Bousquet(2004)]{HeiBou04}
Matthias Hein and Olivier Bousquet.
\newblock Kernels, associated structures, and generalizations.
\newblock Technical Report 127, Max Planck Institute for Biological
  Cybernetics, 2004.

\bibitem[Hinton(2002)]{Hinton02}
Geoffrey~E. Hinton.
\newblock Training products of experts by minimizing contrastive divergence.
\newblock \emph{Neural Computation}, 14\penalty0 (8):\penalty0 1771--1800,
  2002.

\bibitem[Hiriart-Urruty and Lemar{\'e}chal(2012)]{HirLem12}
Jean-Baptiste Hiriart-Urruty and Claude Lemar{\'e}chal.
\newblock \emph{Fundamentals of convex analysis}.
\newblock Springer Science \& Business Media, 2012.

\bibitem[Hyv\"arinen(2005)]{Hyvarinen05}
Aapo Hyv\"arinen.
\newblock Estimation of non-normalized statistical models using score matching.
\newblock \emph{Journal of Machine Learning Research}, 6:\penalty0 695--709,
  2005.

\bibitem[Kingma and Welling(2013)]{KinWel13}
Diederik~P Kingma and Max Welling.
\newblock Auto-encoding variational Bayes.
\newblock In \emph{ICLR}, 2014.

\bibitem[Kingma et~al.(2016)Kingma, Salimans, Jozefowicz, Chen, Sutskever, and
  Welling]{KinSalJozCheetal16}
Diederik~P Kingma, Tim Salimans, Rafal Jozefowicz, Xi~Chen, Ilya Sutskever, and
  Max Welling.
\newblock Improved variational inference with inverse autoregressive flow.
\newblock In \emph{NeurIPS}, pages
  4743--4751, 2016.

\bibitem[Kivinen et~al.(2004)Kivinen, Smola, and Williamson]{KivSmoWil04}
Jyrki Kivinen, Alex Smola, and Robert C. Williamson.
\newblock Online learning with kernels.
\newblock \emph{{IEEE} Transactions on Signal Processing}, 52\penalty0 (8), Aug
  2004.

\bibitem[K{\"o}nig(1986)]{Konig86}
Hermann ~K{\"o}nig.
\newblock \emph{Eigenvalue Distribution of Compact Operators}.
\newblock Birkh{\"a}user, Basel, 1986.

\bibitem[Li et~al.(2017)Li, Chang, Cheng, Yang, and Poczos]{LiChaYuYanetal17}
Chun-Liang Li, Wei-Cheng Chang, Yu~Cheng, Yiming Yang, and Barnabas Poczos.
\newblock MMD GAN: Towards deeper understanding of moment matching network.
\newblock In \emph{NeurIPS}, pages 2203--2213, 2017.

\bibitem[Micchelli and Pontil(2005)]{MicPon05}
Charles Micchelli and Massimiliano Pontil.
\newblock On learning vector-valued functions.
\newblock \emph{Neural Computation}, 17\penalty0 (1):\penalty0 177--204, 2005.

\bibitem[Neal(2001)]{Neal05}
Radford Neal.
\newblock Annealed Importance Sampling.
\newblock \emph{Statistics and computing}, 11\penalty0 (2):\penalty0 125--139, 2001.

\bibitem[Nemirovski et~al.(2009)Nemirovski, Juditsky, Lan, and
  Shapiro]{NemJudLanSha09}
Arkadi Nemirovski, Anatoli Juditsky, Guanghui Lan, and Alexander Shapiro.
\newblock Robust stochastic approximation approach to stochastic programming.
\newblock \emph{SIAM J. on Optimization}, 19\penalty0 (4):\penalty0 1574--1609,
  January 2009.
\newblock ISSN 1052-6234.

\bibitem[Nguyen et~al.(2008)Nguyen, Wainwright, and Jordan]{NguWaiJor08}
XuanLong Nguyen, Martin Wainwright, and Michael I. Jordan.
\newblock Estimating divergence functionals and the likelihood ratio by
  penalized convex risk minimization.
\newblock In \emph{NeurIPS}, pages 1089--1096, 2008.

\bibitem[Nowozin et~al.(2016)Nowozin, Cseke, and Tomioka]{NowCseTom16}
Sebastian Nowozin, Botond Cseke, and Ryota Tomioka.
\newblock f-GAN: Training generative neural samplers using variational
  divergence minimization.
\newblock In \emph{NeurIPS}, 2016.

\bibitem[Pistone and Rogantin(1999)]{PisRog99}
Giovanni Pistone and Maria~Piera Rogantin.
\newblock The exponential statistical manifold: mean parameters, orthogonality
  and space transformations.
\newblock \emph{Bernoulli}, 5\penalty0 (4):\penalty0 721--760, 1999.

\bibitem[Rahimi and Recht(2008)]{RahRec08}
Ali Rahimi and Ben Recht.
\newblock Random features for large-scale kernel machines.
\newblock In  \emph{NeurIPS}, 2008.

\bibitem[Rahimi and Recht(2009)]{RahRec09}
Ali Rahimi and Ben Recht.
\newblock Weighted sums of random kitchen sinks: Replacing minimization with
  randomization in learning.
\newblock In \emph{NeurIPS}, 2009.

\bibitem[Rezende et~al.(2014)Rezende, Mohamed, and Wierstra]{RezMohWie14}
Danilo~J Rezende, Shakir Mohamed, and Daan Wierstra.
\newblock Stochastic backpropagation and approximate inference in deep
  generative models.
\newblock In \emph{ICML}, pages 1278--1286, 2014.

\bibitem[Rezende and Mohamed(2015)]{RezMoh15}
Danilo~Jimenez Rezende and Shakir Mohamed.
\newblock Variational inference with normalizing flows.
\newblock In \emph{ICML}, 2015.

\bibitem[Rockafellar(1970)]{Rockafellar70}
R.~T. Rockafellar.
\newblock \emph{Convex Analysis}, volume~28 of \emph{Princeton Mathematics
  Series}.
\newblock Princeton University Press, Princeton, NJ, 1970.

\bibitem[Sion(1958)]{Sion58}
Maurice Sion.
\newblock On general minimax theorems.
\newblock \emph{Pacific Journal of mathematics}, 8\penalty0 (1):\penalty0
  171--176, 1958.

\bibitem[Sriperumbudur et~al.(2017)Sriperumbudur, Fukumizu, Gretton,
  Hyv{\"a}rinen, and Kumar]{SriFukGreHyvetal17}
Bharath Sriperumbudur, Kenji Fukumizu, Arthur Gretton, Aapo Hyv{\"a}rinen, and
  Revant Kumar.
\newblock Density estimation in infinite dimensional exponential families.
\newblock \emph{The Journal of Machine Learning Research}, 18\penalty0
  (1):\penalty0 1830--1888, 2017.

\bibitem[Strathmann et~al.(2015)Strathmann, Sejdinovic, Livingstone, Szabo, and
  Gretton]{StrSejLivSzaetal15}
Heiko Strathmann, Dino Sejdinovic, Samuel Livingstone, Zoltan Szabo, and Arthur
  Gretton.
\newblock Gradient-free hamiltonian monte carlo with efficient kernel
  exponential families.
\newblock In \emph{NeurIPS}, pages
  955--963, 2015.

\bibitem[Sugiyama et~al.(2010)Sugiyama, Takeuchi, Suzuki, Kanamori, Hachiya,
  and Okanohara]{SugTakSuzKanetal10}
Masashi Sugiyama, Ichiro Takeuchi, Taiji Suzuki, Takafumi Kanamori, Hirotaka
  Hachiya, and Daisuke Okanohara.
\newblock Conditional density estimation via least-squares density ratio
  estimation.
\newblock In \emph{AISTATS}, pages 781--788, 2010.

\bibitem[Sutherland et~al.(2017)Sutherland, Strathmann, Arbel, and
  Gretton]{SutStrArbArt17}
Dougal~J Sutherland, Heiko Strathmann, Michael Arbel, and Arthur Gretton.
\newblock Efficient and principled score estimation with nystr\" om
  kernel exponential families.
\newblock In \emph{AISTATS}, 2018.

\bibitem[Vembu et~al.(2009)Vembu, G{\"a}rtner, and Boley]{VemGarBol09}
Shankar Vembu, Thomas G{\"a}rtner, and Mario Boley.
\newblock Probabilistic structured predictors.
\newblock In \emph{UAI}, pages 557--564, 2009.

\end{thebibliography}
\end{document}